\newtheorem{theorem}{Theorem}
\newtheorem{proposition}{Proposition}
\newcommand\blfootnote[1]{%
  \begingroup
  \renewcommand\thefootnote{}\footnote{#1}%
  \addtocounter{footnote}{-1}%
  \endgroup
}
\definecolor{iccvblue}{rgb}{0.21,0.49,0.74}
\title{Metric Convolutions: A Unifying Theory to Adaptive Image Convolutions
\vspace{-0.5em}
}
\author{
Thomas Dag\`es\textsuperscript{1,2,3*}
\quad\quad
Michael Lindenbaum\textsuperscript{1} 
\quad\quad
 Alfred M. Bruckstein\textsuperscript{1}\\
{\normalsize
\textsuperscript{1}Technion -- Israel Institute of Technology \quad
\textsuperscript{2}Technical University of Munich \quad
\textsuperscript{3}Munich Center for Machine Learning
}
\vspace{-0.5em}
}
\begin{document}

\setlength{\abovedisplayskip}{3pt}
\setlength{\belowdisplayskip}{3pt}
% For Arxiv:
{
\definecolor{somegray}{gray}{0.5}
\newcommand{\darkgrayed}[1]{\textcolor{somegray}{#1}}
\begin{textblock}{17}(-0.5, 1)  %
\centering
\darkgrayed{To appear in Proceedings of the \emph{IEEE/CVF International Conference on Computer Vision (ICCV)},\\ Honolulu, HI, USA, 2025 \copyright~2025 IEEE.}
\end{textblock}
}

\maketitle

\begin{abstract}
\blfootnote{* Research carried out at the Technion -- Israel Institute of Technology. \\ 
\phantom{azer} \Letter \hspace{0.2em} {\tt thomas.dages@cs.technion.ac.il} \\
\phantom{azer} \faGithubSquare \hspace{0.4em}{\url{https://github.com/Tommoo/MetricConvolutions}}
} 

\vspace{-2em}

Standard convolutions are prevalent in image processing and deep learning, but their fixed kernels limits adaptability. Several deformation strategies of the reference kernel grid have been proposed. Yet, they lack a unified theoretical framework. By returning to a metric perspective for images, now seen as two-dimensional manifolds equipped with notions of local and geodesic distances, either symmetric (Riemannian) or not (Finsler), we provide a unifying principle: the kernel positions are samples of unit balls of implicit metrics. With this new perspective, we also propose \textit{metric convolutions}, a novel approach that samples unit balls from explicit signal-dependent metrics, providing interpretable operators with geometric regularisation. This framework, compatible with gradient-based optimisation, can directly replace existing convolutions applied to either input images or deep features of neural networks. Metric convolutions typically require fewer parameters and provide better generalisation. Our approach shows competitive performance in standard denoising and classification tasks. 
\vspace{-1em}
\end{abstract}

\section{Introduction}

In the realm of computer vision and deep learning, traditional convolutions have established themselves as indispensable image processing tools \cite{forsyth2002computer,goodfellow2016deep}, forming the backbone of various neural network architectures, and in particular of the effective convolutional neural networks (CNNs) \cite{krizhevsky2012alexnet,simonyan2014vgg,szegedy2015going,he2016resnet,liu2022convnet}. 
Traditional convolutions involve applying fixed-size isotropic $k\times k$ filters to images, a strategy known for its weight-sharing property and parameter efficiency. However, their inherent rigidity becomes evident when dealing with deformable objects, complex spatial transformations, or multi-scale phenomena, limiting their adaptability and, consequently, their effectiveness.

To address this limitation, a diverse research community has explored alternative convolutions. We focus on methods modifying $k\times k$ kernel samples, distinct from those increasing kernel width \cite{ding2022scaling}, concatenating \cite{szegedy2015going,chollet2017xception} or linearly combining \cite{yang2019condconv,chen2020dynamic,li2021revisiting,li2022omni} responses from different 
\begin{figure}[t!]
    \centering
    \includegraphics[width=0.92\columnwidth]{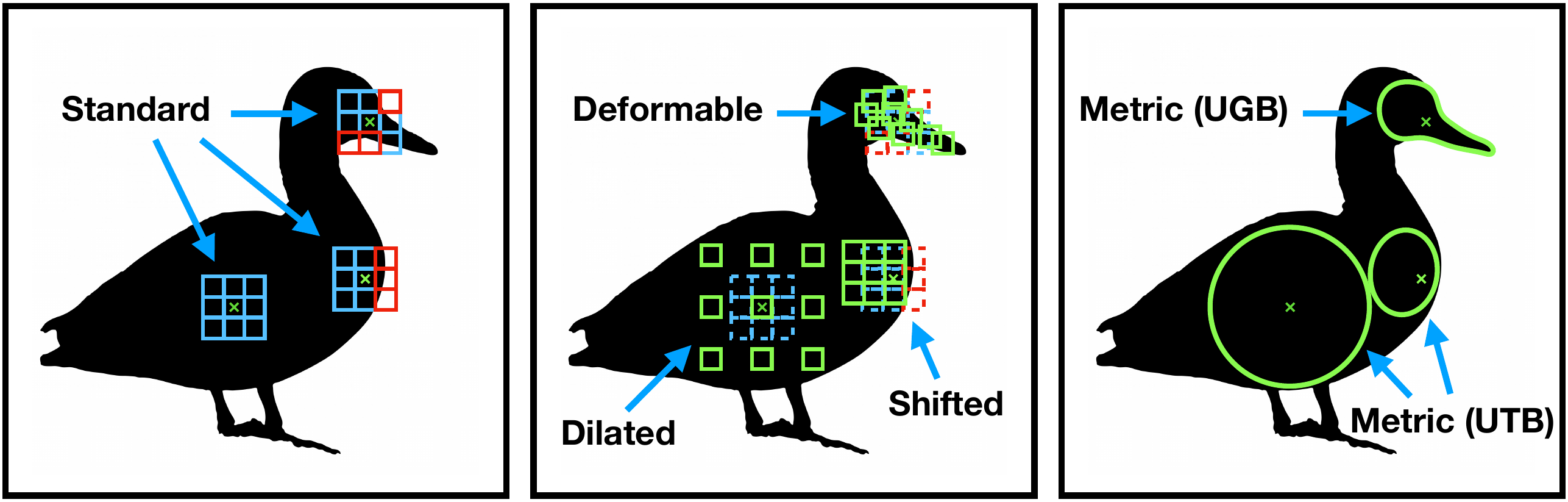}
    % \vspace{-0.5em}
    \caption{
    Unlike rigid standard convolutions, adaptive kernels avoid sampling undesired locations, e.g.\ background near object boundaries. Our metric convolutions sample unit balls, tangent (UTB) or geodesic (UGB), of explicit metrics,
    providing geometric interpretability and inductive bias, 
    while other methods only rely on metrics implicitly.
    }
    \label{fig: duck motivation adaptive kernels}
    \vspace{-1em}
\end{figure}
kernels. Dilated convolutions \cite{chen2015semantic,yu2015multiscale,chen2017deeplab,yu2017dilated} extend the scale of the convolution grid, improving the effective receptive field \cite{luo2016understanding}, but rely on a predetermined uniform scale, lacking adaptability to data variations. Spatial transformer networks (STN) \cite{jaderberg2015spatial,lin2017inverse} employ data-dependent parametrized transformations, such as affine transforms, for feature sampling. Yet, STNs are constrained to predefined transformation families and apply transformations uniformly across the entire feature map. Active convolutions \cite{jeon2017active} introduce learnable offsets for anisotropic sampling grids but share these offsets across all pixel locations and without data adaptability at inference. Deformable convolutions \cite{dai2017deformable,zhu2019moredeformable} address these issues by learning signal-dependent offsets at each pixel location. In \cite{yu2022entire}, undesirable offset interactions are recognized, leading to entire deformable convolutions which introduce a single input and location-dependent offset for the entire sampling grid. Another approach, combining dilation and deformable convolution \cite{li2020anisotropic}, scales the grid to be deformed non-uniformly and signal-dependently. This involves applying multiple deformable convolutions at fixed scales followed by pooling to select the optimal dilation. Unlike other methods discussed here, this approach requires aggregating across multiple deformed convolutions, offering adaptability only within a fixed set of candidate scales.
While all these methods have demonstrated improved performance in handling deformations and spatial variations, the theoretical framework underpinning these deformations remains elusive, hindering a comprehensive understanding of their capabilities and limitations.

In response, this paper takes a novel perspective by introducing a unifying framework rooted in metric theory. Metric theory treats images as manifolds endowed with a metric, allowing for the computation of distances and neighbourhoods that can deviate from those induced by the typically used Euclidean norm in the image plane. This perspective enables us to reinterpret existing modern convolutions as weighted filtering of samples from the unit ball of a latent metric, effectively linking convolution approaches to implicit metrics. Such an idea echoes yet differs from existing methods working directly on graphs and surfaces \cite{boscaini2016learning}. Metric convolutions extend this concept by explicitly incorporating signal and location-dependent parameterised metrics, offering an interpretable, versatile, and adaptable approach to deformable convolutions (see \cref{fig: duck motivation adaptive kernels}).

The main contributions of this work are threefold:

\begin{itemize}
    \item A unifying metric theory that provides geometric interpretability to both existing convolutions and the CNNs employing them.
    \item Introducing metric convolutions, a novel convolution that can be anisotropic and asymmetric, based on Finsler geometry, that deforms convolution kernels based on explicit, adaptable, and interpretable metrics on the image manifold, promising robustness and versatility.
    \item The explicit interpretable geometric bias of metric convolutions enables
    their direct application to full-resolution images outside of neural networks, rather than solely deep inside a CNN. They are also compatible with neural network architectures. 
\end{itemize}

In the following sections, we provide preliminaries on metric geometry (\cref{sec: preliminaries metric geometry}), present our unifying metric theory to convolutions (\cref{sec: unifying metric theory to convs}), and explore metric convolutions (\cref{sec: metric convs}), their theoretical foundations, practical implementation, and empirical evaluations (\cref{sec: experiments}).

\section{Preliminaries on Metric Geometry}
\label{sec: preliminaries metric geometry}

\subsection{The concept of distance}

In this work, we reinterpret images as parametrised surfaces on the unit plane $\Omega$ and explore them from a metric perspective, as was popular prior to the rise of deep learning \cite{perona1990scale,rudin1992nonlinear,lindenbaum1994gabor,caselles1995geodesic,cohen1997global,sochen1998general,sochen2001diffusions,buades2005review}.
Metric geometry focuses on curved spaces, or manifolds, denoted as $X$, each with well-defined tangent planes $T_xX$ at every point $x\in X$. Manifolds are equipped with metrics, positive functions on the tangent bundle $X\times T_xX\to \mathbb{R}_+$ for local distance calculations (see \cref{fig: Finsler unit tangent balls}). This enables computing curve lengths, geodesic curves, and geodesic distances between points. Different metrics lead to varying geodesic curves and distances. Henceforth, images are two-dimensional manifolds, i.e.\ surfaces, 
such as greyscale intensity height maps. 

\begin{figure}[t]
    \centering
    \includegraphics[width=0.9\columnwidth]{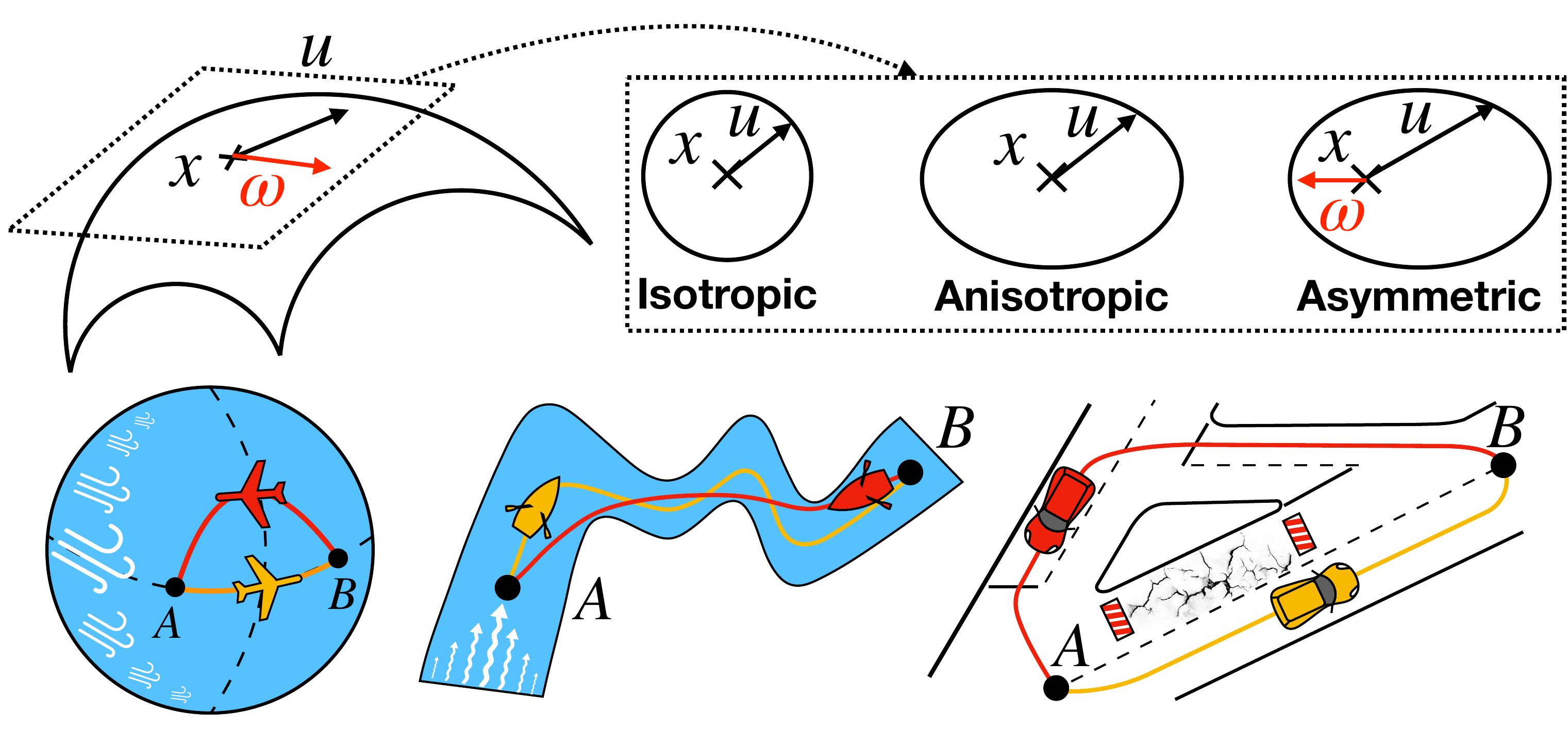}
    \vspace{-0.5em}
    \caption{Metrics provide distance measurements in the tangent space (top), and can be fully described via the shape of their convex unit tangent balls. Riemannian metrics can be isotropic or anisotropic, but they remain symmetric as the unit tangent balls are centred. Finsler metrics generalise them by allowing asymmetry, leading to asymmetric geodesics (bottom -- courtesy of \cite{weber2024finsler,dages2025finsler}).}
    \label{fig: Finsler unit tangent balls}
    \vspace{-1em}
\end{figure}

% \noindent
% \textbf{Riemann.}
\paragraph{Riemann.}
The common metric choice is the Riemannian metric $R$, defined by a $2\times 2$ positive definite matrix $M(x)$ at each point $x$, inducing a scalar product on the tangent plane. Formally, the Riemannian metric is $R_x(u) = \sqrt{u^\top M(x) u}$  at point $x$ and tangent vector $u\in T_xX$. 
It 
yields symmetric distances, making traversal direction irrelevant. 
The metric is isotropic if $M$ is everywhere a scaled identity matrix, and it is non-uniform yet isotropic if the scale
differs between points, but it is at times mistakenly called anisotropic \cite{perona1990scale}.

% \noindent
% \textbf{Finsler.}
\paragraph{Finsler.} 
While Riemannian metrics create symmetric tangent space neighbourhoods, Finsler metrics provide a generalisation allowing for asymmetric ones. For example, near object boundaries, asymmetric neighbourhoods can prevent neighbours from extending outside the object (see \cref{fig: duck motivation adaptive kernels}). Finsler metrics $F$ define Minkowski norms on tangent spaces, where $F_x(-u)$ and $ F_x(u)$ may differ for $u\in T_xX$ at point $x$. Consequently, local and global distances no longer adhere to symmetry. Formally, a Finsler metric $F$ satisfies $F_x(u)=0$ if and only if $u=0$, it obeys the triangular inequality, and is positive-homogeneous, i.e., $F_x(\lambda u) = \lambda F_x(u)$ for any $\lambda>0$ (see \cref{sec: metric axioms}).

% \noindent
% \textbf{Randers.}
\paragraph{Randers.} 
General Finsler metrics are not parametrisable, leading us to consider a  common \cite{weber2024finsler,dages2025finsler} subset: Randers metrics. They combine a Riemannian metric, parameterised by $M$, with a linear drift component, parameterised by $\omega\in T_xX$. Formally, a Randers metric with parameters $(M,\omega)$ is defined as $F_x(u) = \sqrt{u^\top M(x) u} + \omega(x)^\top u$. For metric positivity, we require $\lVert \omega(x)\rVert_{M^{-1}(x)} < 1$ (details in \cref{prop: randers metric positivity}). Notably, Riemannian metrics result from $\omega \equiv 0$. Other choices for parametric Finsler metrics are possible, like the $(\alpha,\beta)$ ones, inducing Kropina or Matsumoto metrics \cite{javaloyes2011definition}.

\hfill\break
Henceforth, the metric, most generally Finsler, is written $F$.

\subsection{Unit Balls}

In metric geometry, a key concept is the unit ball at point $x$, a collection of objects within unit distance of $x$ according to the metric $F$.
Depending on the context, the unit ball may refer to different objects. 

% \noindent
% \textbf{Unit Tangent Ball.}
\paragraph{Unit Tangent Ball.}
Sometimes, unit balls focus on the tangent plane. The unit tangent ball $B_1^t(x)$ at point $x$ is the set $B_1^t(x) = \{u\in T_xX;\; F_x(u) \le 1\}$. For any metric satisfying the triangular inequality, the unit tangent ball (UTB) and its projection onto the image plane are convex sets. In particular, the UTB of a Randers metric is an ellipse, off-centred if $\omega(x)\neq 0$, with equation $u^\top M_x u = (1 - \omega(x)^\top u)^2$. A higher norm of $\omega(x)$ results in less symmetry. In image manifolds, we often project tangent planes onto the image plane and associate the UTB on the tangent space with its projection. If $x$ represents a pixel coordinate, we informally say that $B_1^t(x)$ is the set of pixel coordinates $B_1^t(x) = \{y\in \Omega;\; F_{X(x)}(y-x)\le 1\}$.

% \noindent
% \textbf{Unit Geodesic Ball.}
\paragraph{Unit Geodesic Ball.}
The unit geodesic ball $B_1^g(x)$ contains all points within a unit geodesic distance from $x$: $B_1^g(x) = \{y\in X;\; \mathrm{dist}_F(x,y) \le 1\}$, where $\mathrm{dist}_F(x,y)$ is the geodesic distance determined by the metric $F$. It is the minimum length of a smooth curve from $x$ to $y$ (see \cref{sec: finsler geodesic distances}). Computing unit geodesic balls (UGB) is challenging as they lack closed-form expressions, unlike UTBs, requiring instead to integrate the metric along geodesics. They can take diverse shapes and may not be convex, especially after projection onto the image plane. As for the UTB, we associate the UGB with its projection on the image plane: $B_1^g(x) = \{y\in \Omega;\; \mathrm{dist}_F(X(x), X(y)) \le 1\}$.

\section{A Unifying Metric Theory to Convolutions}
\label{sec: unifying metric theory to convs}

For simplicity, we assume images are continuous and single-channel, with our theory extendable to multi-channel data. Our domain is $\Omega= [0,1]^2$, and images are two-dimensional signals $f:\Omega\to \mathbb{R}$. 
Discrete images are viewed as samples of the continuous domain, with interpolation (e.g.\ bilinear) for querying non pixel centre position. Note that presenting image convolutions and their anisotropic variants in the continuum is unusual in the neural network community. 

While we focus on images, our theory generalises to higher
manifold or feature dimensions (see \cref{sec: generalising unifying metric theory to arbitrary manifolds}).

\subsection{Pre-existing Convolutions}
\label{sec: pre-existing convs}

\subsubsection{Traditional Fixed Support Convolutions}

The convolution $f*g$ of a signal $f:\Omega\to\mathbb{R}$ with a kernel $g:\Omega\to\mathbb{R}$ is traditionally\footnote{The convolution notion we use is sometimes called correlation.} defined as
\begin{equation}
    \label{eq: conv}
    (f * g)(x) = \int_\Omega f(x + y) g(y) dy.
\end{equation}
Padding allows querying outside $\Omega$. 
Often, the kernel function $g$ is assumed to be localised on a small compact support
$\Delta$,
usually containing $0$.
The convolution then becomes
\begin{equation}
    \label{eq: conv local}
    (f * g) (x) = \int_\Delta f(x+y) g(y) dy.
\end{equation}

Henceforth, we focus on local convolutions. Traditional convolutions assume a fixed kernel support $\Delta$. The weights, i.e., the values of $g$, are either predefined or learnt.
In the discrete world, $\Delta$ is usually localised around and includes the entry $0$. It is universally discretised into a small $k\times k$ odd square grid, often $3\times 3$, a prevalent choice in CNNs. We use the word \textit{support} and $\Delta$-based notations for both continuous and sampled discrete convolution supports.

\subsubsection{From a Fixed to a Changeable Support}

We present three major variations of traditional convolutions based on the shape of the support $\Delta$. While other approaches like \cite{li2020anisotropic} may offer different implementation and optimisation strategies, their kernel support formulation typically fits into one of these categories. They originate from prior work in the discrete world. Starting from a $k\times k$ grid forming a reference support $\Delta^{\textit{ref}}$, each grid cell is shifted to create a new set of $k^2$ points, forming the modified support $\Delta$. While not traditional convolutions, they maintain the weight sharing principle, with kernel weights $g(y)$ independent of the convolution position $x$, even if the positions of the support samples $y$ may vary.

\begin{figure*}[t]
    \centering
    % \begin{center}
    \centerline{\includegraphics[width=0.95\textwidth]{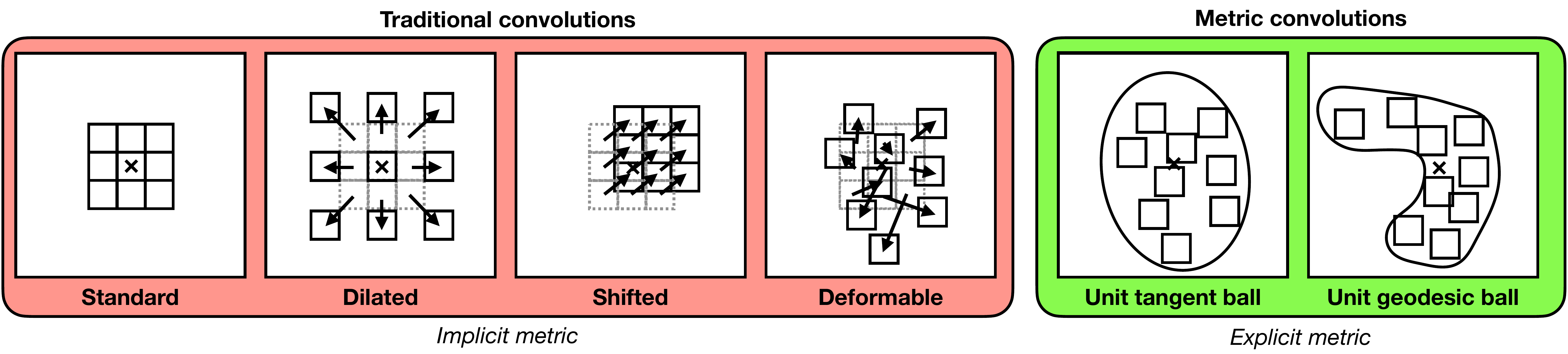}}
   \caption{
   Traditional convolutions, standard or with kernel support deformations, act as weighted averages over unit ball samples of implicit metrics. This theory unifies the various existing convolutions 
   and leads to metric convolutions, a novel convolution paradigm, that samples unit balls of explicit metrics, either learnt or provided by geometric heuristics.
   The unit balls refer to those in the tangent space or to the geodesic ones on the image manifold, both projected onto the image plane.
   }
   \label{fig: overview unit ball conv}
   % \end{center}
   % \vskip -0.2in
   \vspace{-1em}
\end{figure*}

% \noindent
% \textbf{Dilated Convolutions.}
\paragraph{Dilated Convolutions.} 
Dilated convolutions, introduced by \cite{chen2015semantic,chen2017deeplab} for neural networks, uniformly scale cell positions in the reference support $\Delta^{\textit{ref}}$ by a dilation factor $s$\footnote{The dilation $s$ represents spacing between kernel elements.}, e.g.\ with $k=3$ and $s = 4$, the dilated support includes pixels at indices $\{(i,j);\; i,j \in \{-4, 4, 0\}\}$. Non-standard non-uniform yet isotropic dilated convolution can use different scales $s_x$ per pixel for scale-sensitive filters \cite{li2020anisotropic}.

% \noindent
% \textbf{Shifted Convolutions.}
\paragraph{Shifted Convolutions.}
Also called entire deformable convolutions \cite{yu2022entire}, they shift the entire reference kernel support $\Delta^{\textit{ref}}$ by a single offset vector $\delta_x$ per pixel $x$.

% \noindent
% \textbf{Deformable Convolutions.}
\paragraph{Deformable Convolutions.}
Devised by \cite{dai2017deformable}, these convolutions use different offsets $\delta_x^y$ for each cell in the reference support $\Delta^{\textit{ref}}$, inducing complex deformations.

\hfill\break
Signal-dependent deformations, as in \cite{dai2017deformable}, make deformable convolutions, both entire and regular, and non-standard dilated convolution with adaptive scale \cite{li2020anisotropic} non-linear operations. However, computing deformations by traditional convolutions, as in shifted and deformable convolutions, leaves them shift-equivariant. All convolutions presented so far can be seen as specific deformable convolutions with special offset choices. Beyond support deformation, other modifications to the traditional convolution have emerged, such as the modulation idea.

% \noindent
% \textbf{Modulation.}
\paragraph{Modulation.}
Breaking the weight sharing assumption, kernel weights $g(y)$ can be modulated by 
$m_x(y)\in[0,1]$ depending on $x$ \cite{zhu2019moredeformable,yu2022entire,romero2021flexconv}, see \cref{sec: modulation and weight sharing assumption}.

\hfill\break
Generally, non-standard convolution strategies are largely empirical. This diverse field can seem like a multitude of performance-driven tricks lacking a unified theoretical framework. We propose to present and generalise these methods in a systematic way from a metric perspective.

\subsection{Unifying Convolutions Through a Metric Lens}

All previous convolutions can be expressed as follows.

\begin{theorem}
    \label{th: reformulation convs}
    If convolution refers to standard, dilated, shifted, or deformable convolutions, then the convolution of a signal $f$ with a kernel $g$ can be expressed as
    \begin{equation*}
        (f*g)(x) = \int_{\Delta_x} f(x+y)g(y)dm_x(y),
    \end{equation*}
    where $dm_x(y) = m_x(y)dy$ is a distribution with density $m_x(y)\in[0,1]$, and $\Delta_x$ is a local support depending on $x$ (and sometimes also on $f$), and it is given by a transformation of a reference local support $\Delta^{\textit{ref}}$. In the absence of modulation, then $dm_x(y) = dy$.
\end{theorem}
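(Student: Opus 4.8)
The plan is to prove the statement by a case analysis over the four convolution families, exhibiting in each case the transformation of $\Delta^{\textit{ref}}$ that produces $\Delta_x$ and the modulating density $m_x$, and then to treat modulation as an orthogonal modification that touches only $m_x$. The base case is immediate: for a standard convolution one takes $\Delta_x=\Delta^{\textit{ref}}$ for every $x$ and $m_x\equiv 1$, so $dm_x(y)=dy$ and \eqref{eq: conv local} is reproduced verbatim; this also fixes the convention that ``transformation of $\Delta^{\textit{ref}}$'' includes the identity.

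Next I would note that dilated and shifted convolutions are both instances of one construction: there is a position- (and, through $f$, signal-)dependent map $T_x\colon\Omega\to\Omega$ — the scaling $T_x(y)=s_x y$ for dilation, the translation $T_x(y)=y+\delta_x$ for shifting — so that the operator reads $\int_{\Delta^{\textit{ref}}}f(x+T_x(y))\,g(y)\,dy$. Setting $\Delta_x=T_x(\Delta^{\textit{ref}})$ and substituting $z=T_x(y)$ turns this into $\int_{\Delta_x}f(x+z)\,g(T_x^{-1}(z))\,\lvert\det DT_x^{-1}(z)\rvert\,dz$, which is of the asserted form once $g$ on the deformed support is read as $g\circ T_x^{-1}$ and the (constant, for these two families) Jacobian is folded into the normalisation of $m_x$. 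The deformable convolution is the same computation with the piecewise-smooth map $T_x(y)=y+\delta_x^{\,y}$ in place of the affine ones. Modulation is then handled by observing that replacing $g(y)$ with $m_x(y)g(y)$, $m_x(y)\in[0,1]$, simply multiplies the base density by $m_x$, so it is absorbed into $dm_x$ while preserving the bound; in its absence $dm_x=dy$, as claimed.

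I expect the genuine obstacle to be the deformable case: the map $T_x(y)=y+\delta_x^{\,y}$ need not be injective, since learnt offsets can fold the reference support onto itself, so the change of variables above is not literally valid and the clean ``density'' picture degenerates at folds. The fix I would use is to avoid changing variables altogether: keep $\Delta^{\textit{ref}}$ as the integration domain and define $\Delta_x$ as the image set carrying the pushforward measure $dm_x=(T_x)_{\#}(g\,dy)$, suitably renormalised; equivalently, in the discrete regime actually used in practice every support is a finite point set, $\int_{\Delta_x}$ is a finite sum, and the identity is a tautology with $m_x$ the literal modulation-weight vector. I would therefore state the continuum version under the mild hypothesis that $T_x$ is a diffeomorphism onto its image, and remark that the discrete statement needs no such hypothesis.
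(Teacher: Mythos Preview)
Your case analysis matches the paper's structure, but there is one convention difference that changes the mechanics. The paper places $g$ at the \emph{deformed} position in each continuous formulation --- dilated convolution is $\int_{\Delta^{\textit{ref}}} f(x+s_x y)\,g(s_x y)\,dy$, shifted is $\int_{\Delta^{\textit{ref}}} f(x+y+\delta_x)\,g(y+\delta_x)\,dy$, deformable is $\int_{\Delta^{\textit{ref}}} f(x+y+\delta_x^y)\,g(y+\delta_x^y)\,dy$ --- so after substituting $z=T_x(y)$ the integrand is already $f(x+z)g(z)$ and the paper simply writes $\int_{\Delta_x} f(x+y)g(y)\,dy$, silently dropping the Jacobian. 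You instead start from the practitioners' convention $\int_{\Delta^{\textit{ref}}} f(x+T_x(y))\,g(y)\,dy$ and therefore must carry both $g\circ T_x^{-1}$ and the Jacobian through the change of variables.

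The paper's choice keeps $g$ a single fixed function on $\Omega$, so the fixed-$g$ wording of the theorem holds literally, at the price of a non-standard continuous definition and an unacknowledged Jacobian. Your route tracks discrete deformable convolutions more faithfully and is honest about the analytic overhead, but reading the kernel on $\Delta_x$ as $g\circ T_x^{-1}$ makes it $x$-dependent, which sits awkwardly against the theorem's statement, and folding the Jacobian into $m_x$ can break the $[0,1]$ bound (dilation $s<1$ gives $s^{-2}>1$). Your diagnosis of the non-injectivity obstacle for $y\mapsto y+\delta_x^{\,y}$ is correct and is simply not addressed by the paper; your fallback to the discrete regime, where the identity is a tautological relabelling of a finite sum, is exactly the level at which the paper's own argument is actually rigorous.
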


See \cref{sec: proof reformulated convs} for a proof. Convolutions discussed so far perform weighted averaging over local neighbourhoods $\Delta_x$, eventually learnt. Modulation 
breaks uniformity
in $x$, and it can be seen from a discrete perspective as a non-uniform sampling probability distribution of $\Delta_x$.

Viewing images as metric manifolds, local neighbourhoods $\Delta_x$ can be reinterpreted as the points within a small distance of $x$  for some metric $F$, taken to be $1$ without loss of generality by scaling the metric. Thus, we see $\Delta_x$ as a unit ball. This suffices to reveal the existence of implicit metrics harnessed by convolutions, according to the following well-known theorem \cite{javaloyes2011definition}. 

\begin{theorem}
    \label{th: metric defined unit balls}
    A metric is uniquely determined by its unit tangent balls.
\end{theorem}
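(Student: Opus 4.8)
The plan is to show that from the family of unit tangent balls $\{B_1^t(x)\}_{x\in X}$ one can reconstruct the metric function $F_x(u)$ pointwise, using the positive homogeneity of $F$. First I would fix a point $x$ and a nonzero tangent vector $u\in T_xX$, and recall that by definition $B_1^t(x) = \{v\in T_xX;\; F_x(v)\le 1\}$. The key observation is that positive homogeneity, $F_x(\lambda v)=\lambda F_x(v)$ for $\lambda>0$, means $F_x$ is completely encoded by its $1$-sublevel set: indeed for any $u\neq 0$ the scalar $F_x(u)$ is the unique $t>0$ such that $u/t$ lies on the boundary of $B_1^t(x)$, equivalently
\[
    F_x(u) = \inf\{t>0;\; u/t \in B_1^t(x)\} = \frac{1}{\sup\{\lambda>0;\; \lambda u \in B_1^t(x)\}}.
\]
This is the Minkowski-gauge (Minkowski functional) of the set $B_1^t(x)$ restricted to the ray through $u$. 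Setting $F_x(0)=0$ handles the remaining case, so $F_x$ is determined on all of $T_xX$, and doing this at every $x$ recovers $F$ entirely.

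The steps, in order: (1) state the Minkowski-gauge formula above and verify it is well posed — the set $\{\lambda>0;\;\lambda u\in B_1^t(x)\}$ is a nonempty (since $F_x(u)<\infty$) bounded-above (since $F_x(u)>0$ for $u\neq 0$, using the nondegeneracy axiom $F_x(v)=0\iff v=0$) interval, so the supremum is a positive real number; (2) check the formula actually returns $F_x(u)$ by substituting $\lambda u$ into the homogeneity relation: $\lambda u\in B_1^t(x)\iff F_x(\lambda u)=\lambda F_x(u)\le 1\iff \lambda\le 1/F_x(u)$, hence the sup equals $1/F_x(u)$; (3) conclude that two metrics $F,\tilde F$ with the same unit tangent balls at every point satisfy $F_x(u)=\tilde F_x(u)$ for all $x$ and all $u\neq 0$, and trivially agree at $u=0$, hence $F=\tilde F$. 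One should also note that existence of the reconstruction (not just uniqueness) holds when the prescribed sets are genuine unit balls of \emph{some} metric, i.e.\ are bounded convex sets containing $0$ in their interior; but since the statement only asserts uniqueness, steps (1)–(3) suffice.

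I do not expect a serious obstacle here; the only mild subtlety is bookkeeping the axioms of a Finsler metric used at each step — positive homogeneity for the homogeneity computation, nondegeneracy to guarantee $F_x(u)>0$ so the gauge is finite and the reconstruction gives a finite positive value, and (if one wants the converse existence direction) the triangular inequality which is equivalent to convexity of $B_1^t(x)$. Since the paper already records these axioms in \cref{sec: preliminaries metric geometry} and explicitly notes the UTB is convex for any metric satisfying the triangle inequality, the proof is essentially the observation that a positively homogeneous nondegenerate function equals the Minkowski gauge of its own unit sublevel set.
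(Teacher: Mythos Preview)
Your proposal is correct and follows essentially the same approach as the paper: both reconstruct $F_x(u)$ from positive homogeneity by intersecting the ray through $u$ with the boundary of $B_1^t(x)$, which is precisely the Minkowski-gauge formula you write down. Your version is somewhat more carefully organised (explicit well-posedness of the sup, explicit use of nondegeneracy), but the underlying argument is identical.
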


A proof is provided in \cref{sef: proof metric unique tangent balls} for completeness, and we push to \cref{sec: reconstructing metric from UGB} a discussion on how to reconstruct the metric from unit geodesic balls. We thus understand the multitude of existing anisotropic convolutions as weighted signal averages in unit balls of implicit, possibly non parametric, metrics $F$. In this context, modulation acts as a non-uniform distribution for sampling the unit balls.

In practice, we reinterpret discrete convolution kernels $\Delta_x$ as finite samples of unit balls. With finite samples, metric uniqueness is lost (see \cref{sec: example non unique metric for discrete samples}), yet convolutions described so far (see \cref{fig: overview unit ball conv}) can be approximately explained using Randers metrics. Other interpretations are possible with different metrics, potentially more suitable as Randers UTBs are confined to ellipses. Standard and dilated convolutions imply an underlying scaled isotropic Riemannian metric. The kernel shift in shifted convolutions can be modelled with the drift component $\omega$ of a Randers metric. All three of these can be understood from the perspective of UTBs. Due to constraints on the offset magnitudes, deformable convolution supports usually resemble sampled convex set, which likens to sampling the UTB of a Randers metric. However, UGB sampling is a better interpretation for more involved theoretical deformations resembling the sampling of non-convex sets (see \cref{sec: implicit metrics of existing convolutions}).

\section{Metric Convolutions}
\label{sec: metric convs}

With our new metric perspective, existing convolutions are weighted averages over unit balls of an underlying metric, potentially with non-uniform sampling.
This understanding allows us to change the convolution paradigm by explicitly constructing a metric $F$ and deriving its unit balls for the filtering operation (see \cref{fig: trad vs metric conv pseudo algo,fig: summary adaptive conv algo details}). Unit balls should result from a transformation of a reference set, e.g.\ $\Delta^{\textit{ref}}$, where weights (output values of $g$) are well-defined.
Termed \textit{metric convolutions}, they generalise existing convolutions relying on implicit metrics by using explicit ones, offering interpretability, strong geometric priors as implicit regularisation, and a fixed small number of parameters for metric encoding, regardless of the number of samples.

We provide simple, differentiable metric convolution implementations compatible with gradient-based optimisation. We sample unit balls uniformly for simplicity, but our constructions generalise to modulation with a parameterised density $m_x(y)$, e.g.\ Gaussian \cite{romero2021flexconv}. Our main focus is on UTB-based metric convolutions, with an option for UGB-based ones. 
The general idea for our proposed methods is the following. 
Consider a class of parametric metrics $F^\gamma$, which smoothly depend on location-dependent parameters $\gamma$. For instance, we take for Riemannian metrics $\gamma = M$, and for Randers metrics $\gamma=(M,\omega)$. Given $\gamma$, explicitly compute unit balls to define (sampled) neighbourhoods $\Delta_x$ for calculating weighted averages. We can choose metric parameters $\gamma$ based on geometric or image-related properties and can opt for interpretable filtering methods like fixed uniform kernels $g$. Both kernel weights $g$ and metric parameters $\gamma$ can be learnt through gradient-based optimisation.

Like our theory, metric convolutions are generalisable to higher manifold or feature dimensions (see \cref{sec: generalising metric convolutions to arbitrary manifolds}).

\begin{figure}[ht]
    \centering
        \centerline{\includegraphics[width=\columnwidth]{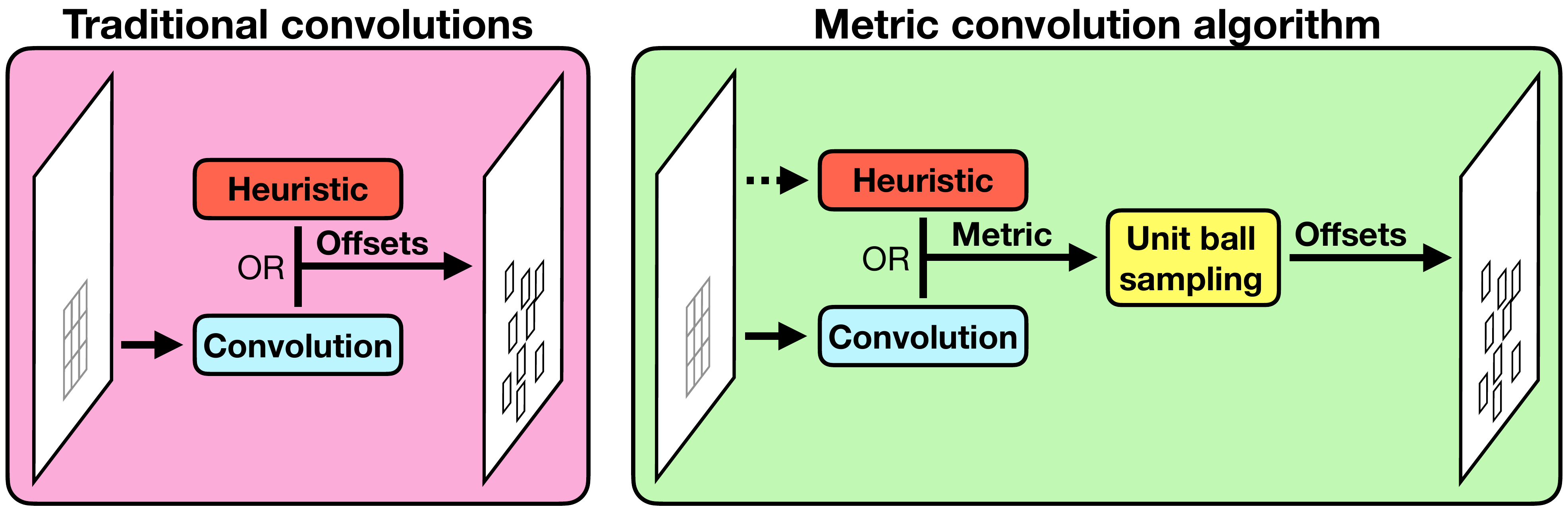}}
        \caption{
        Our metric convolutions compute metrics explicitly for unit ball sampling, while traditional convolutions compute offsets directly. In non-heuristic methods, our intermediate convolution encodes general metrics with only $5$ to $7$ numbers, compared to $2k^2$ for the offsets in a $k\times k$ deformable convolution. Our heuristic methods are compatible with adaptive signal-dependent sampling.
        }
        \label{fig: trad vs metric conv pseudo algo}
\end{figure}

\subsection{Unit Tangent Ball Metric Convolution}

The unit ball $B_1^t(x)$ is obtained by deforming the unit Euclidean disk. Recall that unit tangent balls (UTB) are convex, e.g.\ ellipses for Randers metrics. Thus, we can use angles $\theta$ as a monotonic parameterisation of the unit circle.
Let $y_x(\theta,\gamma)\in B_1^t(x)$ be the point on the unit circle at angle $\theta$ for the Finsler metric of parameters $\gamma$. That is, 
$y_x(\theta,\gamma) = \lVert y_x(\theta,\gamma)\rVert_2 u_\theta$, where $u_\theta = (\cos \theta, \sin\theta)^\top$.
% is an Euclidean unit vector. 
The  $F_x^\gamma$-unit vector, $y_x(\theta,\gamma)$, satisfies $F_x^{\gamma}(y_x(\theta,\gamma)) = 1$. The positive homogeneity of the metric implies that
$\lVert y_x(\theta,\gamma)\rVert_2 = \tfrac{1}{F_x^{\gamma}(u_\theta)}$.
The unit circle is thus given by the points
\begin{equation}
    \label{eq: finsler unit circle}
    y_x(\theta,\gamma) = \frac{1}{F_x^{\gamma}(u_\theta)}u_\theta.    
\end{equation}
With polar coordinates for 
integrals,
convolution becomes
\begin{equation}
    \label{eq: metric utb f*g}
    (f*g)(x) = \int_{s,\theta} f(x + s y_x(\theta,\gamma)) g(s y_x(\theta,\gamma)) ds d\theta.
\end{equation}

For a signal-dependent metric, we can compute $\gamma$ directly from the image, e.g.\ with a standard convolution (as in deformable convolution) we name the \textit{intermediate convolution}. 
Since $M$ is symmetric definite positive, it is parameterisable by $3$ numbers using its Cholesky decomposition $LL^\top = M$ where $L$ is a lower triangular matrix with positive diagonal. However, we also used a more stable spectral implementation using extra numbers when necessary, where two numbers encode the first eigenvector of $M$ and the two others are its eigenvalues, eventually adding one more number encoding the scale of the eigenvalues.
As $\omega$ requires two numbers, our intermediate convolution only needs $5$ to $7$ output channels to compute $\gamma$, depending on
which implementation we choose,
% whether we use the Cholesky or spectral implementation, 
unlike deformable convolution which requires $2k^2$ channels for the offsets of $k\times k$ samples. In \cref{sec: computing metric from 5 numbers,sec: computing metric from 6 7 numbers}, we explain how to recover $\gamma = (M, \omega)$ from these 
numbers while enforcing $\lVert \omega \rVert_{M^{-1}(x)} < 1-\varepsilon_\omega < 1$, where $\varepsilon_\omega \in (0,1]$ is a hyperparameter controlling the maximum tolerated asymmetry.

By analogy with existing convolutions, we can simply discretise the UTB by sampling $k^2$ points using polar coordinates $s\in[0,1]$ and $\theta\in[0,2\pi]$ (see \cref{sec: polar sampling strategies}), yielding support locations $P_x^\gamma(s,\theta) = sy_x(\theta,\gamma)$ for $\Delta_x$. While straightforward, this implementation is compatible with common neural networks and can be differentiated as explained in \cref{sec: differentiating metric utb conv}.

\noindent
Notably, metric convolutions satisfy the following property.

\begin{theorem}
    \label{th: shift equivariant}
    Metric convolutions are shift-equivariant.
\end{theorem}

We provide a detailed proof in \cref{sec: shift-equivariance metric convolutions}. The reason is that we use shift-equivariant offsets for sampling the image, thanks to shift-equivariant heuristics or standard convolutions as intermediate operations to extract metric parameters, and that we satisfy the weight sharing assumption, as the values taken by $g$ are the same for each pixel $x$. Invariance though is achieved by combining metric convolutions with pooling, as in standard convolutions.

\subsection{Unit Geodesic Ball Metric Convolution}

Efficient geodesic metric convolution implementations are challenging due to costly geodesic extraction (see \cref{sec: difficulties fast diff UGB}). We propose a faster approximation inspired by \cite{crane2013geodesics}. It computes geodesic flow fields using normalised gradients of initial Finsler heat flows at pixel $x$ from a simplified local solution with the Finsler-Gauss kernel \cite{ohta2009heat,yang2018geodesic}. Then, a set of candidate sample locations initially near $x$ flows along the geodesics for a fixed duration, providing samples of the unit geodesic ball. See \cref{sec: details implementation metric ugb conv} for full details. Although faster than accurate geodesic extractors \cite{varadhan1967behavior,benmansour2010derivatives,crane2013geodesics,bertrand2023fast} and fully differentiable, it is not fast enough for real scenarii such as neural network modules.

\section{Experiments}
\label{sec: experiments}

\subsection{Denoising Whole Images with a Single Local Convolution}
\label{sec: denoising whole images with a single local convolution}

Here, we focus on image denoising, a fundamental task for convolutions as they filter out noise. Surprisingly, modern non-standard convolutions are untested on it, as they are only used in later stages of neural networks for feature learning. 
% In our tests, we
We 
add Gaussian noise of standard deviation $\sigma_n$ to greyscale images and we set the following constraints. 

\begin{itemize}
    \item All methods use $k\times k$ kernel samples with same $k$.
    \item All methods apply a single filtering convolution with interpolation allowed for non-pixel centre sampling.
    \item Convolutions are local. Non-local ones are excluded, 
    e.g.\ \cite{tomasi1998bilateral,dabov2007image,buades2011nonlocal}.
\end{itemize}

We explore three filter variations
and learning is done either on a single image or on a dataset.

% \noindent
% \textbf{Geometric heuristic design.}
\paragraph{Geometric heuristic design.}
To geometrically design filters, we recall that to preserve edges, neighbourhoods should not cross them.
Thus,
the eigenvectors of $M$ are taken as the image gradient $\nabla f(x)$ and its orthogonal $\nabla f(x)^\perp$. Anisotropy is induced by different eigenvalues: a smaller one stretches the unit ball further in its eigendirection due to the inverse in \cref{eq: finsler unit circle}. 
For denoising, a natural choice for the Randers drift component is $\omega\equiv0$,  yet we also test with different asymmetry levels by aligning $\omega$ with $\nabla f(x)^\perp$ 
at various scales controlled by a hyperparameter $\varepsilon_\omega$. If $\varepsilon_\omega=1$, then $\omega \equiv 0$.
We visualize some unit balls in \cref{fig: kernels cameraman}. Notice how UTBs remain convex, while UGBs deform more. 
Our method gets better filtering (higher PSNR), compared to other manually designed approaches.
Further results and  implementation details can be found in \cref{sec: implementation heuristic metric convs}.

\begin{figure}[ht]
    \centering
        \centerline{
        \includegraphics[width=\columnwidth]{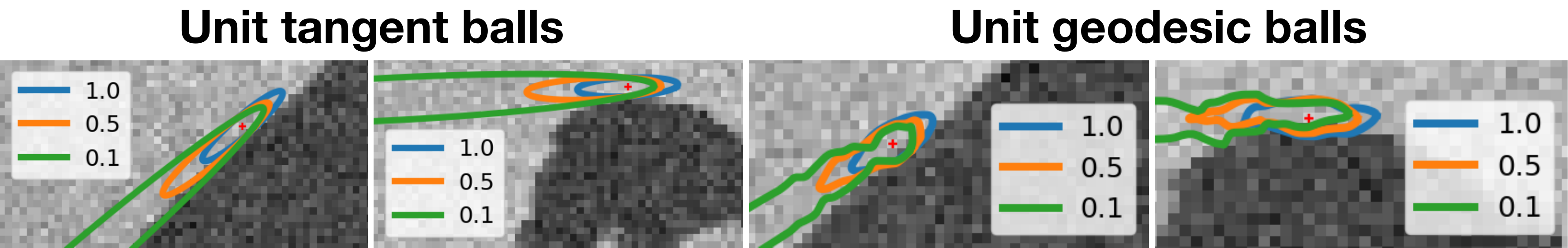}
        }
        \caption{Heuristic unit balls, tangent (left) and geodesic (right). Convolution kernel locations are sampled in them. The different balls correspond to 
        % different
        various
        values of $\varepsilon_\omega$ controlling the scale of $\omega$.
        }
        \label{fig: kernels cameraman}
    \vspace{-2em}
\end{figure}

% \noindent
% \textbf{Learning sample locations only.}
\paragraph{Learning sample locations only.}
As an intermediate experiment, between manual design and learning from a training set, we considered learning the metric parameters of UTBs and the offsets of deformable convolutions when applied to a single image. Our method is systematically on par or outperforms deformable convolution (\cref{tab: mse res single im dataset deform unit tangent no nn small version main manuscript}). 
As our UTB samples are always derived from a metric, yielding a valuable geometric regularisation, our approach remarkably does not overfit, with low and consistent generalisation errors, in contrast to deformable convolution's severe overfitting due to the absence of prior knowledge on the structure of the kernel support.
For more details and more quantitative and visual results see \cref{sec: learning filtering on a single image}.

\begin{table*}[t]
    \Huge
    % \begin{center}
    % \vskip 0.1in
    \centering
        % \resizebox{0.95\textwidth}{!}{%
        \resizebox{0.8\textwidth}{!}{%
        %\begin{small}
            \begin{sc}
                \begin{tabular}{c ccccc c ccccc c ccccc}
                    \toprule
                     & \multicolumn{5}{c}{\bf{Deformable}} & & \multicolumn{5}{c}{\bf{Unit tangent ball (ours)}  $[\varepsilon_\omega = 0.9]$} & & \multicolumn{5}{c}{\bf{Unit tangent ball (ours)}  $[\varepsilon_\omega = 0.1]$} \\
                      \cmidrule{2-6} \cmidrule{8-12} \cmidrule{14-18} 
                    $k$ & 5 & 11 & 31 & 51 & 121 & & 5 & 11 & 31 & 51 & 121 & & 5 & 11 & 31 & 51 & 121\\
                    \midrule
                     $\textit{MSE}$
                     & \pmb{7.26e-3} & 8.97e-3 & 1.99e-2 & 2.83e-2 & 6.18e-2 & & 8.13e-3 & \underline{7.43e-3} & \underline{7.58e-3} & \underline{8.64e-3} & \underline{8.28e-3} & & \underline{8.10e-3} & \pmb{7.32e-3} & \pmb{7.45e-3} & \pmb{7.52e-3} & \pmb{7.82e-3} \\
                     $\delta_{\textit{MSE}}$
                     & 265 & 74 & 28 & 18 & 6.6 & & 1.1 & 0.9 & 1.1 & 0.8 & 1.2 & & 1.3 & 1.1 & 1.3 & 1.4 & 1.5 \\
                     \bottomrule
                \end{tabular}
            \end{sc}
        %\end{small}
        }
    % \end{center}
    %\vskip -0.1in
    \caption{
    Single image denoising results (with noise level $\sigma_n=0.3$). We also provide the normalised generalisation gap $\delta_{\textit{MSE}} = \frac{\textit{MSE}_{\textit{test}} - \textit{MSE}_{\textit{train}}}{\textit{MSE}_{\textit{train}}}$. The parameter $\varepsilon_\omega$ controls the metric asymmetry.
    }
    \label{tab: mse res single im dataset deform unit tangent no nn small version main manuscript}
\end{table*}

%\subsubsection{Learning Filtering on a Small Dataset}

% \noindent
% \textbf{Learning filtering on a small dataset.}
\paragraph{Learning filtering on a small dataset.}
In these experiments, we train a single-layer convolution network on standard datasets of $256\times 256$ greyscale images from BSDS300 \cite{martin2001bsds} and PascalVOC Segmentation \cite{everingham2010pascal}. Gaussian noise with standard deviation $\sigma_n\in\{0.1, 0.3,0.5\}$ is added to the images consistently within each experiment. 
We follow the methodology of \cite{dai2017deformable}, making positional parameters, i.e.\ deformation offsets or metric parameters $\gamma$, data-dependent and shift-equivariant by computing them from an intermediate standard $k\times k$ convolution layer (see \cref{fig: trad vs metric conv pseudo algo}), with the same $k$ as in the final convolution with deformed support. 
This intermediate convolution has as many output channels as the required number of parameters per pixel.
Deformable convolution needs $2k^2$ intermediate channels, while our UTB approach uses only $5$ for any $k$ as we use the Cholesky-based implementation. The final convolutions can have fixed uniform kernel weights (FKW) or learnable kernel weights (LKW). We test our method with $\varepsilon_\omega\in\{0.1,0.9\}$, controlling the metric asymmetry.
We provide full training details in \cref{sec: training details denoinsing dataset}.

Results in \cref{tab: mse and delta mse res small dataset deform unit tangent no nn} for learning filtering on datasets show MSE scores and generalisation gaps $\delta_{\textit{MSE}}$.
Our results are on par with deformable convolution, despite our metric UTB convolutions, constrained to ellipses, being theoretically less general. Increasing asymmetry using $\varepsilon_\omega$ improves performance. Surprisingly, fixing or training kernel weights provides similar results for both methods.
Thus, perhaps learning the shape of the kernel is as vital as learning the weights. However, it could also be due to differences in gradient magnitudes for sampling location parameters, as indicated by suggested learning rates (see \cref{sec: lr denoising}).

\begin{table*}[ht]
    \LARGE
    % \vskip 0.1in
    \centering
    % \begin{center}
    \resizebox{0.85\textwidth}{!}{%
    %\begin{small}
        \begin{sc}  
            \begin{tabular}{ccc ccc c ccc c ccc c ccc c ccc c ccc}
                \toprule
                     & & & \multicolumn{7}{c}{\bf{Deformable}} & & \multicolumn{15}{c}{\bf{Unit tangent ball (ours)}}\\
                     & & &\multicolumn{3}{c}{} & & \multicolumn{3}{c}{} & & \multicolumn{7}{c}{$\varepsilon_\omega=0.1$} & & \multicolumn{7}{c}{$\varepsilon_\omega=0.9$} \\
                     \cmidrule{12-18} \cmidrule{20-26}
                     & & &\multicolumn{3}{c}{FKW} & & \multicolumn{3}{c}{LKW} & & \multicolumn{3}{c}{FKW} & & \multicolumn{3}{c}{LKW} & & \multicolumn{3}{c}{FKW} & & \multicolumn{3}{c}{LKW} \\
                      \cmidrule{4-6} \cmidrule{8-10} \cmidrule{12-14} \cmidrule{16-18} \cmidrule{20-22} \cmidrule{24-26} 
                    & & \diagbox[innerleftsep=0em,innerrightsep=0em]{$\sigma_n$}{$k$} & 5 & 11 & 31 & & 5 & 11 & 31 & & 5 & 11 & 31 & & 5 & 11 & 31 & & 5 & 11 & 31 & & 5 & 11 & 31 \\
                    \midrule
                     \multirow{6}{*}{$\textit{MSE}$} & \multirow{3}{*}{BSDS300} & $0.1$  & 1.12e-4 & 1.83e-4 & 1.72e-3 & & 1.72e-4 & 1.70e-4 & 1.29e-3 & & 1.19e-4 & 1.19e-4 & 7.20e-4 & & 1.64e-4 & 1.32e-4 & 7.72e-4 & & 1.20e-4 & 1.24e-4 & 7.24e-4 & & 1.35e-4 & 1.32e-4 & 7.74e-4  \\
                     & & $0.3$  & 2.92e-4 & 2.67e-4 & 1.89e-3 & & 3.47e-4 & 3.33e-4 & 2.10e-3 & & 3.62e-4 & 3.45e-4 & 2.04e-3 & & 3.66e-4 & 3.60e-4 & 2.09e-3 & & 3.64e-4 & 3.40e-4 & 2.05e-3 & & 4.01e-4 & 3.59e-4 & 2.11e-3   \\
                     & & $0.5$  & 4.91e-4 & 4.13e-4 & 3.13e-3 & & 6.44e-4 & 6.01e-4 & 3.67e-3 & & 6.67e-4 & 6.05e-4 & 3.74e-3 & & 7.15e-4 & 6.33e-4 & 3.77e-3 & & 6.66e-4 & 6.08e-4 & 3.76e-3 & & 7.03e-4 & 6.33e-4 & 3.79e-3 \\
                     %\midrule
                     \cmidrule{2-26}
                     & \multirow{3}{*}{PascalVOC} & $0.1$ & 8.59e-5 & 1.56e-4 & 1.35e-3 & & 1.02e-4 & 9.81e-5 & 7.62e-4 & & 1.01e-4 & 1.00e-4 & 7.95e-3 & & 1.06e-4 & 1.12e-4 & 7.71e-4 & & 1.02e-4 & 1.02e-4 & 7.76e-4 & & 1.17e-4 & 1.12e-4 & 7.77e-4  \\ 
                      & & $0.3$ & 2.40e-4 & 2.37e-4 & 1.84e-3 & & 3.03e-4 & 2.87e-4 & 2.14e-3 & & 3.18e-4 & 2.95e-4 & 2.27e-3 & & 3.33e-4 & 2.96e-4 & 2.05e-3 & & 3.19e-4 & 2.96e-4 & 2.28e-3 & & 3.10e-4 & 2.96e-4 & 2.06e-3 \\ 
                      & & $0.5$ & 4.28e-4 & 3.66e-4 & 3.20e-3 & & 5.77e-4 & 5.47e-4 & 4.01e-3 & & 6.05e-4 & 5.71e-4 & 4.30e-3 & & 6.23e-4 & 5.41e-4 & 3.72e-3 & & 6.37e-4 & 5.59e-4 & 4.33e-3 & & 6.20e-4 & 5.71e-4 & 3.74e-3 \\ 
                    \midrule
                     \multirow{6}{*}{$\delta_{\textit{MSE}}$} & \multirow{3}{*}{BSDS300} & $0.1$  & 0.28 & 0.14 & 0.07 & & 0.24 & 0.20 & 0.05 & & 0.18 & 0.21 & 0.02 & & 0.17 & 0.21 & 0.03 & & 0.17 & 0.21 & 0.02 & & 0.17 & 0.21 & 0.03  \\
                     & & $0.3$  & 0.23 & 0.19 & 0.001 & & 0.21 & 0.20 & 0.05 & & 0.18 & 0.22 & 0.04 & & 0.17 & 0.21 & 0.04 & & 0.18 & 0.24 & 0.04 & & 0.18 & 0.21 & 0.04   \\
                     & & $0.5$  & 0.22 & 0.16 & 0.01 & & 0.20 & 0.20 & 0.05 & & 0.19 & 0.21 & 0.05 & & 0.19 & 0.22 & 0.04 & & 0.18 & 0.21 & 0.04 & & 0.19 & 0.22 & 0.04 \\
                     %\midrule
                     \cmidrule{2-26}
                     & \multirow{3}{*}{PascalVOC} & $0.1$ & 0.03 & 0.01 & 0.11 & & 0.03 & 0.03 & 0.02 & & 0.03 & 0.03 & 0.02 & & 0.03 & 0.03 & 0.02 & & 0.03 & 0.03 & 0.02 & & 0.03 & 0.03 & 0.02  \\ 
                     & & $0.3$ & 0.03 & 0.02 & 0.01 & & 0.03 & 0.03 & 0.01 & & 0.02 & 0.03 & 0.01 & & 0.02 & 0.03 & 0.02 & & 0.02 & 0.03 & 0.01 & & 0.03 & 0.03 & 0.03 \\ 
                     & & $0.5$ & 0.01 & 0.02 & 0.08 & & 0.02 & 0.02 & 0.001 & & 0.02 & 0.02 & 0.01 & & 0.02 & 0.02 & 0.02 & & 0.01 & 0.02 & 0.01 & & 0.02 & 0.02 & 0.02 \\ 
                    \bottomrule
            \end{tabular}
        \end{sc}
    %\end{small}
    }
    % \end{center}
    % \vskip -0.1in
    \caption{
    Denoising results on the greyscale versions of BSDS300 and PascalVOC datasets with noise level $\sigma_n$. Positional parameters of the $k\times k$ convolutions are learnt with a single regular convolution. The weights of the learnt kernel are either fixed (FKW) or learnt as well (LKW). Top are test MSE losses, bottom are the generalisation gaps $\delta_{\textit{MSE}}$.
    Lower is better.
    }
    \label{tab: mse and delta mse res small dataset deform unit tangent no nn}
    \vspace{-0.5em}
\end{table*}

\subsection{From Single to Stacked Convolutions: An Example of CNN Classification}
\label{sec: from single to stacked convolutions: cnn classification}

We show how our metric convolutions can be used with neural networks, specifically CNNs. We work on the MNIST \cite{lecun1998gradient}, Fashion-MNIST \cite{xiao2017fashionmnist}, CIFAR-10 and CIFAR-100 \cite{krizhevsky2009learning} classification benchmarks.
We follow the methodology of \cite{dai2017deformable,zhu2019moredeformable,yu2022entire} to convert standard convolution modules of a CNN to deformed versions. 
We test shifted, deformable, and metric UTB convolutions against a standard convolution baseline.
Specifically, we replace the $3\times 3$ standard convolutions in the later stages, from  layer2 to layer4, of a ResNet18 \cite{he2016resnet} by our metric convolution using the same number of sample locations. 
As is common for this resolution, we set conv1 stride to $1$ and avoid pooling to retain information.
In our experiments, we either fix the kernel weights (FKW) of non-standard convolutions to uniform values or learn simultaneously sample locations and the weights (LKW). The weights, up to module conversion, are either learned from scratch (SC) or transfer learned (TL) from Imagenet \cite{deng2009imagenet} classification with vanilla modules.
Due to resource constraints, we only trained a single run on the MNIST and Fashion-MNIST datasets, but train $8$ runs on the CIFAR-10 and CIFAR-100 datasets. 
Full implementation and training details of our and other methods are provided in \cref{sec: cnn mnist implementation details}.
We also ablate in \cref{sec: Further Ablation Experiments of CNN Classification} on different replacement strategies and using Riemannian instead of Finsler metric convolutions.

\begin{table}[ht]
    \Large
    \centering
    \resizebox{0.9\columnwidth}{!}{%
        \begin{sc}
            \begin{tabular}{l l cc c cc}
                \toprule
                 &  & \multicolumn{2}{c}{FKW} & & \multicolumn{2}{c}{LKW} \\
                 \cmidrule{3-4} \cmidrule{6-7}
                 Dataset & Method & SC & TL & & SC & TL\\
                \midrule
                 \multirow{4}{*}{\bf{MNIST}}  
                 & Standard  & - & - & & 99.61\% & 99.66\% \\
                 \cmidrule{2-7}
                 & Deformable  & 83.79\% & 99.16\% & & 99.51\% & 99.64\% \\
                 & Shifted  & 99.11\% & 99.15\% & & 99.39\% & 99.64\%  \\
                 & Metric UTB (Ours)  & \pmb{99.14\%} & \pmb{99.17\%} & & \pmb{99.64\%} & \pmb{99.68\%} \\
                 \midrule
                 \multirow{4}{*}{\bf{Fashion-MNIST}}  
                 & Standard  & - & - & & 92.37\% & 93.30\% \\
                 \cmidrule{2-7}
                 & Deformable  & 75.50\% & 90.66\% & & 92.54\% & 92.74\% \\
                 & Shifted  & 82.43\% & 89.40\% & & 92.75\% & 93.45\%  \\
                 & Metric UTB (Ours)  & \pmb{89.85\%} & \pmb{90.87\%} & & \pmb{92.76\%} & \pmb{93.49\%} \\
                 \bottomrule
            \end{tabular}
        \end{sc}
    }
    
    \caption{
        Test accuracies of ResNet18 trained using standard or non-standard convolutions.
        Higher is better. The main results are those of the LKW-TL columns.
    }
    \label{tab: classif resnet18 mnist fmnist}
\end{table}

\begin{table*}[ht]
    \Huge
    \centering
    % \begin{center}
        \resizebox{0.93\textwidth}{!}{%
        %\begin{small}
            \begin{sc}
                \begin{tabular}{ll cc c cc c cc c cc}
                    \toprule
                     & & \multicolumn{5}{c}{\bf{CIFAR-10}} & & \multicolumn{5}{c}{\bf{CIFAR-100}}\\
                     \cmidrule{3-7} \cmidrule{9-13} 
                     & & \multicolumn{2}{c}{FKW} & & \multicolumn{2}{c}{LKW} & & \multicolumn{2}{c}{FKW} & & \multicolumn{2}{c}{LKW} \\
                     \cmidrule{3-4} \cmidrule{6-7} \cmidrule{9-10} \cmidrule{12-13}
                     & & SC & TL & & SC & TL & & SC & TL & & SC & TL\\
                     \midrule
                     \multirow{4}{*}{TOP1} 
                     & Standard  & - & - & & 92.26\% ($\pm$ 0.20\%) & 92.64\% ($\pm$ 0.18\%) & & - & - & & 70.48\% ($\pm$ 0.29\%) & 70.52\% ($\pm$ 0.38\%) \\
                     \cmidrule{2-13}
                     & Deformable  & 33.21\% \pmb{($\pm$ 2.05\%)} & 34.10\% ($\pm$6.48\%) & & 72,30\% ($\pm$7.86\%) & \pmb{93,10\%} ($\pm$0.17\%) & & 10.47\% ($\pm$ 0.79\%) & 6.82 ($\pm$1.44\%) & & 54.51\% ($\pm$8.04\%) & 70.05\% ($\pm$0.43\%) \\
                     & Shifted  & 37.89\% ($\pm$2.06\%) & 29.52\% ($\pm$17.19\%) & & 86.63\% ($\pm$4.49\%) & 92.58\% ($\pm$0.28\%) & & 14.47\% ($\pm$2.40\%) & 2.67 ($\pm$0.66\%) & & 47.14\% ($\pm$9.05\%) & 68.58\% ($\pm$0.48\%)  \\
                     & Metric UTB (Ours)  & \pmb{69.83\%} ($\pm$ 2.43\%) & \pmb{72.15\% ($\pm$ 0.98\%)} & & \pmb{91.83\%} \pmb{($\pm$ 0.18\%)} & 93.07\% \pmb{($\pm$ 0.13\%)} & & \pmb{40.63\% ($\pm$ 0.79\%)} & \pmb{38.04\% ($\pm$ 0.77\%)} & & \pmb{70.03\% ($\pm$ 0.45\%)} & \pmb{70.38\%} \pmb{($\pm$ 0.38\%)} \\
                     \midrule
                     \multirow{4}{*}{TOP5} 
                     & Standard  & - & - & & - & - & & - & - & & 90.30\% ($\pm$0.10\%) & 90.03\% ($\pm$0.22\%) \\
                     \cmidrule{2-13}
                     & Deformable  & - & - & & - & - & & 30.75\% ($\pm$1.55\%) & 23.36\% ($\pm$3.37\%) & & 78.85\% ($\pm$5.64\%) & 89.71\% ($\pm$0.32\%) \\
                     & Shifted  & - & - & & - & - & & 37.73\% ($\pm$3.72\%) & 10.67\% ($\pm$2.21\%) & & 74.34\% ($\pm$6.56\%) & 88.77\% ($\pm$0.29\%)  \\
                     & Metric UTB (Ours)  & - & - & & - & - & & \pmb{71.70\% ($\pm$0.78\%)} & \pmb{69.22\% ($\pm$0.63\%)} & & \pmb{90.42\% ($\pm$0.19\%)} & \pmb{90.23\% ($\pm$0.17\%)} \\
                     \bottomrule
                \end{tabular}
            \end{sc}
        %\end{small}
        }
    % \end{center}
    % \vskip -0.1in
    \caption{
        Median test accuracies of ResNet18 trained using standard or non-standard convolutions,
        % either deformable, shifted, or our metric unit tangent ball ones,
        with $8$ independent runs per configuration. Higher is better. In parenthesis is the standard deviation (lower is better). The main results are those of the LKW-TL columns.
    }
    \label{tab: classif resnet18 cifar10 cifar100}
\end{table*}

\begin{figure*}[h!]
    \centering
    % \begin{center}
        \includegraphics[width=0.9\textwidth]{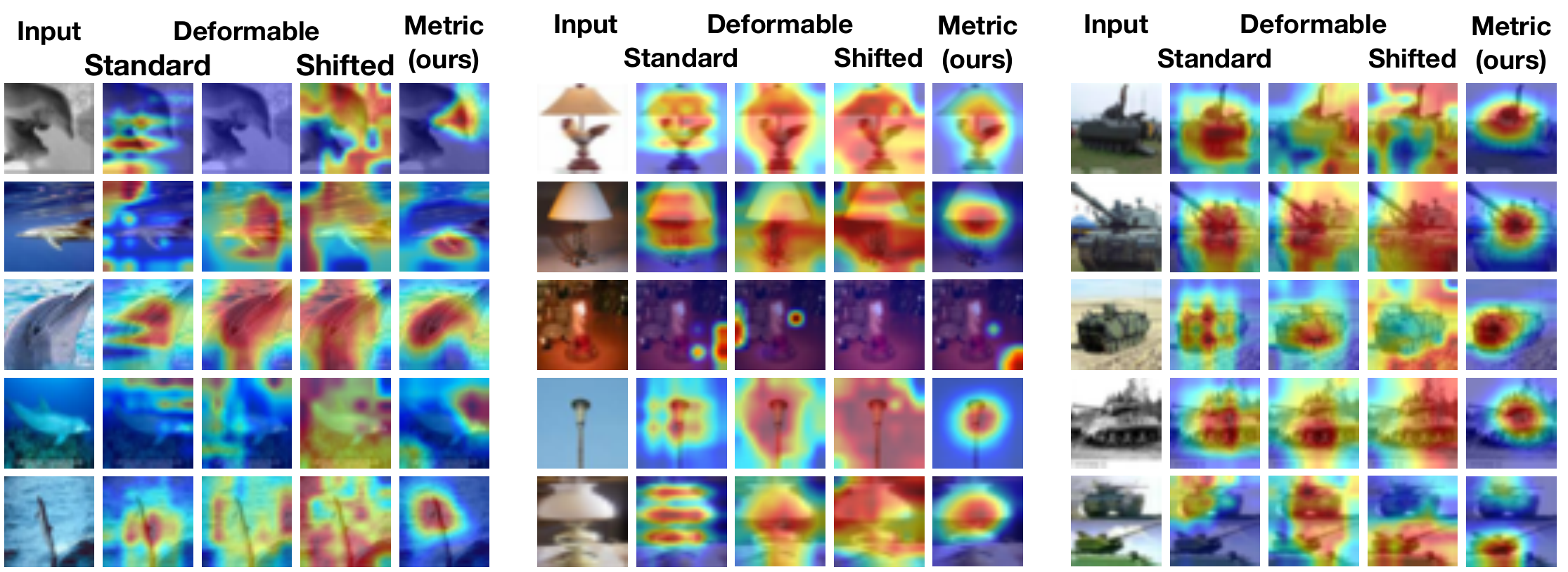}
        \caption{
        GradCAM heatmaps on random samples of the \textit{dolphin}, \textit{lamp}, and \textit{tank} classes of CIFAR-100. Our metric CNN better focuses on relevant objects and meaningful parts and can handle multiple instances unlike the other CNNs. Red means high values, blue means $0$.
        }
        \label{fig: gradcam cifar100}
    % \end{center}
    % \vskip -0.1in
    \vspace{-1em}
\end{figure*}

We provide test accuracies in \cref{tab: classif resnet18 mnist fmnist,tab: classif resnet18 cifar10 cifar100} and list the following observations.
1) Our metric CNN consistently outperforms deformable and shifted CNNs, with only rare cases where it is only on par with deformable ones.
2) Our geometric construction is a powerful prior leading to a strong implicit bias since 2)a) like standard CNNs, our metric CNN is barely affected by training from scratch (SC), occasionally improving results, and 2)b) our metric CNNs 
% still perform reasonably well when the weights of the convolution kernels are fixed 
remain effective with fixed convolution weights (FKW). In comparison, deformable and shifted CNNs see a significant performance drop when trained from scratch, especially on CIFAR-10 and CIFAR-100, and act close to random predictors when the convolution kernels are fixed. While it could be argued that shifted convolutions has a stronger prior, since its kernel position has only 2 degrees of freedom compared to the 5 to 7 ones for our metric kernel, these results prove that a geometric approach backed by a metric theory has better regularisation than an efficient yet solely empirical approach. 
3) The results of our metric CNNs have a significantly lower standard deviation, even when they only match
deformable convolution. Thus, our CNNs are more stable, hence more reliable if limited to single training runs.
4) Our method is not much more expensive than deformable convolution (see \cref{sec: cnn mnist implementation details,tab: memory time cifar,tab: metric utb cnn vs flexconv}). It even outperforms at lower costs an advanced method with larger kernel sizes and modulation \cite{romero2021flexconv}.

We also plot GradCAM \cite{selvaraju2017grad} visualisations on randomly chosen images and classes from the CIFAR-100 test set (for LKW-TL trained networks) in \cref{fig: gradcam cifar100}. These heatmaps are the ReLU of a weighted sum of the feature representations after the last convolution layer, where the weights are given by the gradient with respect to the groundtruth label. They are then resized and superimposed on the original image.
GradCAMs highlight image regions positively contributing to the prediction of the correct class.
We see that our metric CNN better focuses than CNNs using other types of convolutions on the relevant objects and their meaningful parts, e.g.\ the face or fins of a dolphin, rather than on the background. It is also able to handle multiple instances unlike the other CNNs, rather than focusing on one of them. These visual results point towards the superiority of CNNs with metric convolutions rather than other existing types of convolutions. Not only are metric convolutions more interpretable due to the use of metric theory, but the results seem more interpretable as well.
Our experiments demonstrate how to combine metric convolutions with complex neural architectures for high level tasks and that networks benefit from our metric approach to convolutions.

\section{Conclusion}

By viewing images as metric manifolds, we provided a unified theory explaining the various existing deformed convolution operators as methods sampling unit balls of various implicit metrics. With this new perspective, we designed a novel framework for adaptive image convolutions, called metric convolutions, that sample unit balls of explicit metrics of various complexity to compute kernel locations. This approach allows for adaptive, feature-dependent neighbourhoods that capture anisotropic and asymmetric patterns, providing a geometric inductive bias improving stability and interpretability. 
Our experiments demonstrated competitive results on standard benchmarks while improving interpretability and incorporating strong geometric regularisation, albeit at a higher computational cost from the rigid standard convolution, which is unavoidable for adaptive convolution methods. Metric convolutions offer a valuable tool for tasks where explicit metrics or pixel similarities are needed, such as extracting feature-sensitive edges or tubular structures via geodesics \cite{yang2018geodesic}.
We hope that this work will help popularise the image manifold paradigm and the use of Finsler metrics to the modern computer vision and deep learning community.

%\vspace*{-0.5em}
% \noindent
% \textbf{Limitations and future works.}
\paragraph*{Limitations and future works.}
First, our choice of Randers metrics constrains unit tangent balls to ellipses. Other convex shapes would be of interest, yielding different types of Finsler unit balls. 
Second, our current implementation of unit geodesic ball metric convolutions is too slow for practical use. Improving it
and using them in neural networks is a future research direction. 
Third, our metric convolutions inherit the drawback shared by all non-standard convolutions,
which is the need to store in memory sampling offset locations per pixel. 
This high memory footprint can make learning on high resolution images challenging especially when working on high batch sizes.  
Consequentially, we focused on smaller resolution benchmarks, and leave tests on higher resolution data and more efficient implementation to future works. We also focused on a couple of tasks, yet we are intrigued by how beneficial our metric-based learned representations would be for downstream tasks. Viewing individual images as a metric manifold is largely uncharted in modern machine learning and requires further study.

\section*{Acknowledgments}
This work was supported by the Israel Science Foundation (ISF) (grant No. 3864/21), by ADRI – Advanced Defense Research Institute – Technion and by the Rosenblum Fund.

{
    \small
    \bibliographystyle{ieeenat_fullname}
    \bibliography{main}

\begin{thebibliography}{80}
\providecommand{\natexlab}[1]{#1}
\providecommand{\url}[1]{\texttt{#1}}
\expandafter\ifx\csname urlstyle\endcsname\relax
  \providecommand{\doi}[1]{doi: #1}\else
  \providecommand{\doi}{doi: \begingroup \urlstyle{rm}\Url}\fi

\bibitem[{\'A}lvarez et~al.(2015){\'A}lvarez, Caja, and Victoria]{alvarez2015introduction}
P~Morales {\'A}lvarez, Miguel~S{\'a}nchez Caja, and Miguel Angel~Javaloyes Victoria.
\newblock \emph{An introduction to Lorentz-Finsler Geometry}.
\newblock PhD thesis, Master Thesis, U. Granada. Master FisyMat, supervisors MA. Javaloyes \& M~…, 2015.

\bibitem[Beardon(2012)]{beardon2012geometry}
Alan~F Beardon.
\newblock \emph{The geometry of discrete groups}.
\newblock Springer Science \& Business Media, 2012.

\bibitem[Benmansour et~al.(2010)Benmansour, Carlier, Peyr{\'e}, and Santambrogio]{benmansour2010derivatives}
Fethallah Benmansour, Guillaume Carlier, Gabriel Peyr{\'e}, and Filippo Santambrogio.
\newblock Derivatives with respect to metrics and applications: subgradient marching algorithm.
\newblock \emph{Numerische Mathematik}, 116:\penalty0 357--381, 2010.

\bibitem[Bertrand et~al.(2023)Bertrand, Makaroff, and Cohen]{bertrand2023fast}
Th{\'e}o Bertrand, Nicolas Makaroff, and Laurent~D Cohen.
\newblock Fast marching energy {CNN}.
\newblock In \emph{International Conference on Scale Space and Variational Methods in Computer Vision}, pages 276--287. Springer, 2023.

\bibitem[Bonnans et~al.(2022)Bonnans, Bonnet, and Mirebeau]{bonnans2022linear}
J~Frederic Bonnans, Guillaume Bonnet, and Jean-Marie Mirebeau.
\newblock A linear finite-difference scheme for approximating {R}anders distances on cartesian grids.
\newblock \emph{ESAIM: Control, Optimisation and Calculus of Variations}, 28:\penalty0 45, 2022.

\bibitem[Boscaini et~al.(2016)Boscaini, Masci, Rodol{\`a}, and Bronstein]{boscaini2016learning}
Davide Boscaini, Jonathan Masci, Emanuele Rodol{\`a}, and Michael Bronstein.
\newblock Learning shape correspondence with anisotropic convolutional neural networks.
\newblock \emph{Advances in neural information processing systems}, 29, 2016.

\bibitem[Bracha et~al.(2023)Bracha, Dag{\`e}s, and Kimmel]{bracha2023partial}
Amit Bracha, Thomas Dag{\`e}s, and Ron Kimmel.
\newblock On partial shape correspondence and functional maps.
\newblock \emph{arXiv preprint arXiv:2310.14692}, 2023.

\bibitem[Bracha et~al.(2024{\natexlab{a}})Bracha, Dag{\`e}s, and Kimmel]{bracha2024unsupervised}
Amit Bracha, Thomas Dag{\`e}s, and Ron Kimmel.
\newblock On unsupervised partial shape correspondence.
\newblock In \emph{Proceedings of the Asian Conference on Computer Vision}, pages 4488--4504, 2024{\natexlab{a}}.

\bibitem[Bracha et~al.(2024{\natexlab{b}})Bracha, Dag{\`e}s, and Kimmel]{bracha2024wormhole}
Amit Bracha, Thomas Dag{\`e}s, and Ron Kimmel.
\newblock Wormhole loss for partial shape matching.
\newblock In \emph{Proceedings of the 38th International Conference on Neural Information Processing Systems}, pages 131247--131277, 2024{\natexlab{b}}.

\bibitem[Buades et~al.(2005)Buades, Coll, and Morel]{buades2005review}
Antoni Buades, Bartomeu Coll, and Jean-Michel Morel.
\newblock A review of image denoising algorithms, with a new one.
\newblock \emph{Multiscale modeling \& simulation}, 4\penalty0 (2):\penalty0 490--530, 2005.

\bibitem[Buades et~al.(2011)Buades, Coll, and Morel]{buades2011nonlocal}
Antoni Buades, Bartomeu Coll, and Jean-Michel Morel.
\newblock Non-local means denoising.
\newblock \emph{Image Processing On Line}, 1:\penalty0 208--212, 2011.

\bibitem[Caponio et~al.(2011)Caponio, Javaloyes, and Caja]{caponio2011interplay}
Erasmo Caponio, Miguel~Angel Javaloyes, and Miguel~S{\'a}nchez Caja.
\newblock On the interplay between lorentzian causality and finsler metrics of randers type.
\newblock \emph{Revista Matem{\'a}tica Iberoamericana}, 27\penalty0 (3):\penalty0 919--952, 2011.

\bibitem[Caselles et~al.(1995)Caselles, Kimmel, and Sapiro]{caselles1995geodesic}
Vicent Caselles, Ron Kimmel, and Guillermo Sapiro.
\newblock Geodesic active contours.
\newblock In \emph{Proceedings of IEEE international conference on computer vision}, pages 694--699. IEEE, 1995.

\bibitem[Chen et~al.(2015)Chen, Papandreou, Kokkinos, Murphy, and Yuille]{chen2015semantic}
Liang-Chieh Chen, George Papandreou, Iasonas Kokkinos, Kevin Murphy, and Alan~L Yuille.
\newblock Semantic image segmentation with deep convolutional nets and fully connected {CRF}s.
\newblock \emph{3rd International Conference on Learning Representations}, 2015.

\bibitem[Chen et~al.(2017)Chen, Papandreou, Kokkinos, Murphy, and Yuille]{chen2017deeplab}
Liang-Chieh Chen, George Papandreou, Iasonas Kokkinos, Kevin Murphy, and Alan~L Yuille.
\newblock Deeplab: Semantic image segmentation with deep convolutional nets, atrous convolution, and fully connected {CRFS}.
\newblock \emph{IEEE transactions on pattern analysis and machine intelligence}, 40\penalty0 (4):\penalty0 834--848, 2017.

\bibitem[Chen et~al.(2020)Chen, Dai, Liu, Chen, Yuan, and Liu]{chen2020dynamic}
Yinpeng Chen, Xiyang Dai, Mengchen Liu, Dongdong Chen, Lu Yuan, and Zicheng Liu.
\newblock Dynamic convolution: {A}ttention over convolution kernels.
\newblock In \emph{Proceedings of the IEEE/CVF conference on computer vision and pattern recognition}, pages 11030--11039, 2020.

\bibitem[Chollet(2017)]{chollet2017xception}
Fran{\c{c}}ois Chollet.
\newblock Xception: {D}eep learning with depthwise separable convolutions.
\newblock In \emph{Proceedings of the IEEE conference on computer vision and pattern recognition}, pages 1251--1258, 2017.

\bibitem[Cohen and Kimmel(1997)]{cohen1997global}
Laurent~D Cohen and Ron Kimmel.
\newblock Global minimum for active contour models: {A} minimal path approach.
\newblock \emph{International journal of computer vision}, 24:\penalty0 57--78, 1997.

\bibitem[Crane et~al.(2013)Crane, Weischedel, and Wardetzky]{crane2013geodesics}
Keenan Crane, Clarisse Weischedel, and Max Wardetzky.
\newblock Geodesics in heat: {A} new approach to computing distance based on heat flow.
\newblock \emph{ACM Transactions on Graphics (TOG)}, 32\penalty0 (5):\penalty0 1--11, 2013.

\bibitem[Dabov et~al.(2007)Dabov, Foi, Katkovnik, and Egiazarian]{dabov2007image}
Kostadin Dabov, Alessandro Foi, Vladimir Katkovnik, and Karen Egiazarian.
\newblock Image denoising by sparse 3-d transform-domain collaborative filtering.
\newblock \emph{IEEE Transactions on image processing}, 16\penalty0 (8):\penalty0 2080--2095, 2007.

\bibitem[Dag{\`e}s et~al.(2023{\natexlab{a}})Dag{\`e}s, Cohen, and Bruckstein]{dages2023model}
Thomas Dag{\`e}s, Laurent~D Cohen, and Alfred~M Bruckstein.
\newblock A model is worth tens of thousands of examples.
\newblock In \emph{International conference on scale space and variational methods in computer vision}, pages 223--235. Springer, 2023{\natexlab{a}}.

\bibitem[Dag{\`e}s et~al.(2023{\natexlab{b}})Dag{\`e}s, Lindenbaum, and Bruckstein]{dages2023compass}
Thomas Dag{\`e}s, Michael Lindenbaum, and Alfred~M Bruckstein.
\newblock From compass and ruler to convolution and nonlinearity: On the surprising difficulty of understanding a simple cnn solving a simple geometric estimation task.
\newblock \emph{arXiv preprint arXiv:2303.06638}, 2023{\natexlab{b}}.

\bibitem[Dag{\`e}s et~al.(2025{\natexlab{a}})Dag{\`e}s, Cohen, and Bruckstein]{dages2025model}
Thomas Dag{\`e}s, Laurent~D Cohen, and Alfred~M Bruckstein.
\newblock A model is worth tens of thousands of examples for estimation and thousands for classification.
\newblock \emph{Pattern Recognition}, 157:\penalty0 110904, 2025{\natexlab{a}}.

\bibitem[Dag{\`e}s et~al.(2025{\natexlab{b}})Dag{\`e}s, Weber, Lin, Talmon, Cremers, Lindenbaum, Bruckstein, and Kimmel]{dages2025finsler}
Thomas Dag{\`e}s, Simon Weber, Ya-Wei~Eileen Lin, Ronen Talmon, Daniel Cremers, Michael Lindenbaum, Alfred~M Bruckstein, and Ron Kimmel.
\newblock Finsler multi-dimensional scaling: Manifold learning for asymmetric dimensionality reduction and embedding.
\newblock In \emph{Proceedings of the Computer Vision and Pattern Recognition Conference}, pages 25842--25853, 2025{\natexlab{b}}.

\bibitem[Dai et~al.(2017)Dai, Qi, Xiong, Li, Zhang, Hu, and Wei]{dai2017deformable}
Jifeng Dai, Haozhi Qi, Yuwen Xiong, Yi Li, Guodong Zhang, Han Hu, and Yichen Wei.
\newblock Deformable convolutional networks.
\newblock In \emph{Proceedings of the IEEE international conference on computer vision}, pages 764--773, 2017.

\bibitem[Deng et~al.(2009)Deng, Dong, Socher, Li, Li, and Fei-Fei]{deng2009imagenet}
Jia Deng, Wei Dong, Richard Socher, Li-Jia Li, Kai Li, and Li Fei-Fei.
\newblock Imagenet: {A} large-scale hierarchical image database.
\newblock In \emph{2009 IEEE conference on computer vision and pattern recognition}, pages 248--255. Ieee, 2009.

\bibitem[Desbrun et~al.(1999)Desbrun, Meyer, Schr\"{o}der, and Barr]{desbrun1999implicit}
Mathieu Desbrun, Mark Meyer, Peter Schr\"{o}der, and Alan~H. Barr.
\newblock Implicit fairing of irregular meshes using diffusion and curvature flow.
\newblock In \emph{Proceedings of the 26th Annual Conference on Computer Graphics and Interactive Techniques}, page 317–324, USA, 1999.

\bibitem[Ding et~al.(2022)Ding, Zhang, Han, and Ding]{ding2022scaling}
Xiaohan Ding, Xiangyu Zhang, Jungong Han, and Guiguang Ding.
\newblock Scaling up your kernels to 31x31: revisiting large kernel design in {CNN}s.
\newblock In \emph{Proceedings of the IEEE/CVF conference on computer vision and pattern recognition}, pages 11963--11975, 2022.

\bibitem[Everingham et~al.(2010)Everingham, Van~Gool, Williams, Winn, and Zisserman]{everingham2010pascal}
Mark Everingham, Luc Van~Gool, Christopher~KI Williams, John Winn, and Andrew Zisserman.
\newblock The {P}ascal sisual object classes ({VOC}) challenge.
\newblock \emph{International journal of computer vision}, 88:\penalty0 303--338, 2010.

\bibitem[Forsyth and Ponce(2002)]{forsyth2002computer}
David~A Forsyth and Jean Ponce.
\newblock \emph{Computer vision: {A} modern approach}.
\newblock prentice hall professional technical reference, 2002.

\bibitem[Goodfellow et~al.(2016)Goodfellow, Bengio, and Courville]{goodfellow2016deep}
Ian Goodfellow, Yoshua Bengio, and Aaron Courville.
\newblock \emph{Deep learning}.
\newblock MIT press, 2016.

\bibitem[He et~al.(2016)He, Zhang, Ren, and Sun]{he2016resnet}
Kaiming He, Xiangyu Zhang, Shaoqing Ren, and Jian Sun.
\newblock Deep residual learning for image recognition.
\newblock In \emph{Proceedings of the IEEE conference on computer vision and pattern recognition}, pages 770--778, 2016.

\bibitem[Howard et~al.(2017)Howard, Zhu, Chen, Kalenichenko, Wang, Weyand, Andreetto, and Adam]{howard2017mobilenets}
Andrew~G Howard, Menglong Zhu, Bo Chen, Dmitry Kalenichenko, Weijun Wang, Tobias Weyand, Marco Andreetto, and Hartwig Adam.
\newblock Mobilenets: Efficient convolutional neural networks for mobile vision applications.
\newblock \emph{arXiv preprint arXiv:1704.04861}, 2017.

\bibitem[Jaderberg et~al.(2015)Jaderberg, Simonyan, Zisserman, et~al.]{jaderberg2015spatial}
Max Jaderberg, Karen Simonyan, Andrew Zisserman, et~al.
\newblock Spatial transformer networks.
\newblock \emph{Advances in neural information processing systems}, 28, 2015.

\bibitem[Javaloyes and S{\'a}nchez(2011)]{javaloyes2011definition}
Miguel~Angel Javaloyes and Miguel S{\'a}nchez.
\newblock On the definition and examples of {F}insler metrics.
\newblock \emph{arXiv preprint arXiv:1111.5066}, 2011.

\bibitem[Jeon and Kim(2017)]{jeon2017active}
Yunho Jeon and Junmo Kim.
\newblock Active convolution: {L}earning the shape of convolution for image classification.
\newblock In \emph{Proceedings of the IEEE conference on computer vision and pattern recognition}, pages 4201--4209, 2017.

\bibitem[Kingma and Ba(2014)]{kingma2014adam}
Diederik~P Kingma and Jimmy Ba.
\newblock Adam: {A} method for stochastic optimization.
\newblock \emph{arXiv preprint arXiv:1412.6980}, 2014.

\bibitem[Krizhevsky(2009)]{krizhevsky2009learning}
Alex Krizhevsky.
\newblock Learning multiple layers of features from tiny images.
\newblock 2009.

\bibitem[Krizhevsky et~al.(2012)Krizhevsky, Sutskever, and Hinton]{krizhevsky2012alexnet}
Alex Krizhevsky, Ilya Sutskever, and Geoffrey~E Hinton.
\newblock Imagenet classification with deep convolutional neural networks.
\newblock \emph{Advances in neural information processing systems}, 25, 2012.

\bibitem[LeCun et~al.(1998)LeCun, Bottou, Bengio, and Haffner]{lecun1998gradient}
Yann LeCun, L{\'e}on Bottou, Yoshua Bengio, and Patrick Haffner.
\newblock Gradient-based learning applied to document recognition.
\newblock \emph{Proceedings of the IEEE}, 86\penalty0 (11):\penalty0 2278--2324, 1998.

\bibitem[Li et~al.(2022)Li, Zhou, and Yao]{li2022omni}
Chao Li, Aojun Zhou, and Anbang Yao.
\newblock Omni-dimensional dynamic convolution.
\newblock \emph{arXiv preprint arXiv:2209.07947}, 2022.

\bibitem[Li et~al.(2020)Li, Li, Yuan, Li, Wu, Hu, and Wang]{li2020anisotropic}
Wenjuan Li, Bing Li, Chunfeng Yuan, Yangxi Li, Haohao Wu, Weiming Hu, and Fangshi Wang.
\newblock Anisotropic convolution for image classification.
\newblock \emph{IEEE Transactions on Image Processing}, 29:\penalty0 5584--5595, 2020.

\bibitem[Li et~al.(2021)Li, Chen, Dai, Liu, Chen, Yu, Yuan, Liu, Chen, and Vasconcelos]{li2021revisiting}
Yunsheng Li, Yinpeng Chen, Xiyang Dai, Mengchen Liu, Dongdong Chen, Ye Yu, Lu Yuan, Zicheng Liu, Mei Chen, and Nuno Vasconcelos.
\newblock Revisiting dynamic convolution via matrix decomposition.
\newblock \emph{arXiv preprint arXiv:2103.08756}, 2021.

\bibitem[Lin and Lucey(2017)]{lin2017inverse}
Chen-Hsuan Lin and Simon Lucey.
\newblock Inverse compositional spatial transformer networks.
\newblock In \emph{Proceedings of the IEEE Conference on Computer Vision and Pattern Recognition}, pages 2568--2576, 2017.

\bibitem[Lin et~al.(2021)Lin, Kluger, and Talmon]{lin2021hyperbolic}
Ya-Wei~Eileen Lin, Yuval Kluger, and Ronen Talmon.
\newblock Hyperbolic procrustes analysis using riemannian geometry.
\newblock \emph{Advances in Neural Information Processing Systems}, 34:\penalty0 5959--5971, 2021.

\bibitem[Lin et~al.(2023)Lin, Coifman, Mishne, and Talmon]{lin2023hyperbolic}
Ya-Wei~Eileen Lin, Ronald~R Coifman, Gal Mishne, and Ronen Talmon.
\newblock Hyperbolic diffusion embedding and distance for hierarchical representation learning.
\newblock In \emph{International Conference on Machine Learning}, pages 21003--21025. PMLR, 2023.

\bibitem[Lin et~al.(2025)Lin, Coifman, Mishne, and Talmon]{lin2025treewasserstein}
Ya-Wei~Eileen Lin, Ronald~R. Coifman, Gal Mishne, and Ronen Talmon.
\newblock Tree-wasserstein distance for high dimensional data with a latent feature hierarchy.
\newblock In \emph{The Thirteenth International Conference on Learning Representations}, 2025.

\bibitem[Lindenbaum et~al.(1994)Lindenbaum, Fischer, and Bruckstein]{lindenbaum1994gabor}
Michael Lindenbaum, M Fischer, and A Bruckstein.
\newblock On {G}abor's contribution to image enhancement.
\newblock \emph{Pattern recognition}, 27\penalty0 (1):\penalty0 1--8, 1994.

\bibitem[Liu et~al.(2022)Liu, Mao, Wu, Feichtenhofer, Darrell, and Xie]{liu2022convnet}
Zhuang Liu, Hanzi Mao, Chao-Yuan Wu, Christoph Feichtenhofer, Trevor Darrell, and Saining Xie.
\newblock A convnet for the 2020s.
\newblock In \emph{Proceedings of the IEEE/CVF conference on computer vision and pattern recognition}, pages 11976--11986, 2022.

\bibitem[Loshchilov and Hutter(2016)]{loshchilov2016sgdr}
Ilya Loshchilov and Frank Hutter.
\newblock {SGDR}: Stochastic gradient descent with warm restarts.
\newblock \emph{arXiv preprint arXiv:1608.03983}, 2016.

\bibitem[Luo et~al.(2016)Luo, Li, Urtasun, and Zemel]{luo2016understanding}
Wenjie Luo, Yujia Li, Raquel Urtasun, and Richard Zemel.
\newblock Understanding the effective receptive field in deep convolutional neural networks.
\newblock \emph{Advances in neural information processing systems}, 29, 2016.

\bibitem[Martin et~al.(2001)Martin, Fowlkes, Tal, and Malik]{martin2001bsds}
David Martin, Charless Fowlkes, Doron Tal, and Jitendra Malik.
\newblock A database of human segmented natural images and its application to evaluating segmentation algorithms and measuring ecological statistics.
\newblock In \emph{Proceedings Eighth IEEE International Conference on Computer Vision. ICCV 2001}, pages 416--423. IEEE, 2001.

\bibitem[Meyer et~al.(2003)Meyer, Desbrun, Schr{\"o}der, and Barr]{meyer2003discrete}
Mark Meyer, Mathieu Desbrun, Peter Schr{\"o}der, and Alan~H Barr.
\newblock Discrete differential-geometry operators for triangulated 2-manifolds.
\newblock In \emph{Visualization and mathematics III}, pages 35--57. Springer, 2003.

\bibitem[Mirebeau(2014)]{mirebeau2014efficient}
Jean-Marie Mirebeau.
\newblock Efficient fast marching with {F}insler metrics.
\newblock \emph{Numerische mathematik}, 126\penalty0 (3):\penalty0 515--557, 2014.

\bibitem[Ohta and Sturm(2009)]{ohta2009heat}
Shin-ichi Ohta and Karl-Theodor Sturm.
\newblock Heat flow on {F}insler manifolds.
\newblock \emph{Communications on Pure and Applied Mathematics: A Journal Issued by the Courant Institute of Mathematical Sciences}, 62\penalty0 (10):\penalty0 1386--1433, 2009.

\bibitem[Pend{\'a}s-Recondo(2024)]{pendas2024application}
Enrique Pend{\'a}s-Recondo.
\newblock On the application of lorentz-finsler geometry to model wave propagation.
\newblock \emph{arXiv preprint arXiv:2408.03206}, 2024.

\bibitem[Perona and Malik(1990)]{perona1990scale}
Pietro Perona and Jitendra Malik.
\newblock Scale-space and edge detection using anisotropic diffusion.
\newblock \emph{IEEE Transactions on pattern analysis and machine intelligence}, 12\penalty0 (7):\penalty0 629--639, 1990.

\bibitem[Reynolds(1993)]{reynolds1993hyperbolic}
William~F Reynolds.
\newblock Hyperbolic geometry on a hyperboloid.
\newblock \emph{The American mathematical monthly}, 100\penalty0 (5):\penalty0 442--455, 1993.

\bibitem[Romero et~al.(2021)Romero, Bruintjes, Tomczak, Bekkers, Hoogendoorn, and van Gemert]{romero2021flexconv}
David~W Romero, Robert-Jan Bruintjes, Jakub~M Tomczak, Erik~J Bekkers, Mark Hoogendoorn, and Jan~C van Gemert.
\newblock Flexconv: Continuous kernel convolutions with differentiable kernel sizes.
\newblock \emph{arXiv preprint arXiv:2110.08059}, 2021.

\bibitem[Rudin et~al.(1992)Rudin, Osher, and Fatemi]{rudin1992nonlinear}
Leonid~I Rudin, Stanley Osher, and Emad Fatemi.
\newblock Nonlinear total variation based noise removal algorithms.
\newblock \emph{Physica D: nonlinear phenomena}, 60\penalty0 (1-4):\penalty0 259--268, 1992.

\bibitem[Selvaraju et~al.(2017)Selvaraju, Cogswell, Das, Vedantam, Parikh, and Batra]{selvaraju2017grad}
Ramprasaath~R Selvaraju, Michael Cogswell, Abhishek Das, Ramakrishna Vedantam, Devi Parikh, and Dhruv Batra.
\newblock Grad-cam: {V}isual explanations from deep networks via gradient-based localization.
\newblock In \emph{Proceedings of the IEEE international conference on computer vision}, pages 618--626, 2017.

\bibitem[Sifre and Mallat(2013)]{sifre2013rotation}
Laurent Sifre and St{\'e}phane Mallat.
\newblock Rotation, scaling and deformation invariant scattering for texture discrimination.
\newblock In \emph{Proceedings of the IEEE conference on computer vision and pattern recognition}, pages 1233--1240, 2013.

\bibitem[Sifre and Mallat(2014)]{sifre2014rigid}
Laurent Sifre and St{\'e}phane Mallat.
\newblock Rigid-motion scattering for texture classification.
\newblock \emph{arXiv preprint arXiv:1403.1687}, 2014.

\bibitem[Simonyan and Zisserman(2014)]{simonyan2014vgg}
Karen Simonyan and Andrew Zisserman.
\newblock Very deep convolutional networks for large-scale image recognition.
\newblock \emph{arXiv preprint arXiv:1409.1556}, 2014.

\bibitem[Smith(2017)]{smith2017cyclical}
Leslie~N Smith.
\newblock Cyclical learning rates for training neural networks.
\newblock In \emph{2017 IEEE winter conference on applications of computer vision (WACV)}, pages 464--472. IEEE, 2017.

\bibitem[Sochen et~al.(1998)Sochen, Kimmel, and Malladi]{sochen1998general}
Nir Sochen, Ron Kimmel, and Ravi Malladi.
\newblock A general framework for low level vision.
\newblock \emph{IEEE transactions on image processing}, 7\penalty0 (3):\penalty0 310--318, 1998.

\bibitem[Sochen et~al.(2001)Sochen, Kimmel, and Bruckstein]{sochen2001diffusions}
Nir Sochen, Ron Kimmel, and Alfred~M. Bruckstein.
\newblock Diffusions and confusions in signal and image processing.
\newblock \emph{Journal of Mathematical Imaging and Vision}, 14:\penalty0 195--209, 2001.

\bibitem[Szegedy et~al.(2015)Szegedy, Liu, Jia, Sermanet, Reed, Anguelov, Erhan, Vanhoucke, and Rabinovich]{szegedy2015going}
Christian Szegedy, Wei Liu, Yangqing Jia, Pierre Sermanet, Scott Reed, Dragomir Anguelov, Dumitru Erhan, Vincent Vanhoucke, and Andrew Rabinovich.
\newblock Going deeper with convolutions.
\newblock In \emph{Proceedings of the IEEE conference on computer vision and pattern recognition}, pages 1--9, 2015.

\bibitem[Tomasi and Manduchi(1998)]{tomasi1998bilateral}
Carlo Tomasi and Roberto Manduchi.
\newblock Bilateral filtering for gray and color images.
\newblock In \emph{Sixth international conference on computer vision (IEEE Cat. No. 98CH36271)}, pages 839--846. IEEE, 1998.

\bibitem[Varadhan(1967)]{varadhan1967behavior}
Sathamangalam R~Srinivasa Varadhan.
\newblock On the behavior of the fundamental solution of the heat equation with variable coefficients.
\newblock \emph{Communications on Pure and Applied Mathematics}, 20\penalty0 (2):\penalty0 431--455, 1967.

\bibitem[Wang et~al.(2023)Wang, Dai, Chen, Huang, Li, Zhu, Hu, Lu, Lu, Li, et~al.]{wang2023internimage}
Wenhai Wang, Jifeng Dai, Zhe Chen, Zhenhang Huang, Zhiqi Li, Xizhou Zhu, Xiaowei Hu, Tong Lu, Lewei Lu, Hongsheng Li, et~al.
\newblock Internimage: Exploring large-scale vision foundation models with deformable convolutions.
\newblock In \emph{Proceedings of the IEEE/CVF conference on computer vision and pattern recognition}, pages 14408--14419, 2023.

\bibitem[Weber et~al.(2024{\natexlab{a}})Weber, Dag{\`e}s, Gao, and Cremers]{weber2024finsler}
Simon Weber, Thomas Dag{\`e}s, Maolin Gao, and Daniel Cremers.
\newblock Finsler-{L}aplace-{B}eltrami operators with application to shape analysis.
\newblock \emph{arXiv preprint arXiv:2404.03999}, 2024{\natexlab{a}}.

\bibitem[Weber et~al.(2024{\natexlab{b}})Weber, Z{\"o}ng{\"u}r, Araslanov, and Cremers]{weber2024flattening}
Simon Weber, Bar Z{\"o}ng{\"u}r, Nikita Araslanov, and Daniel Cremers.
\newblock Flattening the parent bias: Hierarchical semantic segmentation in the poincar{\'e} ball.
\newblock In \emph{Proceedings of the IEEE/CVF Conference on Computer Vision and Pattern Recognition}, pages 28223--28232, 2024{\natexlab{b}}.

\bibitem[Xiao et~al.(2017)Xiao, Rasul, and Vollgraf]{xiao2017fashionmnist}
Han Xiao, Kashif Rasul, and Roland Vollgraf.
\newblock Fashion-{MNIST}: {A} novel image dataset for benchmarking machine learning algorithms.
\newblock \emph{arXiv preprint arXiv:1708.07747}, 2017.

\bibitem[Yang et~al.(2019)Yang, Bender, Le, and Ngiam]{yang2019condconv}
Brandon Yang, Gabriel Bender, Quoc~V Le, and Jiquan Ngiam.
\newblock Condconv: {C}onditionally parameterized convolutions for efficient inference.
\newblock \emph{Advances in neural information processing systems}, 32, 2019.

\bibitem[Yang et~al.(2018)Yang, Chai, Chen, and Cohen]{yang2018geodesic}
Fang Yang, Li Chai, Da Chen, and Laurent~D. Cohen.
\newblock Geodesic via asymmetric heat diffusion based on {F}insler metric.
\newblock In \emph{Asian Conference on Computer Vision (ACCV), Springer}, pages 371--386, 2018.

\bibitem[Yu et~al.(2022)Yu, Jiao, Liu, Li, Liu, Yang, and Tang]{yu2022entire}
Bingqi Yu, Licheng Jiao, Xu Liu, Lingling Li, Fang Liu, Shuyuan Yang, and Xu Tang.
\newblock Entire deformable convnets for semantic segmentation.
\newblock \emph{Knowledge-Based Systems}, 250:\penalty0 108871, 2022.

\bibitem[Yu and Koltun(2015)]{yu2015multiscale}
Fisher Yu and Vladlen Koltun.
\newblock Multi-scale context aggregation by dilated convolutions.
\newblock \emph{arXiv preprint arXiv:1511.07122}, 2015.

\bibitem[Yu et~al.(2017)Yu, Koltun, and Funkhouser]{yu2017dilated}
Fisher Yu, Vladlen Koltun, and Thomas Funkhouser.
\newblock Dilated residual networks.
\newblock In \emph{Proceedings of the IEEE conference on computer vision and pattern recognition}, pages 472--480, 2017.

\bibitem[Zhu et~al.(2019)Zhu, Hu, Lin, and Dai]{zhu2019moredeformable}
Xizhou Zhu, Han Hu, Stephen Lin, and Jifeng Dai.
\newblock Deformable convnets v2: {M}ore deformable, better results.
\newblock In \emph{Proceedings of the IEEE/CVF conference on computer vision and pattern recognition}, pages 9308--9316, 2019.

\end{thebibliography}
}

% WARNING: do not forget to delete the supplementary pages from your submission 
\clearpage
\setcounter{page}{1}
\maketitlesupplementary

\appendix

% \begin{table*}[H]
\begin{strip}
    \vspace{-2em}
    %\Huge
    % \begin{center}
    % \vskip 0.1in
    \centering
        % \captionof{table}{
        %     Table of notations.
        % }%
        % \label{tab: notations}
        % \vspace{0.5\baselineskip} 
        \resizebox{\textwidth}{!}{%
        %\begin{small}
            % \begin{sc}
                % \begin{tabular}{c l l}
                \begin{tabular}{l l p{30cm}}
                    \toprule
                    Symbol & Type & Description \\
                    \midrule
                     $\alpha$ & $\in \mathbb{R}_+$ & Strictly positive number, either appearing in the dual Finsler metric or as the anisotropy gain factor parameter. \\
                     $B_1^g(x)$ & $\subset{X}$ & Unit geodesic ball (UGB) at point $x$.\\
                     $B_1^t(x)$ & $\subset T_xX$ & Unit tangent ball (UTB) at point $x$.\\
                     $c$ & $\in \mathbb{N}$ & Feature dimensionality. Grayscale images have $c=1$ and are the default in this work. Colour images have $c=3$, and image-like grid data can have arbitrary number of features $c$. \\
                     $c(t)$ & $\in X$ & Curve on the manifold, with $t$ varying from $0$ to $1$. \\
                     $c_{\textit{in}}$ & $\in \mathbb{N}$ & Input number of channels. \\
                     $c_{\textit{out}}$ & $\in \mathbb{N}$ & Output number of channels. \\
                     $d$ & $\in \mathbb{N}$ & Manifold dimensionality. For regular images, the manifold is a surface and $d=2$. \\
                     $\Delta$ & $\subset \Omega$ & Support of the convolution kernel. \\
                     $\Delta^{\textit{def}}_x$ & $\subset \Omega$ & Deformed kernel support when considering pixel location $x$. \\
                     $\Delta^{\textit{dil}}$ & $\subset \Omega$ & Uniformly dilated kernel support. \\
                     $\Delta^{\textit{dil}}_x$ & $\subset \Omega$ & Dilated kernel support when considering pixel location $x$. \\
                     $\Delta^{\textit{ent}}_x$ & $\subset \Omega$ & Shifted kernel support when considering pixel location $x$. \\
                     $\Delta^{\textit{ref}}$ & $\subset \Omega$ & Reference support. In the discrete world, it is typically the usual kernel $k\times k$ grid. \\
                     $\delta_{\textit{MSE}}$ & $\in \mathbb{R}_+$ & Normalised generalisation gap of the $\textit{MSE}$. \\
                     $\delta_x$ & $\in \mathbb{R}^d$ & Offset sampling vector at point $x$ for all pixels in the reference support of the convolution kernel. \\
                     $\delta_x^y$ & $\in \mathbb{R}^d$ & Offset sampling vector at point $x$ for pixel $y$ in the reference support of the convolution kernel. \\
                     $\delta_{x,t}$ & $: \Omega \to \mathbb{R}^c$ & Diffused Dirac image at time step $t$. \\
                     $\mathrm{dist}_F(x,y)$ & $\in \mathbb{R}_+$ & Geodesic distance according to the metric $F$ from points $x$ to $y$. \\
                     $dt$ & $\in \mathbb{R}_+$ & Geodesic diffusion time step. \\
                     $\varepsilon$ & $\in\mathbb{R}_+$ & Small positive number used for numerical stability.\\
                     $\varepsilon_L$ & $\in \mathbb{R}_+$ & Small scalar controlling the maximum scale of the metric. \\
                     $\varepsilon_\omega$ & $\in (0,1]$ & Hyperparameter controlling the maximum tolerated asymmetry, with $1$ being no asymmetry. \\
                     $\eta$ & $\in\mathbb{R}_+$ & Learning rate. \\
                     $f$ & $:\Omega \to \mathbb{R}^c$ & Image with $c$ colour channels. By default in this paper, it has $c=1$ channel. \\
                     $F$ & $: X \times T_xX\to \mathbb{R}_+$ & Finsler metric on the tangent bundle $X \times T_xX$ of manifold $X$. \\
                     $F^*$ & $: X \times T_xX\to \mathbb{R}_+$ & Dual Finsler metric. \\
                     $F_x$ & $:T_xX \to \mathbb{R}_+$ & Finsler metric on the tangent plane $T_xX$ at point $x$ on the manifold $X$. \\
                     $F^\gamma$ & $: X \times T_xX\to \mathbb{R}_+$ & Finsler metric parametrised by $\gamma$. \\
                     $F_x^\gamma$ & $:T_xX \to \mathbb{R}_+$ & Finsler metric, parametrised by $\gamma$, at point $x$. \\
                     $g$ & $:\Omega \to \mathbb{R}^c$ & Convolution kernel. It can be viewed as an image, just like $f$, or we can focus only on its support. Its values $g(y)$ are the weights of the convolution kernel. \\
                     $\gamma$ & $\in\mathbb{R}^p$ & Metric parameters, such as $M$ for Riemannian metrics and $(M,\omega)$ for Randers metrics. \\
                     $h_x$ & $: \Omega \to \mathbb{R}_+$ & Finsler-Gauss kernel. \\
                     $\iota$ & $\in\mathbb{R}_+$ & Average metric scale parameter. \\
                     $k$ & $\in \mathbb{N}$ & Edge size of discrete convolution filters, containing $k\times k$ values for two-dimensional image convolution. \\
                     $L$ & $\in \mathbb{R}^{d\times d}$ & Lower triangular matrix with positive diagonal defining the Cholesky decomposition of $M = LL^\top$. For images, it is a $2\times 2$ matrix with only $3$ non-zero entries. \\
                     $\tilde{L}$ & $\in \mathbb{R}^{d\times d}$ & Slightly modified version of the lower triangular matrix $L$ by $\tilde{L} = L + \varepsilon_L I$ to control the maximum scale of the metric. \\
                     $\Lambda$ & $\in\mathbb{R}^{d\times d}$ & Diagonal matrix. \\
                     $\lambda$ & $\in \mathbb{R}_+$ & Strictly positive scalar. \\
                     $\lambda_i$ & $\in\mathbb{R}$ & Scalar, $i$-th eigenvalue. \\
                     $\tilde{\lambda}_i$ & $\in\mathbb{R}_+$ & Positive and stable modification of $\lambda_i$. \\
                     $\lambda_i'$ & $\in\mathbb{R}_+$ & Unscaled modification of $\lambda_i$. \\
                     $M$ & $:X \to S_d^{++}$ & Riemannian tensor. It fully encodes the Riemannian metric. At point $x$, $M(x)$ is a symmetric positive definite matrix. For standard image manifolds, $M(x)$ is $2\times 2$. For conciseness, $M$ also refers to $M(x)$ without ambiguity. \\
                     $\tilde{M}$ & $\in\mathbb{R}^{d\times d}$ & Perturbed version of the estimated metric $M$ for stability. \\
                     $M^*$ & $:X \to S_d^{++}$ & Riemannian tensor of the dual Randers metric of the Randers metric with parameters $(M,\omega)$. \\
                     $m_x(y)$ & $\in [0,1]$ & Modulation value for the convolution weight $g(y)$ when considering the pixel location $x$. \\
                     $\nabla f(x)$ & $\in \mathbb{R}^d$ & Gradient of $f$ at pixel $x$. \\
                     $\nabla f(x)^\perp$ & $\in \mathbb{R}^d$ & Orthogonal of the gradient $\nabla f(x)$. \\
                     $\lVert \cdot\rVert_A$ & $\in \mathbb{R}_+$ & $L_2$ norm of a vector with symmetric positive definite matrix $A$, meaning $\lVert u\rVert_A = \sqrt{u\top A u}$.\\
                     $\Omega$ & $\subset \mathbb{R}^d$ & Parametrisation domain. For images, it can be seen as the unit square of $\mathbb{R}^2$. \\
                     $\omega$ & $: X\to T_xX$ & $\omega(x)$ is the tangent vector parametrising the linear drift component of a Randers metric at point $x$. \\
                     $\omega^*$ &  $: X\to T_xX$ & Linear drift vector field of the dual Randers metric of the Randers metric with parameters $(M,\omega)$. \\
                     $\tilde{\omega}$ & $: X\to T_xX$ & Modified scaled version of $\omega(x)$. \\
                     $P_x^\gamma$ & $: \mathbb{R}^{d+1} \to \Omega$ & Stencil of points to be geodesically flowed at unit speed. \\
                     $\mathbb{R}_+$ & $\subset \mathbb{R}$ & Set of positive numbers (includes $0$). \\
                     $R$ & $\in \mathbb{R}^{d\times d}$ & Rotation matrix (or a triangular matrix in the QR decomposition). \\
                     $R$ & $: X \times T_xX\to \mathbb{R}_+$ & Riemannian metric on the tangent bundle $X \times T_xX$ of manifold $X$. \\
                     $R_x$ & $:T_xX \to \mathbb{R}_+$ & Riemannian metric on the tangent plane $T_xX$ at point $x$ on the manifold $X$. \\
                     $r$ & $\in\mathbb{R}^d$ & Vector whose normalisation is the first column of a rotation matrix. \\
                     $\tilde{r}$ & $\in\mathbb{R}^d$ & Perturbed version of $r$ for stability. \\
                     $\tilde{r}_\perp$ & $\in\mathbb{R}^d$ & Orthogonal vector to $\tilde{r}$. \\
                     $s$ & $\in \mathbb{R}_+$ & Dilation factor. It is global as it is the same at all pixel locations $x$. It is also used as the radius integration variable when using polar coordinates. \\
                     $\tilde{s}$ & $\in\mathbb{R}_+$ & Estimated eigenvalue scale. \\
                     $s_0$ & $\in \mathbb{R}_+$ & Initial scaling factor for geodesic flow of a stencil of points. \\
                     $s_{\mathrm{max}}$ & $\in\mathbb{R}_+$ & Maximum tolerated eigenvalue scale. \\
                     $s_{\mathrm{min}}$ & $\in\mathbb{R}_+$ & Minimum tolerated eigenvalue scale. \\
                     $s_x$ & $\in \mathbb{R}_+$ & Dilation factor at point $x$. It is local as it can differ between pixel locations $x$. \\
                     $\sigma$ & $: \mathbb{R}\to \mathbb{R}_+$ & Sigmoid function. \\
                     $\tilde{\sigma}$ & $: \mathbb{R}\to \mathbb{R}_+$ & Modified sigmoid function. \\
                     $\sigma_n$ & $\in \mathbb{R}_+$ & Standard deviation of white Gaussian noise. It encodes the noise level. \\
                     $T_{\mathrm{max}}$ & $\in\mathbb{R}_+$ & Temperature hyperparameter of the Adam optimiser. \\ 
                     $T_xX$ & $\equiv \mathbb{R}^d$ & Tangent space at point $x$ of the manifold $X$. For standard image manifolds, it is a two-dimensional plane and can be associated as $\mathbb{R}^2$ for a fixed coordinate system. \\
                     $\theta$ & $\in [0,2\pi]$ & Angle. \\
                     $\theta_x$ & $\in [0,2\pi]$ & Angle of the image gradient. \\
                     $u$ & $\in T_xX$ & Tangent vector. \\
                     $u_\theta$ & $\in \mathbb{R}^2$ & Euclidean unit vector with angle $\theta$: $u_\theta = (\cos \theta, \sin\theta)^\top$. \\
                     $X$ & $\subset \mathbb{R}^d$ & Manifold. For images, it is the same as the parametrisation space $\Omega$, and thus can be viewed as the unit square of $\mathbb{R}^2$. This perspective is different from a common convention where images are viewed as embedded manifolds, for instance colour images would be curved two-dimensional surfaces embedded in $\mathbb{R}^3$. \\
                     $x$ & $\in X$ & Point of the manifold. For image manifolds, it is by extension the pixel coordinate position in $\Omega$. \\
                     $y$ & $\in \Omega$ & Pixel location of the convolution kernel. \\
                     $\Delta_x$ & $\subset \Omega$ & Support of the convolution kernel when considering the pixel location $x$. \\
                     $y_x(\theta,\gamma)$ & $\in B_1^t(x)$ & Point on the unit tangent circle of $x$ with angle $\theta$ for the metric $F^\gamma$. \\
                     $\mathcal{Z}(x)$ & $\in\mathbb{R}_+$ & Normalisation factor in the Finsler-Gauss kernel. \\
                     $z_{k, c_{\textit{in}}}$ & $\in\mathbb{R}_+$ & Uniform initialisation value of kernel weights. \\
                     \midrule
                     CNN & Acronym & Convolutional neural networks. \\
                     FKW & Acronym & Fixed kernel weights. It is an experimental setting where convolution kernels have fixed uniform weights. \\
                     LKW & Acronym & Learnable kernel weights. It is an experimental setting where convolution weights are learned. \\
                     MSE & Acronym & Mean squared error. \\
                     SC & Acronym & Scratch, for training convolution kernels from scratch. \\
                     TL & Acronym & Transfer learning, for training convolution kernels using transfer learning. \\
                     UGB & Acronym & Unit geodesic ball: set of points on the manifold within unit geodesic length according to the metric. \\
                     UTB & Acronym & Unit tangent ball: convex set of tangent vectors of unit length according to the metric. \\
                     \bottomrule
                \end{tabular}
            % \end{sc}
        %\end{small}
        }
    % \end{center}
    %\vskip -0.1in
    % \caption{
    % Table of notations
    % }
    % \label{tab: notations}
    % \vspace{0.5\baselineskip} 
    \captionof{table}{
        Table of notations. Most notations are used only in the appendix.
    }%
    \label{tab: notations}
% \end{table*}
\end{strip}

\clearpage

\section{Finsler and Randers Metrics: Further Details}

We refer the interested reader for more information on Finsler and Randers metrics to the specialised Finsler literature, such as \cite{ohta2009heat,mirebeau2014efficient,bonnans2022linear}. The details mentioned in this section are well-known in the community, but we put them here so that our paper is self-contained.

\subsection{{Finsler Metric Axioms}}
\label{sec: metric axioms}

We provide in \cref{fig: metric axioms} a simple visualisation of the Finsler metric axioms. The triangular inequality is equivalent to the convexity of the unit tangent balls. Positive homogeneity implies that the metric scales with the same ratio as tangent vectors when considering only positive scaling (no flips). This differs from regular homogeneity, as in Riemannian metrics, where the metric scales with the absolute value of the ratio of tangent vectors. In a homogeneous metric, if $\lambda>0$ and $u\in T_xX$, then $\lambda u$ and $-\lambda u$ both have their metric scaled by the same number $|\lambda| = \lambda$, making the metric symmetric. In contrast, in a positively homogenous metric, the metric at vectors oppositely scaled $\lambda u$ and $-\lambda u$ is different, as $F_x(\lambda u) = \lambda F_x(u)$ and $F_x(-\lambda u) = \lambda F_x(-u)$, where $F_x(u)\neq F_x(-u)$ in general.
Note that a homogeneous metric is also positively homogeneous. For this reason, Riemannian metrics, which are root-quadratic homogeneous metrics, are special cases of Finsler metrics (see \cref{fig: venn diagram finsler metrics}). Randers metrics are special cases of Finsler metrics containing Riemannian metrics, but  when $\omega\neq 0$ they only satisfy the positive homogeneity.

\subsection{Randers Metric Positivity}

The positivity of the Randers metric $F$ is ensured by $\lVert \omega\rVert_{M^{-1}} < 1$. In fact, we can generalise the statement in the following proposition that links the Randers metric with the regular $L_2$ norm.

\begin{proposition}
    \label{prop: randers metric positivity}
    Let $0<\varepsilon<1$. If for any point $x$ on the manifold we have $\lVert \omega(x)\rVert_{M^{-1}(x)} \le 1-\varepsilon$, then the metric satisfies $F_x(u) \ge \varepsilon \lVert u\rVert_2$ for any $u\in T_xX$. In particular, if $\lVert \omega\rVert_{M^{-1}(x)} < 1$, then $F_x(u)>0$ for any $u\neq 0$.
\end{proposition}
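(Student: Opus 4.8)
# Proof Plan for Proposition (Randers Metric Positivity)

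The plan is to reduce the statement to a single application of the Cauchy--Schwarz inequality in the inner product induced by $M(x)$. Fix a point $x$ and abbreviate $M = M(x)$, $\omega = \omega(x)$. Recall that $F_x(u) = \sqrt{u^\top M u} + \omega^\top u = \lVert u\rVert_M + \omega^\top u$, where $\lVert \cdot \rVert_M$ is the norm associated to the positive definite matrix $M$. The only term that can be negative is the linear drift $\omega^\top u$, so the whole game is to bound $\omega^\top u$ from below by something comparable to $-\lVert u\rVert_M$, and then to relate $\lVert u\rVert_M$ to $\lVert u\rVert_2$.

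First I would handle the drift term. Writing $\omega^\top u = \omega^\top M^{-1} M u = \langle M^{-1}\omega, u\rangle_M$, the Cauchy--Schwarz inequality in the $M$-inner product gives $|\omega^\top u| \le \lVert M^{-1}\omega\rVert_M \, \lVert u\rVert_M = \lVert \omega\rVert_{M^{-1}} \lVert u\rVert_M$, since $\lVert M^{-1}\omega\rVert_M^2 = \omega^\top M^{-1} M M^{-1}\omega = \omega^\top M^{-1}\omega = \lVert\omega\rVert_{M^{-1}}^2$. Combining this with the hypothesis $\lVert\omega\rVert_{M^{-1}} \le 1-\varepsilon$ yields $\omega^\top u \ge -(1-\varepsilon)\lVert u\rVert_M$, hence
\[
F_x(u) = \lVert u\rVert_M + \omega^\top u \ge \lVert u\rVert_M - (1-\varepsilon)\lVert u\rVert_M = \varepsilon \lVert u\rVert_M.
\]

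The remaining step is to replace $\lVert u\rVert_M$ by $\lVert u\rVert_2$. Since $M$ is symmetric positive definite, its smallest eigenvalue $\lambda_{\min}(M)$ is strictly positive and $\lVert u\rVert_M^2 = u^\top M u \ge \lambda_{\min}(M)\lVert u\rVert_2^2$, so $\lVert u\rVert_M \ge \sqrt{\lambda_{\min}(M)}\,\lVert u\rVert_2$. Strictly speaking this gives $F_x(u) \ge \varepsilon\sqrt{\lambda_{\min}(M(x))}\,\lVert u\rVert_2$; to get the clean statement $F_x(u)\ge \varepsilon\lVert u\rVert_2$ as written one either absorbs this constant by a harmless normalisation of the ambient metric (assuming $M(x) \succeq I$, which costs nothing since metrics are defined up to scaling and the relevant content is the comparison up to a positive constant), or simply reads the conclusion as ``$F_x$ is bounded below by a positive multiple of $\lVert\cdot\rVert_2$''. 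Finally, for the last sentence: if $\lVert\omega\rVert_{M^{-1}(x)} < 1$ then picking $\varepsilon = 1 - \lVert\omega\rVert_{M^{-1}(x)} > 0$ in the above gives $F_x(u) \ge \varepsilon\sqrt{\lambda_{\min}(M(x))}\,\lVert u\rVert_2 > 0$ whenever $u \neq 0$, and positive homogeneity does the rest.

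The only real subtlety — not so much an obstacle as a point to state carefully — is the gap between the displayed inequality $F_x(u)\ge\varepsilon\lVert u\rVert_M$ (which is sharp and coordinate-free) and the claimed $F_x(u)\ge\varepsilon\lVert u\rVert_2$; this is where one must either invoke a normalisation convention on $M$ or interpret the $L_2$ comparison up to the spectral constant $\sqrt{\lambda_{\min}(M)}$. Everything else is a direct computation with Cauchy--Schwarz.
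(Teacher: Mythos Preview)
Your proof is correct and follows essentially the same Cauchy--Schwarz argument as the paper; the only cosmetic difference is that you work directly in the $M$-inner product, whereas the paper substitutes $u\mapsto M^{-1}u$ and applies the Euclidean Cauchy--Schwarz with $M^{-1/2}$ factors. Your caution about the passage from $\varepsilon\lVert u\rVert_M$ to $\varepsilon\lVert u\rVert_2$ is well placed: the paper's own proof makes this same leap without justification (it asserts $F_x(M^{-1}u)\ge\varepsilon\lVert M^{-1}u\rVert_2$ where only $\varepsilon\lVert M^{-1/2}u\rVert_2$ has actually been established), so the inequality as stated does require an implicit normalisation $M\succeq I$ or a spectral constant, exactly as you identify. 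The positivity conclusion, which is all that is used elsewhere in the paper, is unaffected either way.
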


\begin{proof}
    All tangent vectors of $T_xX$ can be rewritten as $M(x)^{-1}u$. We then have
\begin{align}
    &F_x(M(x)^{-1}u) = \sqrt{u^\top M(x)^{-1} u} + \omega(x)^\top  M(x)^{-1}u\\
    &\quad= \lVert M(x)^{-\tfrac{1}{2}}u\rVert_2 + \langle M(x)^{-\tfrac{1}{2}}\omega(x),  M(x)^{-\tfrac{1}{2}}u\rangle.
\end{align}

The Cauchy-Schwartz inequality provides that
\begin{equation}
    |\omega(x)^\top  M(x)^{-1}u | \le \lVert M(x)^{-\tfrac{1}{2}}\omega(x) \rVert_2 \lVert M(x)^{-\tfrac{1}{2}} u\rVert_2.
\end{equation}
The assumption on $\omega$ can be rewritten as 
\begin{equation}
    \lVert M(x)^{-\tfrac{1}{2}}\omega(x)\rVert \le 1-\varepsilon.    
\end{equation}
Thus, for any tangent vector $u$, we get $F_x(M(x)^{-1}u) \ge \varepsilon \lVert M(x)^{-1}u\rVert_2$, and so for any such $u$, we obtain the desired result $F_x(u) \ge \varepsilon\lVert u\rVert_2$.
\end{proof}

\begin{figure}[t]
    \centering
    \includegraphics[width=\columnwidth]{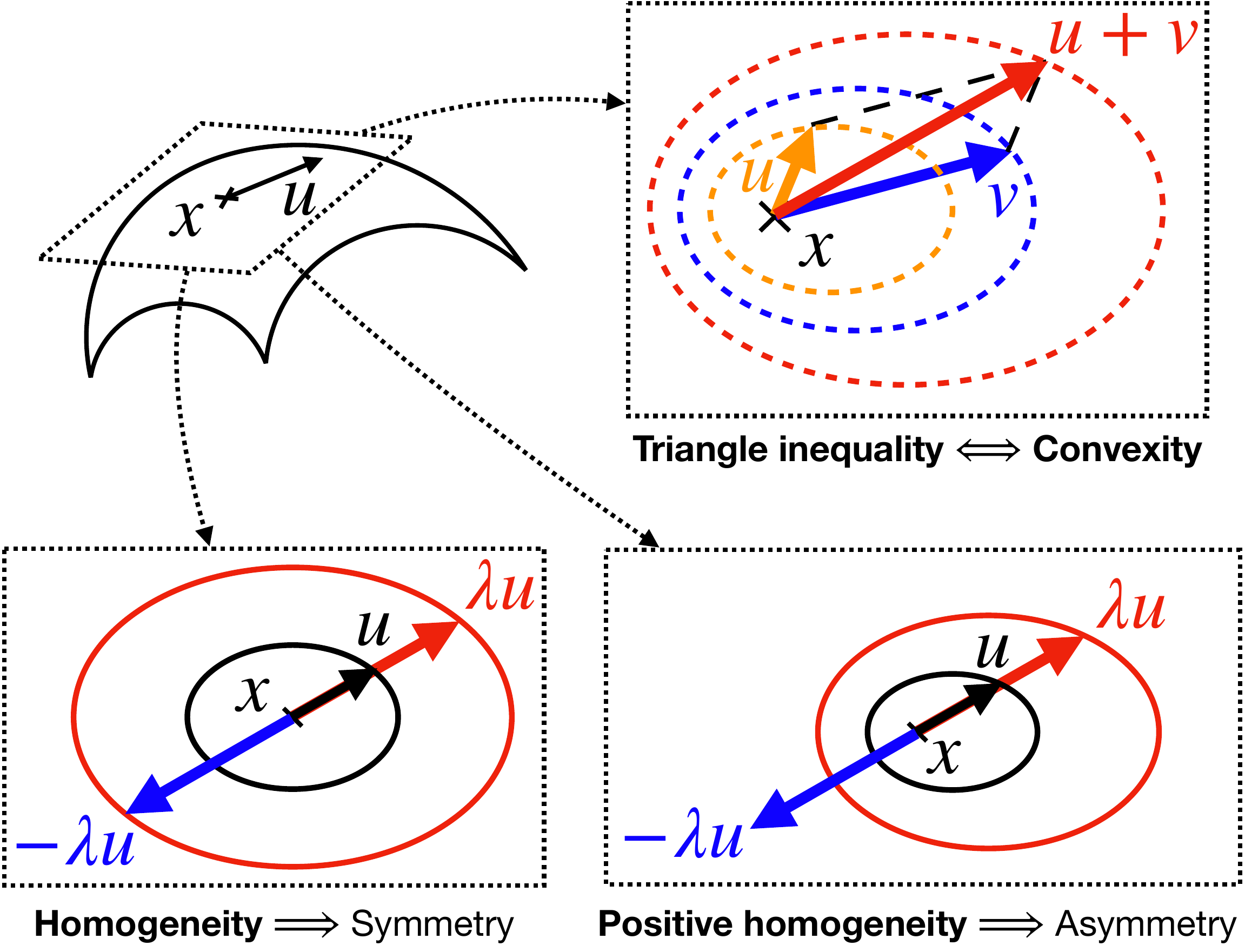}
    \caption{Illustration of metric axioms. The triangular inequality is equivalent to the convexity of unit tangent balls. Finsler metrics are positively homogeneous, which is less restrictive than regular homogeneity and allows asymmetric distances. In contrast, Riemannian metrics are homogeneous and thus always symmetric.}
    \label{fig: metric axioms}
\end{figure}

\begin{figure}[t]
    \centering
    \includegraphics[width=0.6\columnwidth]{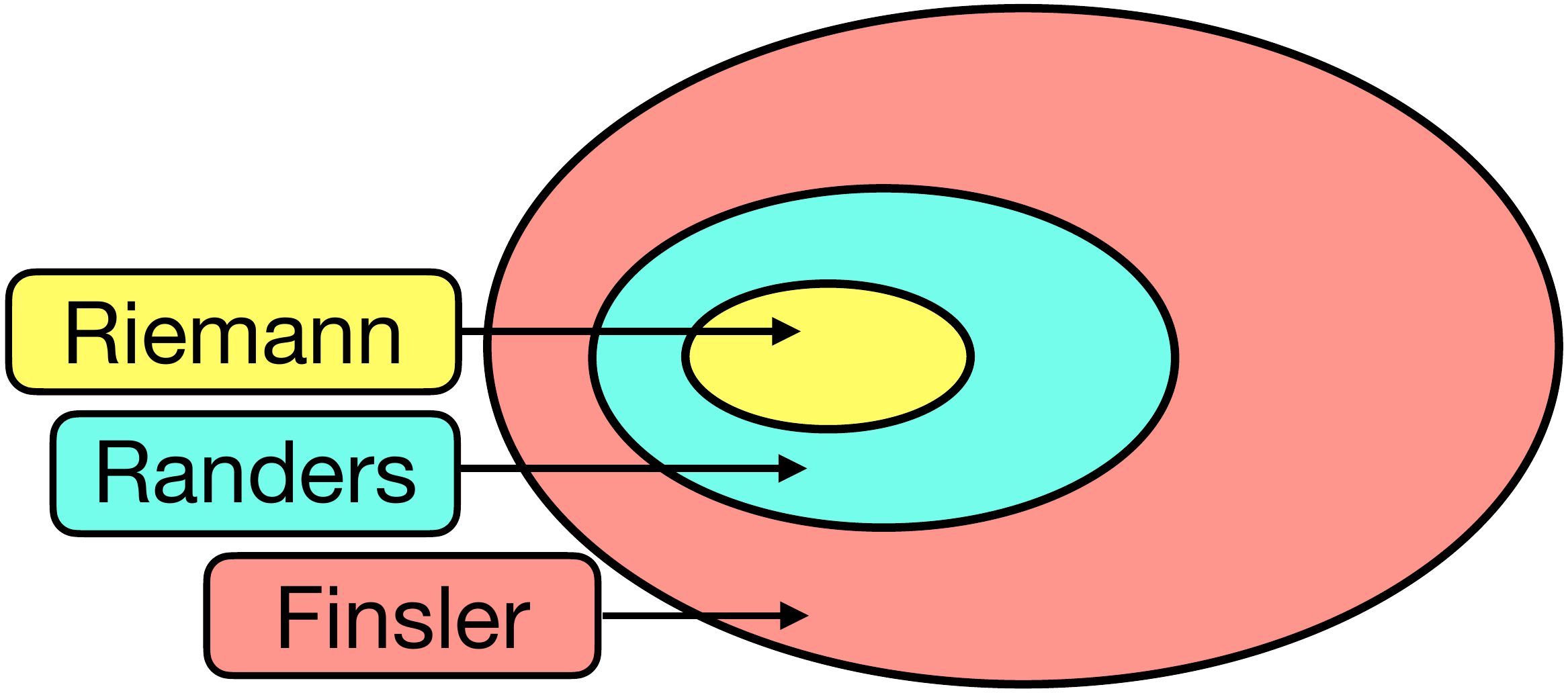}
    \caption{Venn diagram of Finsler metrics. Riemannian metrics are a special case of symmetric Finsler metrics. Randers metrics contain Riemannian ones, but are in general asymmetric.}
    \label{fig: venn diagram finsler metrics}
\end{figure}

\subsection{Finsler Geodesic Distances}
\label{sec: finsler geodesic distances}

Given a Finsler metric $F$, the geodesic distance $\mathrm{dist}_F(x,y)$ between points $x$ and $y$ on the manifold is given by the minimum length of a smooth curve $c(t)$ from $x$ to $y$
\begin{equation}
    \mathrm{dist}_F(x,y) = \min\limits_{\substack{c(t)\\c(0) = x \text{ and } c(1) = y}} \int_{[0,1]}F_{c (t)}\big(\tfrac{\partial c}{\partial t}(t)\big)dt.
\end{equation}
In particular, the orientation $x$ to $y$ is important for non Riemannian asymmetric metrics, as then $F_{c (t)}\big(\tfrac{\partial c}{\partial t}(t)\big)$ may differ from $F_{c (t)}\big(-\tfrac{\partial c}{\partial t}(t)\big)$.

\subsection{Dual Randers Metric}
\label{sec: dual randers metric}

The dual metric of a Finsler metric plays a key role in differential geometry on manifolds as it systematically appears in major differential equations, such as the Eikonal equation or the heat diffusion equation. Formally the dual metric of $F$ is the metric $F^*$ such that
\begin{equation}
    \label{eq: dual Finsler metric}
    F_x^{*}(u) = \max\{u^\top v ;\; v \in T_xX, F_x(v) \leq 1\}.    
\end{equation} 
One can easily verify that it satisfies the Finsler metric axioms. If $F$ is a Randers metric parameterised by $(M,\omega)$, then the dual metric $F^*$ is also a Randers metric. It is parameterised by $(M^*,\omega^*)$, which are explicitly given by $(M,\omega)$.

\begin{proposition}
    \label{prop: dual randers formula}
    The dual of a Randers metric $F$ parameterised by $(M,\omega)$ is a Randers metric $F^*$. If we denote $\alpha = 1 - \lVert \omega\rVert_{M^{-1}}^2 >0$, then the parameters $(M^*,\omega^*)$ of $F^*$ are given by
    \begin{equation*}
        \begin{cases}
            M^{*} = \frac{1}{\alpha^2}\Big(\alpha M^{-1} + (M^{-1}\omega)(M^{-1}\omega)^{\top}\Big), \\
            \omega^{*} = -\frac{1}{\alpha}M^{-1}\omega.            
        \end{cases}
    \end{equation*}
\end{proposition}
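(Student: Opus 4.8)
The plan is to compute the dual metric directly from its definition \eqref{eq: dual Finsler metric} and verify that the resulting expression is exactly the Randers metric with parameters $(M^*, \omega^*)$ as claimed, or alternatively to verify the claim by checking that the dual of the proposed Randers metric $(M^*, \omega^*)$ returns $(M, \omega)$ (exploiting that dualisation is an involution). I lean toward the first route. First I would reduce the maximisation $F_x^*(u) = \max\{ u^\top v : F_x(v) \le 1\}$ to a cleaner form. Writing $F_x(v) = \sqrt{v^\top M v} + \omega^\top v$, the constraint set $\{F_x(v)\le 1\}$ is the off-centred ellipse described after \eqref{eq: finsler unit circle}, namely $v^\top M v \le (1 - \omega^\top v)^2$ together with $\omega^\top v \le 1$. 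A convenient substitution is $w = M^{1/2} v$, so the constraint becomes $\lVert w\rVert_2 \le 1 - \tilde\omega^\top w$ with $\tilde\omega = M^{-1/2}\omega$ (note $\lVert\tilde\omega\rVert_2^2 = \lVert\omega\rVert_{M^{-1}}^2 = 1-\alpha$), and the objective becomes $\tilde u^\top w$ with $\tilde u = M^{-1/2} u$. So it suffices to treat the normalised problem $\max\{ \tilde u^\top w : \lVert w \rVert_2 + \tilde\omega^\top w \le 1\}$ and then translate back.

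For the normalised problem, the maximum is attained on the boundary $\lVert w\rVert_2 + \tilde\omega^\top w = 1$. I would parametrise $w = r\, n$ with $n$ a unit vector and $r = 1/(1 + \tilde\omega^\top n) > 0$ (positivity of the Randers metric guarantees the denominator is positive since $|\tilde\omega^\top n| \le \lVert\tilde\omega\rVert_2 < 1$). Then the objective is $g(n) = \tilde u^\top n / (1 + \tilde\omega^\top n)$, a ratio of affine functions of $n$ on the sphere, which can be maximised with a Lagrange multiplier: $\nabla$ of $\tilde u^\top n / (1+\tilde\omega^\top n)$ proportional to $n$ gives $\tilde u (1 + \tilde\omega^\top n) - (\tilde u^\top n)\tilde\omega \parallel n$, i.e. $n$ lies in $\mathrm{span}\{\tilde u, \tilde\omega\}$ and a short computation pins it down. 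Carrying this through should yield $F_x^*(u)^2$ as a quadratic form in $u$ plus a linear term; matching it against $\bigl(\sqrt{u^\top M^* u} + \omega^{*\top} u\bigr)$ identifies $M^*$ and $\omega^*$. The algebra will reproduce the stated formulas: the rank-one correction $(M^{-1}\omega)(M^{-1}\omega)^\top/\alpha^2$ and the $\alpha M^{-1}/\alpha^2 = M^{-1}/\alpha$ term in $M^*$, and $\omega^* = -M^{-1}\omega/\alpha$.

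An attractive shortcut, which I would actually present as the proof, is to avoid the optimisation entirely: define $(M^*,\omega^*)$ by the stated formulas, form the candidate dual Randers metric $\hat F$, and verify the pairing identity characterising duality, namely that $\hat F_x(u) = \max\{u^\top v : F_x(v)\le 1\}$. By a classical fact (equivalently, that the Legendre/polar dual of a Randers unit ball is again a Randers-type ellipse), it is enough to check that the unit balls are polar duals: $\{u : \hat F_x(u) \le 1\}^\circ = \{v : F_x(v) \le 1\}$, or even just that applying the formula twice returns $(M,\omega)$, i.e. $\alpha^* := 1 - \lVert\omega^*\rVert_{M^{*-1}}^2$ satisfies the right relation and $(M^{**},\omega^{**}) = (M,\omega)$; combined with the fact that $F\mapsto F^*$ is an involution on Finsler metrics and maps Randers to Randers, this forces the formulas. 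The main obstacle is the bookkeeping with the rank-one (Sherman--Morrison) updates: one must compute $M^{*-1}$ explicitly, check $M^{*-1}\omega^*$ and $\lVert\omega^*\rVert_{M^{*-1}}$, and confirm $\alpha \alpha^* $-type cancellations — routine but error-prone. I would organise it by first recording $M^{*-1}$ via Sherman--Morrison (it comes out proportional to $\alpha M + \omega\omega^\top$ up to scalars), then evaluating the two required quantities, and only then assembling the identity; the positivity hypothesis $\lVert\omega\rVert_{M^{-1}} < 1$ (so $\alpha > 0$) is used throughout to justify divisions and the direction of inequalities.
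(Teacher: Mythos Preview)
Your first route --- the $M^{1/2}$ substitution followed by maximising $\tilde u^\top n / (1+\tilde\omega^\top n)$ over the unit sphere --- is sound and would yield the result, though it differs from the paper's argument. The paper instead uses positive homogeneity to rewrite the dual as $\tfrac{1}{F^*(u)} = \min\{F(v): u^\top v = 1\}$ and attacks this with a Lagrange multiplier: stationarity gives $Mv^*/\lVert v^*\rVert_M + \omega + \lambda u = 0$, pairing with $v^*$ shows $\lambda = -F(v^*) = -1/F^*(u)$, and taking the $M^{-1}$-norm of both sides produces $\lVert \omega + \lambda u\rVert_{M^{-1}} = 1$, a quadratic in $\lambda$ whose negative root is $-1/F^*(u)$. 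A single rationalisation of the resulting denominator then exhibits the Randers form directly. Your approach trades this for a spherical parametrisation and a tangential stationarity condition; the algebra is comparable, with the paper's version arguably tidier because the Randers structure emerges from one square-root rationalisation rather than from assembling a quadratic and linear part separately.

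Your proposed ``shortcut'', however --- checking only that the parameter map $(M,\omega)\mapsto (M^*,\omega^*)$ is an involution and then invoking that Finsler duality is an involution which ``maps Randers to Randers'' --- has a genuine gap. First, ``duality maps Randers to Randers'' is precisely part of what the proposition asserts, so citing it is circular. Second, even granting it, there are many involutions on the Randers parameter space (the identity map, for one), so matching the involution property alone cannot force your formula to coincide with the duality map. You would still need an honest verification, either the polar-duality of unit balls you mention or, equivalently, the pairing inequality $u^\top v \le \hat F(u)\,F(v)$ for all $u,v$ with equality attainable for each $u$; once you do that, the Sherman--Morrison involution bookkeeping becomes redundant. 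If you want a verification-style proof, go straight to that pairing inequality --- after the $M^{1/2}$ change of variables it is a single Cauchy--Schwarz-type estimate and simultaneously proves the dual is Randers and identifies $(M^*,\omega^*)$.
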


\begin{proof}
    To ease notations, we drop the explicit dependence on $x$.
    Although the definition of the dual Randers metric is given in  \cref{eq: dual Finsler metric}, since $F$ is positive homogeneous it is also given by
    \begin{equation}
        F^*(u) = \max \left\{\frac{u^\top v}{F(v)} ;\; v\neq 0 \right\}.
    \end{equation}
    Therefore, the inverse solves the minimisation problem $\tfrac{1}{F^*(u)} = \min \left\{\tfrac{F(v)}{u^\top v} ;\; v\neq 0 \right\}$. Likewise, by positive homogeneity, we have that the inverse of the dual metric satisfies the constrained minimisation problem
    \begin{equation}
        \label{eq: 1 / F* = min F}
        \frac{1}{F^*(u)} = \min \{F(v) ;\; u^\top v = 1 \}.
    \end{equation}
    We can also solve this constrained optimisation problem with Lagrangian optimisation. The Lagrangian is given by $L(v, \lambda) =  F(v) + \lambda u^\top v$. To satisfy the KKT conditions, we differentiate $L$ with respect to $v$ and set the gradient to $0$. The optimal $v^*$ thus satisfies
    \begin{equation}
        \label{eq: randers dL/db = 0}
        M\frac{v^*}{\lVert v^* \rVert_M} + \omega + \lambda u = 0.
    \end{equation}
    By computing the scalar product of this equation with $v^*$, and recalling that the constraint guarantees $u^\top v^* = 1$, we get, since $F(v^*) = \sqrt{{v^*}^\top M v^*} + {v^*}^\top \omega$, that $\lambda = -F(v^*)$. Recall that $v^*$ solves the minimisation problem \cref{eq: 1 / F* = min F}. Therefore,

    \begin{equation}
        \label{eq: randers lambda = - 1/F*}
        \lambda = -\frac{1}{F^*(v)}.
    \end{equation}
    Returning to \cref{eq: randers dL/db = 0}, we can compute $\lVert \omega + \lambda u \rVert_{M^{-1}}$ as
    \begin{equation}
        \lVert \omega + \lambda u \rVert_{M^{-1}} = \frac{\lVert Mv^* \rVert_{M^{-1}}}{\lVert v^* \rVert_M} = 1.
    \end{equation}
    By squaring this equation, we obtain a polynomial of degree two for which $\lambda$ is a root
    \begin{equation}
        \lambda^2 \lVert u \rVert_{M^{-1}}^2  + 2\lambda \langle \omega, u\rangle_{M^{-1}} + \lVert \omega\rVert_{M^{-1}}^2 - 1 = 0.
    \end{equation}
    Let $\alpha = 1 - \lVert \omega \rVert_{M^{-1}}^2 >0$. The roots are then given by
    \begin{equation}
        \lambda_\pm = \frac{-\langle \omega, u\rangle_{M^{-1}} \pm \sqrt{
    \langle \omega, u\rangle_{M^{-1}}^2 + \lVert u\rVert_{M^{-1}}^2 \alpha}}{\lVert u \rVert_{M^{-1}}^2}.
    \end{equation}
    Clearly, we have $\lambda_- < 0 < \lambda_+$ for $u\neq 0$. However, $\lambda < 0$ as $\lambda = -F(v^*)$ and the metric is always positive. As such, $\lambda = \lambda_-$. Inverting \cref{eq: randers lambda = - 1/F*}, we get
    \begin{align}
        F^*(u) &= \frac{\lVert u\rVert_{M^{-1}}^2}{\langle \omega, u\rangle_{M^{-1}} + \sqrt{\langle \omega, u\rangle_{M^{-1}}^2 + \lVert u\rVert_{M^{-1}}^2 \alpha}} \\
        &= \sqrt{u^\top \frac{1}{\alpha^2}\left( \alpha M^{-1} + M^{-1}\omega \omega^\top M^{-1}\right) u} \nonumber\\
        &\quad\quad-\frac{1}{\alpha} \langle \omega, u\rangle_{M^{-1}},
    \end{align}
    where the classical trick $\tfrac{1}{x+\sqrt{y}} = \tfrac{x-\sqrt{y}}{x^2-y}$ was used to remove the square root from the denominator. We now identify the dual metric $F^*$ as a Randers metric associated to $(M^*,\omega^*)$ as initially claimed 
    \begin{equation}
        \begin{cases}
            M^* = \frac{1}{\alpha^2}(\alpha M^{-1} + (M^{-1}\omega) (M^{-1}\omega)^\top ), \\
            \omega^* = -\frac{1}{\alpha}M^{-1}\omega.
        \end{cases}
    \end{equation}

\end{proof}

\paragraph{Going further.}
When studying the propagation of a wave front \cite{mirebeau2014efficient}, the dual metric naturally appears yielding the Finsler Eikonal equation
\begin{equation}
    F_x^*(-\nabla f) = 1.
\end{equation}
Likewise, by observing that the heat equation in the Riemannian case is given by the gradient flow of the Dirichlet energy, we can descend on the dual energy $\tfrac{1}{2}F_x^*(u)^2$ to define the Finsler heat equation \cite{ohta2009heat,bonnans2022linear}
\begin{equation}
    \label{eq: finsler heat equation}
    \frac{\partial f}{\partial t} = \mathrm{div}(F_x^*(\nabla f)\nabla F_x^*(\nabla f)).
\end{equation}
From the heat equation we can then compute various fundamental operators, mainly the Laplace-Finsler operator, a generalisation of the Laplace-Beltrami operator, that describes the shape \cite{bonnans2022linear}. These interesting constructions are beyond the scope of this paper. For the interested reader, we point out the nice presentation and exploration of Finsler and Randers metrics and heat equation, leading to the Finsler-based Laplace-Beltrami operator \cite{weber2024finsler}, which was used instead of the traditional Laplacian to solve the shape matching problem \cite{bracha2023partial,bracha2024unsupervised,bracha2024wormhole}.
.
Additionally, we refer to \cite{dages2025finsler} for a recent example of a beautiful application of elementary Finsler geometry for manifold learning and dimensionality reduction of asymmetric data.

\subsection{Finsler and Hyperbolic geometry}

Finsler geometry is a generalisation of Riemannian geometry where distances can depend on both position and direction, leading to more flexible and varied shapes. Within Finsler geometry, some spaces show properties similar to hyperbolic geometry \cite{beardon2012geometry,reynolds1993hyperbolic,weber2024flattening,lin2021hyperbolic,lin2023hyperbolic,lin2025treewasserstein}, a special type of Riemannian geometry having constant negative curvature. Extending further, Finsler-Lorentz geometry \cite{caponio2011interplay,alvarez2015introduction,pendas2024application} allows the metric to have a Lorentzian signature, meaning it can describe not just spatial distances, but also time and causality as in relativity, all while keeping the direction-dependent qualities of Finsler spaces. This helps model more general and realistic behaviours in physics and geometry.

\section{From Single-Channel Images to Manfifolds with Any Intrinsic or Feature Dimensions}

\subsection{Generalising our Unifying Metric Theory}
\label{sec: generalising unifying metric theory to arbitrary manifolds}

\paragraph{Preliminaries.}
An image is a special type of two-dimensional manifold $X$, where for each point $x\in X=\Omega=[0,1]^2$ on the manifold we associate a feature vector $f(x)\in \mathbb{R}^c$. An other equivalent popular perspective for image manifolds is to see them as surfaces $X\subset \Omega\times \mathbb{R}^c \subset \mathbb{R}^{d + c}$ of the form $(x, f(x))$, but we do not adopt this perspective in this paper. The image feature dimensionality $c$ is also called the number of channels of the image.
Image manifolds can be generalised beyond two-dimensions. A $d$-dimensional hyperimage is a manifold $X = \Omega = [0,1]^d$ with feature function $f:X\to \mathbb{R}^c$. The manifold dimensionality of a hyperimage is $d$, regardless of the number of channels of the hyperimage: when embedding the hyperimage in $\mathbb{R}^{d+c}$, the manifold $(x, f(x))$ for $x\in \Omega$ is locally $d$-dimensional, meaning its tangent space $T_xX$ can be associated with $\mathbb{R}^d$. Points $x\in \Omega$ parametrising the hyperimage are called hypervoxels, generalising the concept of pixels when $d=2$ and voxels when $d=3$.

The particularity of image and hyperimage manifolds is that they possess a global $uv$ parametrisation via $x\in\Omega = [0,1]^d$. For general manifolds, finding such a parametrisation is only possible locally\footnote{For instance there is no global smooth $uv$ parametrisation of the Klein bottle, even though it is a smooth two-dimensional manifold.}. Depending on the topology of the manifold, it may be possible to find such a global $uv$ parametrisation, as for instance in genus $0$ manifolds (topological planes or spheres). For general manifolds $X$ satisfying this global $uv$ parametrisation, e.g.\ a closed surface in $\mathbb{R}^3$, a manifold hyperimage can be viewed as a manifold enriched with a texture feature function $f:X\to\mathbb{R}^c$. Thanks to the global parametrisation, this formulation is mathematically equivalent to regular hyperimages where $X = \Omega = [0,1]^d$, e.g.\ $[0,1]^2$ for a closed surface in $\mathbb{R}^3$.

We are now ready to generalise our unifying theory on image convolutions to hyperimage convolutions.

\paragraph{Theoretical generalisation.}
Given a hyperimage of dimensionality $d$ having $c_{\mathit{in}}$ channels associated to the function $f:\Omega=[0,1]^d\to \mathbb{R}^{c_{\mathit{in}}}$, and a kernel hyperimage of dimensionality $d$ with $c_{\mathit{out}}\times c_{\mathit{in}}$ channels associated to the function $g:\Omega\to \mathbb{R}^{c_{\mathit{out}}\times c_{\mathit{in}}}$. A hyperimage convolution produces a $d$-dimensional hyperimage with $c_{\mathit{out}}$ channels, and is defined per output channel as
\begin{equation}
    \label{eq: convolution high dims per channel}
    (f*g)_j (x) = \sum\limits_{i=1}^{c_{\mathit{in}}}\int_\Omega g_{j,i}(y) f_i(x+y)dy,
\end{equation}
for $j\in\{1,\cdots, c_{\mathit{out}}\}$. This definition can be summarised in matrix-vector form as
\begin{equation}
    (f*g) (x) = \int_\Omega g(y) f(x+y)dy,
\end{equation}
which uses the same formalism as \cref{eq: conv}.

Our entire theoretical discussion in \cref{sec: unifying metric theory to convs} then immediately generalises to hyperimages of any dimensionality and feature channels. In particular, \cref{th: reformulation convs} would still apply. For instance, after discretising $\Omega=[0,1]^d$ into hypervoxels, the reference kernel support $\Delta$ can be given by a $k\times \cdots \times k$ grid with $k^d$ hypervoxels, e.g.\ with hypervoxel indices $\{(i_1, \cdots, i_d)\in \{-1,1,0\}^d\}$ for $k=3$. Dilation would scale this reference support by a factor $s$ in all $d$ dimensions. Shifted convolutions would shift the support by $\delta_x\in \mathbb{R}^d$, and deformable convolution would associate a shift vector $\delta_x^y\in\mathbb{R}^d$ for each of the $k^d$ cells.

Note that the hypervoxel grid discretisation is a standard procedure for hyperimages as they share a global $uv$ parametrisation via $x\in\Omega=[0,1]^d$. This procedure though is less frequent for manifolds equivalent to hyperimages, such as genus $0$ manifolds like closed two-dimensional surfaces. Nevertheless, our formalism is not reliant on a specific type of sampling, it only requires the ability to approximately query feature values at non sampled locations of the manifold. This is usually possible for most parametrisations via some form of interpolation of feature values.

Finally, consider general manifolds that are not equivalent to hyperimages, where a global $uv$ parametrisation does not exist. Our formulation still generalises to them as long as the local kernel support $\Delta_x$ is small enough within the local neighbourhood where we can use a local $uv$ parametrisation to approximate the manifold around point $x$.

\subsection{Generalising Metric Convolutions}
\label{sec: generalising metric convolutions to arbitrary manifolds}

Metric convolutions easily generalise beyond single-channel image convolutions. 

\paragraph{Feature dimensionality.}
As defined in \cref{eq: convolution high dims per channel}, when images are not single-channel, the $j$-th output feature of the convolution is given by the aggregation of $c_{\mathit{in}}$ convolutions between the single-channel images $f_i$ and $g_{j,i}$. As such, to generalise metric convolutions to multi-channel data, it suffices to define $c_{\mathit{in}}$ single-channel metric convolutions for each output dimension. This procedure would require $c_{\mathit{out}}\times c_{\mathit{in}}$ single-channel metric convolutions to compute the convolutions from $c_{\mathit{in}}$ to $c_{\mathit{out}}$ channel images. This generalisation procedure is the same as for generalising standard convolutions. As the required number of kernel samples, $k\times k \times c_{\mathit{in}} \times c_{\mathit{out}}$ for $k\times k$ convolutions, linearly increases with the number of input channels this can make high channel convolution operation particularly costly. To overcome the issue, the kernel $g$ can be made sparse using separability, such as $g_{j,i} = 0$ for $j\neq i$ in depthwise convolutions \cite{sifre2014rigid,sifre2013rotation,chollet2017xception,howard2017mobilenets} or $g_{j,i} = 0$ for $j\notin C_i$ in group convolutions \cite{krizhevsky2012alexnet}, where $C_i$ is a group of input feature dimensions\footnote{Standard convolution has a single group, i.e.\ $C_i = \{1,\cdots, c_{\mathit{in}}\}$, whereas depthwise convolutions has as many groups as input (and output) features, i.e.\ $C_i = \{i\}$.} for dimension $j$. The same can be done in metric convolutions. By using (block) separable filters, the size of the convolutions can drastically decrease, as depthwise convolutions use $k\times k \times c_\mathit{in}$ kernel locations and group convolutions use $\tfrac{k\times k \times c_\mathit{in} \times c_\mathit{out}}{G}$ for $G$ groups.

\paragraph{Manifold dimensionality.}
For a $d$-dimensional manifold $X$, its tangent spaces $T_xX$ are $d$-dimensional spaces and can be associated with $\mathbb{R}^d$. Given a fixed Cartesian parametrisation of $\mathbb{R}^d$, the Riemannian metric of such a manifold is defined as in two-dimensions: $R_x(u) = \sqrt{u^\top M(x) u}$. However, now $u\in T_xX=\mathbb{R}^d$ is a $d$-dimensional vector and the metric tensor $M(x)\in\mathbb{R}^{d\times d}$ is a $d\times d $ symmetric positive definite matrix. Likewise, Finsler metrics generalise in a similar fashion. In particular, the Randers metric of a $d$-dimensional manifold is then given by $F_x(u) = \sqrt{u^\top M(x) u} + \omega(x)^\top u$, where now $\omega(x)\in T_xX=\mathbb{R}^d$ is a $d$-dimensional vector. With these generalised definitions, we can then perform minor modifications to generalise our metric convolutions to $d$-dimensional manifolds. Note that we focus on hyperimage manifolds, which provide a shared universal parametrisation of the manifold and of the tangent spaces, given by the canonical Cartesian coordinates of $\mathbb{R}^d$. This avoids the need to compute changes of local coordinate systems, simplifies derivations, and allows simple interpolation of features beyond sampled values.

For illustration, we here present a generalisation of UTB metric convolution. Given the metric parameters $M(x)\in\mathbb{R}^d$ and $\omega(x)\in\mathbb{R}^d$, we need to compute explicitly the unit tangent ball, which is now a convex hypersurface of dimension $d-1$ in the tangent plane rather than a curve in the two-dimensional tangent plane. As it is convex and containing $0$, it can be parametrised via $d-1$ angular spherical coordinates $\theta\in\mathbb{R}^{d-1}$. For Randers mertrics, the UTB is a hyperellipsoid given by the equation $u^\top M(x)  u= (1 - \omega(x)^\top u)^2$ in $u$. Using this spherical reparametrisation, \cref{eq: finsler unit circle} still holds for points on the unit ball. Thus, the metric UTB convolution formula \cref{eq: metric utb f*g} still holds, with now $\theta$ being $d-1$-dimensional angular spherical coordinates and $s\in\mathbb{R}$ is the scalar radius coordinate.

We can generalise our discretisations of metric convolutions. Focusing once again for simplicity on UTB metric convolutions, the Cholesky decomposition approach would require encoding the symmetric positive definite matrix $M$ as $M = LL^\top$ where $L$ is a lower triangular matrix, thus encoded by $\tfrac{d(d-1)}{2}$ values. For the spectral approaches, we would need to compute $d$ values for the eigenvalues of $M$. To compute the orthogonal eigenbasis $U$, like in the $2$-dimensional case, several approaches are possible. We could use the exponential map $U = e^S$ for some skew-symmetric matrix $S = -S^\top$, parametrised by $\tfrac{d(d-1)}{2}$ unconstrained entries. Another option is to encode $U$ via Givens rotations $U = R_1\cdots R_{{d(d-1)}/{2}}$ where each Givens rotation is a planar rotation in the plane of two of the coordinate axes, requiring more parameters for encoding as we need the angles (or their cosine and sine) and the chosen axes. Another option is to take the $QR$ decomposition, since any matrix $A$ can be decomposed as $A = QR$ where $Q$ is orthogonal and $R$ is upper triangular. We could thus encode $U = AR^{-1}$ requiring $d^2 + \tfrac{d(d-1)}{2} = \tfrac{d(3d-1)}{2}$ parameters, and making sure that the diagonal of $R$ is non-zero\footnote{For instance by feeding the diagonal to any positive non-linearity followed by a positive perturbation of $\varepsilon$.}. Regarding $\omega$, we can encode it in the same way as in the two-dimensional case, requiring now $d$ numbers instead of $2$. These decompositions can be tweaked manually according to heuristics, or can once again be learned. Note that the numbers required to compute $M$ can either be learned directly or can be learned as output of an intermediate standard $d$-dimensional convolutions, which would guarantee signal-adaptive shift-equivariant metric computation and thus shift-equivariant metric convolution, as in the two-dimensional case.

Note that an important aspect of our convolutions is knowing how to query features at non-sampled locations. When using hyperimages, the canonical axes of the domain $\Omega = \mathbb{R}^d$ are universal and shared by all hyperimages and are the same in each tangent plane of each pixel position. In particular, non sampled locations are guaranteed to fall within a hypercube of uniformly sampled hypervoxels grid points. This implies that feature interpolation requires only a small number of support samples with analytically known locations and analytically derivable interpolation formula. This makes the interpolation cheap and fast to implement, and would be more challenging if we were working on non-hyperimage data with irregular samples locations that differ between data instances (e.g.\ pointclouds).

%%%%%%%%%%%%%%%%%%%%%%%%%%%%%%%%%%%%%%%%%%%%%%%%%%%%%%%%%%%%%%%%%%%%%%

\section{Proofs of Our Unifying Metric Theory}

\subsection{Proof of \cref{th: reformulation convs}}
\label{sec: proof reformulated convs}

Unlike most of the community, we are rephrasing the pre-existing convolutions in the continuum. For now, we put aside modulation, which breaks the weight sharing assumption of convolution. Dilated convolutions scale by a factor $s$ the reference support $\Delta^{\textit{ref}}$, usually uniformly in the image, to provide a dilated support $\Delta^{\textit{dil}}$:  $\Delta^{\textit{dil}} = s\Delta^{\textit{ref}}$. Dilated convolution is thus given by 
\begin{equation}
    (f*g)(x) = \int_{\Delta^{\textit{ref}}} f(x+s y) g(s y) dy.
\end{equation}

In the non-standard case of using a different scale $s_x$ for each pixel $x$, the support changes per pixel $\Delta_x^{\textit{dil}} = s_x\Delta^{\textit{ref}}$, and then dilated convolution is defined as
\begin{equation}
    (f*g)(x) = \int_{\Delta^{\textit{ref}}} f(x+s_x y) g(s_x y) dy.
\end{equation}

In all cases, dilated convolutions can be rewritten as
\begin{equation}
    (f*g)(x) = \int_{\Delta_x^{\textit{dil}}} f(x+y) g(y) dy.
\end{equation}

Shifted convolutions, also called entire deformable convolutions, simply shift the reference support $\Delta^{\textit{ref}}$ by an offset $\delta_x$ shared by all cells
\begin{equation}
    (f*g)(x) = \int_{\Delta^{\textit{ref}}} f(x + y + \delta_x) g(y + \delta_x).
\end{equation}

As such, if we denote $\Delta_x^{\textit{ent}} = \delta_x + \Delta^{\textit{ref}}$, we have for entire deformable convolutions
\begin{equation}
    (f*g)(x) = \int_{\Delta_x^{\textit{ent}}} f(x+y) g(y) dy.
\end{equation}

On the other hand, deformable convolution introduces different offset vectors $\delta_x^y$ for each entry in the reference kernel support $\Delta^{\textit{ref}}$. Therefore, it is given by
\begin{equation}
    (f*g)(x) = \int_{\Delta^{\textit{ref}}} f(x + y + \delta_x^y) g(y + \delta_x^y) dy.
\end{equation}

If we now write the deformed support $\Delta_x^{\textit{def}} = \{y+\delta_x^y; \forall y\in\Delta^{\textit{ref}}\}$, then deformed convolution can be rewritten as
\begin{equation}
    (f*g)(x) = \int_{\Delta_x^{\textit{def}}} f(x+y) g(y) dy.
\end{equation}

We have thus managed to express these various convolutions with the same formulation
\begin{equation}
    (f*g)(x) = \int_{\Delta_x} f(x+y) g(y) dy,
\end{equation}
where $\Delta_x$ is either $\Delta^{\textit{ref}}$ in the standard case, $\Delta_x^{\textit{dil}}$ in the dilated case, $\Delta_x^{\textit{ent}}$ in the entire deformable case, and $\Delta_x^{\textit{def}}$ in the deformable case.  If we now break the weight sharing assumption of any of these convolutions by introducing modulation, we have mask numbers $m_x(y)\in[0,1]$ that multiply the kernel values $g(y)$. As such, the convolutions become
\begin{equation}
    (f*g)(x) = \int_{\Delta_x} f(x+y) g(y) m_x(y) dy.
\end{equation}
We can then define the distribution $dm_x(y) = m_x(y) dy$ to prove the theorem and show that all these convolutions perform weighted filtering on some neighbourhood $\Delta_x$ sampled with distribution $dm_x$.
\qed

\subsection{On Modulation and Weight Sharing}
\label{sec: modulation and weight sharing assumption}

We here further present and discuss modulation and how it breaks the weight sharing assumption.

\paragraph{Weight Sharing.}
In order to provide deterministic sampling locations, the modulation $m_x(y)$ is not usually understood in the literature as a probabilistic density from which to sample kernel locations. Instead, it is rather viewed as modulation density, or mask, depending on the location $x$, that is applied to the $k\times k$ convolution weights at position $x$. 
This means that instead of computing at location $x$ the convolution between the two functions $f$ and $g$ with varying sampling distribution $dm_x(y)$, the convolution uses a uniform sampling distribution $dy$ but convolves between functions $f$ and each $g_x: y\mapsto g(y)m_x(y)$ at point $x$, explaining why the weights are not shared in modulation.

Additionally, implementation strategies of modulation can further deviate from the weight sharing assumption, removing ever more the essence of what makes a convolution.
There are essentially two main ways modulation is implemented in existing works. 

\paragraph{Modulating deformations.} 
The first approach, as in \cite{zhu2019moredeformable,yu2022entire}, is to encode the modulation as an array with as many entries as kernel samples, i.e.\ $k\times k$ values, that is to be pointwise multiplied with the kernel weights. This strategy does not aim to construct the modulation at all possible continuous locations in the continuous kernel, but only focuses on the sampled locations, and by doing so it is not constrained to any geometric prior. This implementation is particularly useful for sparse sampling locations, when $k$ is small and the reference grid $\Delta^{\textit{ref}}$ is heavily deformed. The mask values are then generally constrained to $[0,1]$ and optimised in the training process. While this approach slightly boosts performance, it is theoretically unnecessary, as zeroing out sample locations is less optimal that simply moving the filtered-out samples to better locations. In practice, the resulting modulation tends to focus on sample locations close to the original pixel, and filters out distant ones.

\paragraph{Modulating without deformation.}
Given the prior that mask weights should decay with distance, the second way to encode modulation is to define a parametric mask function satisfying the prior. This function defines the mask weights at all possible continuous locations of the continuous non-deformed kernel, and then needs to be queried at the sampled locations of the discrete kernel. A famous example is the FlexConv method \cite{romero2021flexconv} and its Gaussian modulation. Unlike the previous implementation, this approach is tailored for large kernel sizes $k\times k$, e.g.\ $k=25$ instead of $k=3$, making it expensive, as it relies on a continuous perspective of kernels\footnote{In fact, in FlexConv the kernel is encoded continuously with an implicit neural representation via a Multi-Layer Perceptron.}.  Importantly, in this strategy, the kernels are not deformed in any way $\Delta = \Delta^{\textit{ref}}$. By adopting large kernel size and not deforming the kernel, the discrete kernel densely approximates the continuous kernel.
Instead of explicit kernel deformation, Gaussian modulation, with learnable mean and covariance, is then pointwise applied to the kernel at each location $x$. This allows the operation to focus more on a part of the kernel map at each pixel location. 

From our universal metric perspective, convolution operations using modulation without deformations as in FlexConv can be understood from two perspectives. The first is that unit balls are always the same and large, equal to the reference kernel $\Delta^{\textit{ref}}$, but the sampling density is not uniform. In particular, the implicit metric is the uniformly scaled standard Riemannian metric. A more interesting perspective is to understand the unit balls of the implicit metrics to fit the elliptical effective support of the Gaussian kernel, with scale given by the covariance and asymmetrically offset by the mean. In this second perspective, we must relax the unit ball sampling assumption with a sampling that can extend beyond unit balls but with a smooth rather than binary decay. This new perspective might mislead us into thinking that such constructions are equivalent to our metric convolutions, both offsetting and deforming a circle into an ellipse, with FlexConv requiring to encode a larger global weight map to be cropped. However, this is incorrect and a major difference exists between both approaches.

We illustrate this discussion in \cref{fig: modulations}.
For explanation clarity, binarise the Gaussian mask: only an ellipse of weights are then considered. In metric convolutions or other approaches deforming a reference kernel, these methods deform a reference kernel shape, e.g.\ a disk, and its texture to provide a new shape, with identically deformed texture, eventually modulated. On the other hand, in adaptive methods relying on modulation like FlexConv, the transformation is fundamentally different. In the second understanding of this modulation, the reference kernel shape and its texture are not deformed identically. Instead, a crop of a larger texture map is performed. This implies that there is no consistency in the convolution weights at different pixel locations, even in the simplified case where we binarise the Gaussian weight mask. FlexConv-like modulation thus fundamentally breaks the weight sharing assumption even more so than other modulation techniques, putting into question the ``convolution'' nomenclature of such methods.

\begin{figure}[t]
    \centering
    \includegraphics[width=\columnwidth]{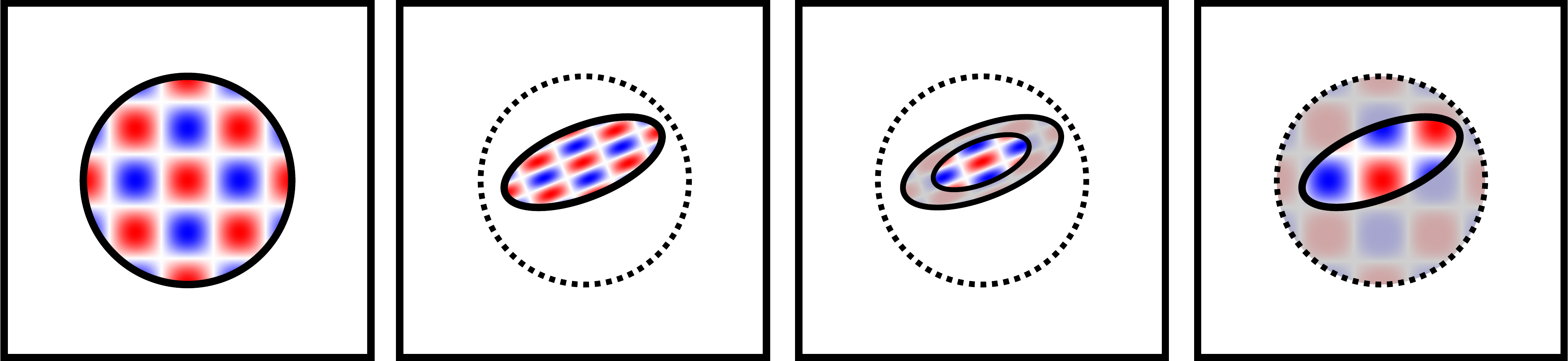}
    \caption{Illustration of different modulation strategies. Each image represents a kernel with weights given by the plotted texture. Left: reference kernel shape for standard convolution $\Delta^{\textit{ref}}$. Its texture $g$ is the reference weight map shared by all pixels. Centre left: the reference kernel shape and its associated texture, i.e.\ the weights, are scaled, deformed, and shifted, resulting in a different convolution support $\Delta\neq\Delta^{\textit{ref}}$ but same convolution weights. This case covers the approaches mentioned in the main manuscript, namely scaled, shifted, deformed, and our metric convolutions. Centre right: Modulation is added to the deformed convolution kernels, following the implementation strategy of \cite{zhu2019moredeformable,yu2022entire}. The weights of the kernel are preserved but some are attenuated by some factor. Right: Modulation is added by cropping the non-deformed reference kernel, e.g.\ via Gaussian masking as in FlexConv \cite{romero2021flexconv}. This requires an original kernel that is large enough, unlike the other methods, which is expensive. Additionally, weights of the kernel at the effectively sampled regions are no longer shared between points, even with binary modulation, unlike in the other methods. This second implementation implies that the statistics of the effective kernel weights are  fundamentally different from those using the first strategy.
    }
    \label{fig: modulations}
\end{figure}

\subsection{Proof of \cref{th: metric defined unit balls}}
\label{sef: proof metric unique tangent balls}

This result is well-known. It is a direct consequence of the positive homogeneity of the metric (see \cref{fig: finsler metric reconstruction from utb}).  Assume that the UTB $B_1^t(x)$ is given at any point $x$. Let $u\in T_xX$ be a non zero tangent vector. Then there exists a unique $v\in B_1^t(x)$ that is positively aligned with $u$, i.e.\ there exists $\lambda>0$ for which $v = \lambda u$, that has unit metric $F_x(v) = 1$. Note that $v$ is the intersection of the ray with direction $u$ (with origin $x$) with the boundary of $B_1^t(x)$. We can then define $F_x(u) = \lambda$. We also define $F_x(0) = 0$. We can then easily check that $F_x$ satisfies the positive homogeneity. Given any $\lambda'>0$, we have $\lambda' u = \lambda'\lambda v$. By construction of $F_x$, we thus have $F_x(\lambda' u) = \lambda'\lambda = \lambda' F_x(u)$. As easily, we can check that $F_x$ satisfies the triangular inequality and separability. As such, the provided UTBs implicitly defines a metric $F_x$.
\qed

\begin{figure}[t]
    \centering
    \includegraphics[width=\columnwidth]{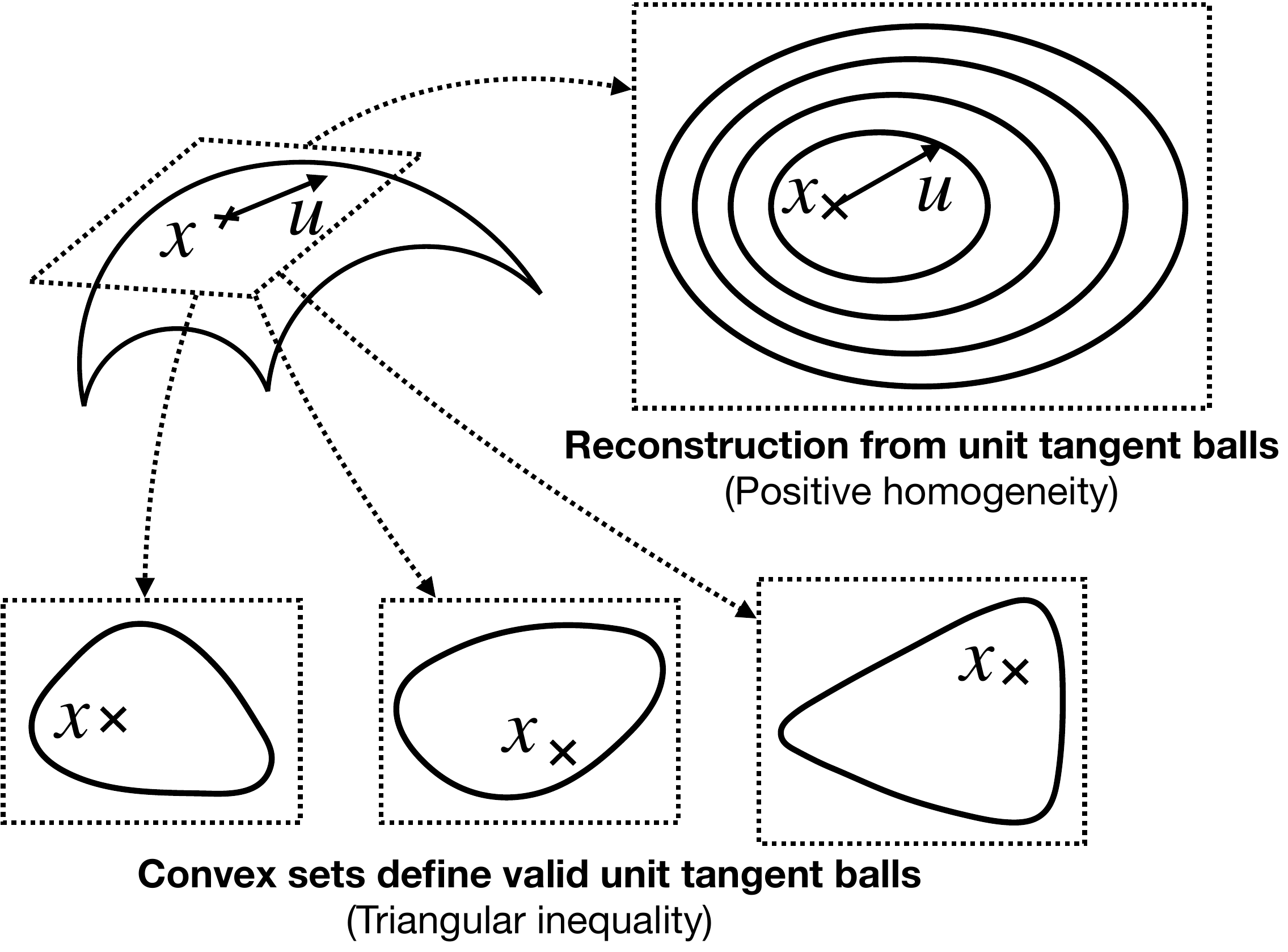}
    \caption{Thanks to their positive homogeneity, Finsler metrics are fully described by their unit tangent balls, which can be any convex set due to the triangular inequality.}
    \label{fig: finsler metric reconstruction from utb}
\end{figure}

\subsection{Reconstructing a Metric from Unit Geodesic Balls}
\label{sec: reconstructing metric from UGB}

Reconstructing the metric, or an approximation, is possible if further assumptions are introduced. For instance, if we are provided with the knowledge of distances within the unit ball, i.e.\ level sets within $B_1^g(x)$, or if the unit ball is sufficiently localised, i.e.\ $B_1^g(x)$ is sufficiently small (in Euclidean distance) to be approximated by its projection onto the tangent plane $T_xX$, then we can (approximately) reconstruct the unit tangent ball at point $x$ and from there use \cref{th: metric defined unit balls} to recover the entire metric.

The issue for reconstructing metrics from UGBs is that geodesic distances consist in an integration of the metric along the tangents of the geodesics. By performing this summation, we can lose local information on the original metric. As an extreme counter-example, consider a small sphere, with radius smaller than $\tfrac{1}{2\pi r}$, then the unit ball at any point for the isotropic Riemannian metrics $M\equiv sI$ with $s\le 1$ will cover the entire sphere, and likewise, other more complex metrics will provide the same unit ball. In this simplistic example, recovering the underlying metric is impossible without other prior knowledge.

\subsection{Example of non-unique Metrics Explaining Discretised Sample Locations}
\label{sec: example non unique metric for discrete samples}

Given a discretised sampling of a unit ball, the underlying continuous unit ball is ambiguous. As such, several metrics may provide continuous unit balls for which the given samples provide a good covering of its unit ball. We here provide two examples.

First, assume that we have finite samples and that an oracle provides us the information that these samples all lie on the unit tangent circle of some metric, i.e.\ $F_x(u) = 1$ for each of these samples $u$. Then any convex closed simple curve interpolating the provided samples yields a plausible Finsler metric $F_x$.

In general however, we are not aware of the distance of the sampled points within the unit ball. Sampled points do not necessarily lie at the same distance from $x$ and no oracle provides their distance. Consider the following example. Assume we are provided with the reference template $\Delta^{\textit{ref}}$ with $3\times 3$ samples. We want to interpret it as a natural sampling of the unit tangent ball of some metric. A first natural possibility is to invoke the isotropic Riemannian Euclidean distance, for which the unit ball is the round disk\footnote{It would be scaled so that the radius of the unit circle is in $[\sqrt{2},2)$ in Euclidean measurements}. Another possibility is to consider the traditional non-Riemannian $L^\infty$ metric yielding unit balls in the shape of squares with straight edges parallel to those of the domain axes. Unlike the isotropic Euclidean suggestion, in the $L^\infty$ one the samples on the border of the convex hull of the samples all lie on the unit circle of the metric.

\subsection{Implicit Metrics of Existing Convolutions}
\label{sec: implicit metrics of existing convolutions}

We provide in \cref{fig: adaptive neighbourhoods sampled} an intuitive visualisation of the metrics used for existing convolution methods. All of them can be well-approximated by adopting a tangent perspective to unit ball sampling of implicit Riemannian (for standard and dilated convolutions) or Finsler (for shifted and deformable convolutions) metrics. Our metric convolutions explicitly compute metrics and their unit balls, which then provide analytical sampling locations.

\begin{figure*}
    \centering
    \includegraphics[width=0.8\textwidth]{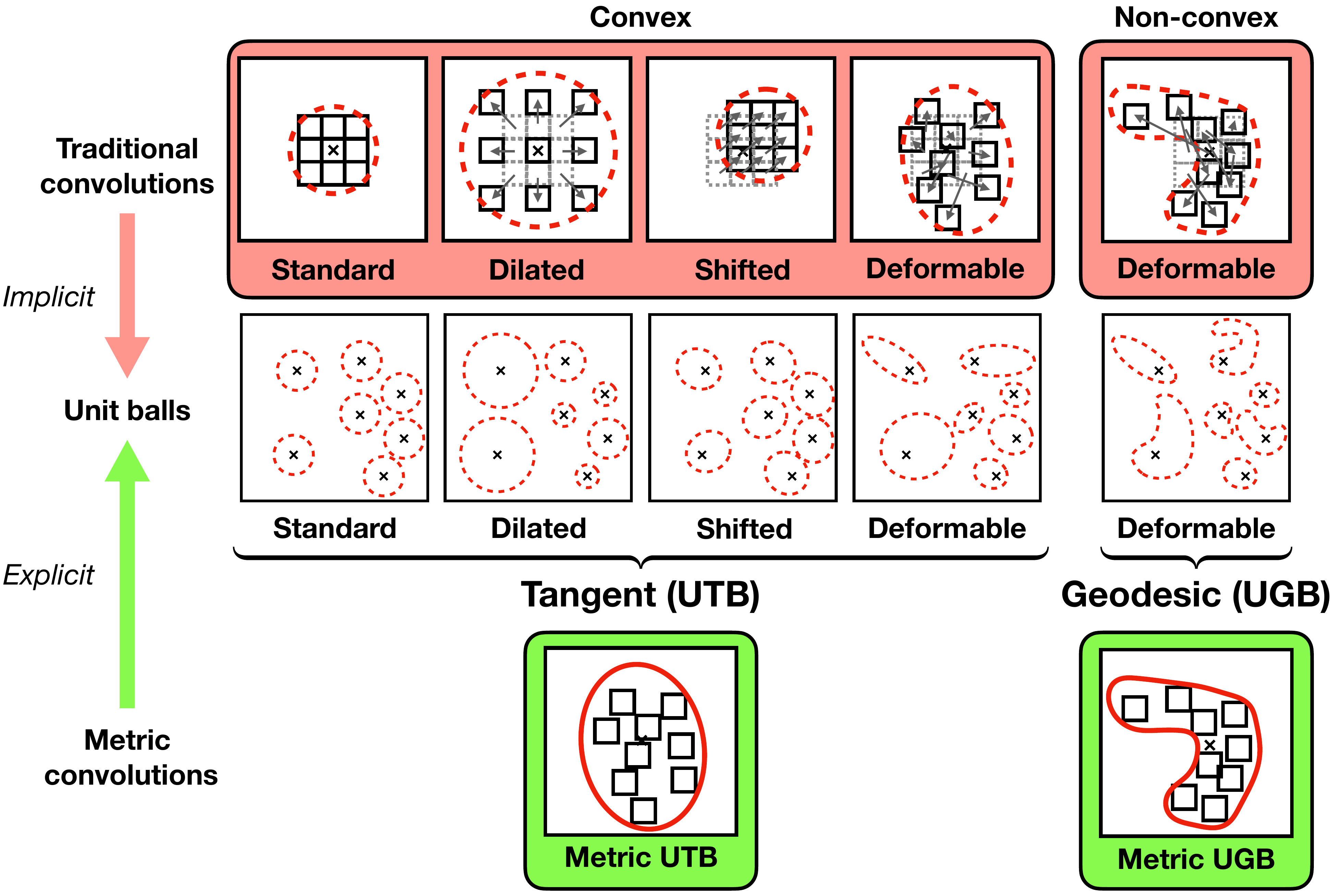}
    \caption{Traditional convolutions sample pixel neighbourhoods, which can be understood as implicitly sampling unit balls of implicit metrics. Metric convolutions explicitly compute the metric and its unit balls, which are then analytically sampled. By using Randers metrics, unit tangent ball (UTB) metric convolutions provide sampling neighbourhoods similar to those of traditional methods. Extreme non-convex sampling neighbourhoods can be provided by unit geodesic ball (UGB) metric convolutions.
    }
    \label{fig: adaptive neighbourhoods sampled}
\end{figure*}

%%%%%%%%%%%%%%%%%%%%%%%%%%%%%%%%%%%%%%%%%%%%%%%%%%%%%%%%%%%%%%%%%%%%%%

\section{Further Discussions on the Theory of Metric Convolutions}

Metric convolutions are versatile in their implementation, via heuristic or learnable metric designs. When implementing metric convolutions with learnable adaptive metric (see \cref{fig: summary adaptive conv algo details}), we apply an intermediate standard convolution to learn a fixed number ($5$ to $7$)  of metric (hyper)parameters $\gamma$, from which we then extract the Randers metric parameters $(M,\omega)$ at each pixel using simple $2\times 2$ matrix operations. From these parameters, we can then analytically compute kernel sampling locations of the unit tangent balls in a simple closed form that will be used for the final averaging of the metric convolution. The choice of standard convolution as intermediate hyperparameter extractor echoes the design of deformable \cite{dai2017deformable,zhu2019moredeformable} and shifted \cite{yu2022entire} convolutions, providing a shift-equivariant metric convolution. A major difference though is that in these competing methods, the intermediate convolution computes directly the offset sampling locations, without any additional metric prior. Additionally, as the offsets are independent for each kernel cell in deformable convolution, the intermediate standard convolution requires $2k^2$ output channels, making this operation particularly costly as the kernel size grows. This contrasts with our metric convolution, which requires a fixed number of output channels for the intermediate convolution regardless of the number of kernel samples.

\begin{figure*}
    \centering
    \includegraphics[width=\textwidth]{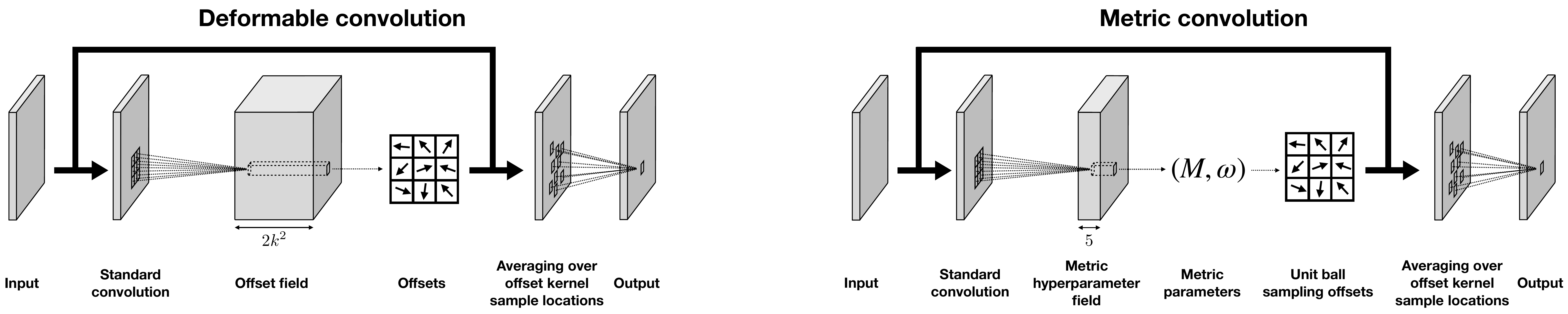}
    \caption{Metric convolutions can be implemented for learnable adaptive metrics with a similar design to deformable convolutions relying on an intermediate standard convolution. Instead of directly computing deformation offsets of the deformable convolution, which is costly as it requires $2k^2$ output channels for the intermediate convolution for a $k\times k$ convolution, we compute metric hyperparameters, which are a fixed number independent of the number of samples. From those, we can compute the metric and analytically sample unit tangent balls with simple and cheap operations involving solely $2\times 2$ matrix multiplications. The use of standard convolution as metric parameter extractors, along with polar sampling schemes, makes metric convolutions a shift-equivariant operation by design.}
    \label{fig: summary adaptive conv algo details}
\end{figure*}

\subsection{Computing the Metric from $5$ Learnt Numbers: Cholesky Approach}
\label{sec: computing metric from 5 numbers}

The most general metrics we consider in this work are Randers metrics $F$, which are parameterised by $(M,\omega)$, where $M(x)\in \mathbb{R}^{2\times2}$ is a symmetric definite positive matrix and $\omega(x)\in\mathbb{R}^{2}$ must satisfy $\lVert \omega(x) \rVert_{M^{-1}(x)}<1$. Here, we will consider a unique location $x$, thus for conciseness, we drop the explicit dependence on it.

This discussion explains how to compute the metric parameters $\gamma$ from $5$ numbers using a Choleksy-based approach. Recall the fundamental linear algebra result that symmetric definite positive matrices possess a Cholesky decomposition and vice versa.

\begin{proposition}[Cholesky decomposition]
    If $M\in\mathbb{R}^{d\times d}$ is a symmetric definite positive matrix, then there exists $L\in\mathbb{R}^{d\times d}$ that is lower triangular and with only positive diagonal entries such that $M = LL^\top$.
\end{proposition}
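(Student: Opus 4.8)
The statement to prove is the classical Cholesky decomposition proposition: every symmetric positive definite matrix $M\in\mathbb{R}^{d\times d}$ admits a factorization $M=LL^\top$ with $L$ lower triangular and positive diagonal entries. Here is how I would approach it.

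\medskip

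The plan is to argue by induction on the dimension $d$. For the base case $d=1$, a symmetric positive definite matrix is just a positive scalar $m>0$, and we take $L=\sqrt{m}$, which is trivially lower triangular with positive diagonal. For the inductive step, assume the result holds in dimension $d-1$. Given a symmetric positive definite $M\in\mathbb{R}^{d\times d}$, I would write it in block form
\begin{equation*}
M = \begin{pmatrix} m_{11} & b^\top \\ b & M' \end{pmatrix},
\end{equation*}
where $m_{11}>0$ (since $e_1^\top M e_1 = m_{11}>0$ by positive definiteness), $b\in\mathbb{R}^{d-1}$, and $M'\in\mathbb{R}^{(d-1)\times(d-1)}$ is symmetric. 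The idea is to peel off the first row and column via the ansatz
\begin{equation*}
L = \begin{pmatrix} \ell_{11} & 0 \\ c & L' \end{pmatrix},
\end{equation*}
and solve $M = LL^\top$ block by block: this forces $\ell_{11}^2 = m_{11}$, so $\ell_{11}=\sqrt{m_{11}}>0$; then $\ell_{11} c = b$, so $c = b/\sqrt{m_{11}}$; and finally $cc^\top + L'L'^\top = M'$, i.e. $L'L'^\top = M' - \frac{1}{m_{11}} bb^\top =: S$, the Schur complement.

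\medskip

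The key step — and the main thing requiring a genuine argument rather than bookkeeping — is verifying that the Schur complement $S = M' - \frac{1}{m_{11}}bb^\top$ is itself symmetric positive definite, so that the inductive hypothesis applies to produce $L'$ lower triangular with positive diagonal. Symmetry of $S$ is immediate. For positive definiteness, I would take an arbitrary nonzero $v\in\mathbb{R}^{d-1}$ and exhibit a vector $w\in\mathbb{R}^d$ with $w^\top M w = v^\top S v$; the natural choice is $w = \big(-\tfrac{1}{m_{11}} b^\top v,\ v^\top\big)^\top$, i.e. $w = (t, v)$ with $t = -b^\top v/m_{11}$. Expanding $w^\top M w = m_{11}t^2 + 2t\, b^\top v + v^\top M' v$ and substituting $t$ gives exactly $v^\top M' v - \frac{1}{m_{11}}(b^\top v)^2 = v^\top S v$. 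Since $v\neq 0$ implies $w\neq 0$ (its last $d-1$ entries are $v$), positive definiteness of $M$ gives $w^\top M w>0$, hence $v^\top S v>0$. This establishes $S$ is positive definite, closing the induction. Assembling the blocks, $L$ is lower triangular with diagonal entries $\ell_{11}=\sqrt{m_{11}}>0$ together with the (positive) diagonal of $L'$, and a direct block multiplication confirms $LL^\top = M$.

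\medskip

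I expect no serious obstacle here — the only subtlety is the Schur complement positivity, and the trick of choosing the right test vector $w$ handles it cleanly. One could alternatively present the proof via Gaussian elimination (LU decomposition with symmetry, then rescaling to balance the factors), or via the Gram--Schmidt process applied to the columns in the inner product induced by $M$, but the block-induction argument above is the most self-contained and matches the recursive structure one actually implements numerically. I would also remark at the end that the decomposition is in fact unique under the positive-diagonal constraint, since the equations above determine $\ell_{11}, c$, and then (by induction) $L'$ uniquely — though uniqueness is not needed for the stated proposition.
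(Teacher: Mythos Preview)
Your proof is correct and is the standard block-induction argument via the Schur complement. Note, however, that the paper does not actually prove this proposition: it is stated as a ``fundamental linear algebra result'' that is simply recalled, without proof, to justify parameterising the Riemannian component $M$ by three numbers. So there is nothing to compare against --- your write-up supplies a proof where the paper gives none.
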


In our case, we can reparameterise the Riemannian metric matrix $M$ with its Cholesky decomposition matrix $L$ requiring only three numbers instead of $4$. On the other hand, $\omega$ needs two. As such, our metric convolutions require an intermediate operation to compute $5$ numbers per pixel location to fully describe the metric. By analogy with deformable convolution, we chose to use a standard convolution with only $5$ output channels. Nevertheless, several issues remain. First, we need to make sure that $M$ does not become singular. This can happen through uncontrolled optimisation if for instance the lower right entry $L_{2,2}$ becomes close to $0$. Secondly, we need to enforce the norm constraint on $\omega$ for the metric to remain positive. To overcome these challenges, we used the following strategy.

To avoid the non singularity of $M$, we construct $\tilde{L} = L + \varepsilon_L I$ , where $I$ is the identity matrix and $\varepsilon_L>0$ is a small number controlling the maximum scale of the metric\footnote{The smaller $\varepsilon_L$ the larger the maximum scale, quadratically.}. The Riemannian component is then given by $M = \tilde{L}\tilde{L}^\top$. In our experiments, we chose $\varepsilon_L = 0.01$. 

To enforce the positivity of the metric, we introduce another hyperparameter $\varepsilon_\omega \in (0,1]$ and devise a strategy to enforce $\lVert \omega\rVert_{M^{-1}} \le 1-\varepsilon_\omega$. This choice would guarantee that the metric does not accumulate around $0$ as then $F_x(u)\ge \varepsilon_\omega \lVert u \rVert_2$ for any tangent vector $u$ following \cref{prop: randers metric positivity}. Taking $\varepsilon_\omega\to1^{-}$ forces metric symmetry, whereas $\varepsilon_\omega\to0^{+}$ allows the strongest asymmetry. Note that $\varepsilon_\omega = 1$ is equivalent to taking $\omega \equiv 0$. The strategy is the following. Compute $\lVert \omega \rVert_{M^{-1}}\in[0,\infty)$, and feed it to a modified sigmoid function to get a new number in $[0,1-\varepsilon_L)$. Recall that the sigmoid is defined as $\sigma(x) = \tfrac{1}{1+e^{-x}}$. Our modified sigmoid is thus $\tilde{\sigma}(x) = 2(1-\varepsilon_\omega)(\sigma(x) - \tfrac{1}{2})$. The computed number $\tilde{\sigma}(\lVert \omega\rVert_{M^{-1}})\in [0,1-\varepsilon_\omega)$ represents the desired $M^{-1}$-norm of the Randers drift component. A such, we could take $\tilde{\omega} = \tfrac{\tilde{\sigma}(\lVert \omega\rVert_{M^{-1}})}{\lVert \omega\rVert_{M^{-1}}}\omega$. An issue arises though if we are learning the metric from data and initialise with $\omega \equiv 0$, as the square root has divergent gradient at the origin. To avoid this issue, we replace the computation of $\lVert \omega\rVert_{M^{-1}}$ by the more stable $\sqrt{\omega^\top M^{-1} \omega + \varepsilon}$, where $\varepsilon>0$ is a small number, typically $\varepsilon = 10^{-6}$, that we systematically use to avoid instabilities, e.g.\ divisions by 0. As such, the modified drift component forming the metric is $\tilde{\omega} = \frac{2(1-\varepsilon_\omega)(\sigma({\sqrt{\omega^\top M^{-1}\omega + \varepsilon}})-\frac{1}{2})}{\sqrt{\omega^\top M^{-1}\omega + \varepsilon}}\omega$.

We summarise the approach in \cref{alg: computing the metric from 5 numbers}. Thus, when learning the metric, we use the metric defined by $(\tilde{M}, \tilde{\omega})$, and performed gradient descent on the $5$ parameters of $L$ and $\omega$ (per pixel), which is possible since $\tilde{M}$ and $\tilde{\omega}$ are given by differentiable operations from $L$ and $\omega$.

\begin{algorithm}[tb]
   \caption{Metric computation from $5$ numbers}
   \label{alg: computing the metric from 5 numbers}
    \begin{algorithmic}
       \STATE {\bfseries Input:} Five numbers $L_{1,1}$, $L_{1,2}$, $L_{2,2}$, $\omega_1$, $\omega_2$
       \STATE {\bfseries Hyperparameters:} $\varepsilon_L>0$, $\varepsilon_\omega\in(0,1]$, $\varepsilon>0$
       \STATE Construct $L = \begin{pmatrix} L_{1,1} & 0 \\ L_{2,1} & L_{2,2} \end{pmatrix}$ and $\omega = (\omega_1,\omega_2)^\top$
       \STATE Compute $\tilde{L} = L + \varepsilon_L I$\\[0.5em]
       \STATE Compute $\tilde{M} = \tilde{L}\tilde{L}^\top$
       \STATE Compute $\tilde{\omega} = \frac{2(1-\varepsilon_\omega)\left(\sigma({\sqrt{\omega^\top M^{-1}\omega + \varepsilon}})-\frac{1}{2}\right)}{\sqrt{\omega^\top M^{-1}\omega + \varepsilon}}\omega$
       \STATE {\bfseries Return} $(\tilde{M},\tilde{\omega})$
    \end{algorithmic}
\end{algorithm}

\subsection{Computing the Metric from $6$ or $7$ Learnt Numbers: Spectral Approach}
\label{sec: computing metric from 6 7 numbers}

The previous Cholesky-based implementation (\cref{sec: computing metric from 5 numbers}) sometimes suffers from instabilities during training when combined with neural networks. This issue persisted when using an LDLT approach, where $LDL^\top = M$ and $L$ is lower triangular with unit diagonal entries and $D$ is positive diagonal matrix. We here present a more stable implementation to compute $\gamma$ at the cost of one or two extra numbers to encode the metric.

As in \cref{sec: computing metric from 5 numbers}, we work with Randers metrics parametrised by $(M,\omega)$ and focus on a unique location $x$, allowing us to drop its explicit dependence on it for conciseness. Recall the fundamental linear algebra result that symmetric matrices can be diagonalised in an orthogonal basis.

\begin{proposition}[Spectral theorem]
    If $M\in\mathbb{R}^{d\times d}$ is a symmetric matrix, then $M$ can be diagonalised in an orthogonal basis. This means that there exists an orthogonal matrix $R\in\mathbb{R}^{d\times d}$ and a diagonal matrix $\Lambda$ such that $M = R\Lambda R^\top$.
\end{proposition}

As the dimensionality of the image surface manifold is $d=2$, we could encode the rotation matrix $R$ by an angle $\theta$ as $R = \begin{psmallmatrix}
    \cos\theta & -\sin\theta\\
    \sin\theta & \cos\theta
\end{psmallmatrix}$. Thus, only $3$ numbers $\theta$, $\lambda_1$, and $\lambda_2$ could suffice to encode $M$, as in the Cholesky approach. However, regressing raw angle values is well-known to be significantly harder than estimating their cosine and sine values. This is due to periodic nature of angles: raw angle values $\varepsilon$ and $2\pi-\varepsilon$ for small $\varepsilon>0$ have a large difference but they correspond to almost identical angles. Instead, given two unconstrained numbers $r\in\mathbb{R}^2$, we construct the rotation matrix $R$ using $R = \tfrac{1}{\lVert \tilde{r}\rVert_2 + \varepsilon}(\tilde{r} \;|\; \tilde{r}_\perp)$, where $\tilde{r} = r + \varepsilon$ to avoid singular $R$ on rare instances\footnote{We add $\varepsilon$ here as if rigorously $r=0$, which happened in practice on clean noiseless data like MNIST, then the vector $\tfrac{r}{\lVert \tilde r\rVert_2 + \varepsilon}$ would still be $0$ leading to a singular matrix $R$.} and $\tilde{r}_\perp = (-\tilde{b},\tilde{a})^\top$ if $\tilde{r} = (\tilde{a},\tilde{b})^\top$.

Since $M$ is also positive definite, then its eigenvalues $\lambda_1,\hdots,\lambda_d$ forming the diagonal of $\Lambda$, are strictly positive. Given two unconstrained numbers $\lambda_1$ and $\lambda_2$, we could construct the eigenvalue matrix 
$\Lambda = \begin{psmallmatrix}
    \tilde{\lambda}_1 & 0 \\
    0 & \tilde{\lambda}_2
\end{psmallmatrix}$, 
where $\tilde{\lambda_i} = |\lambda_i| + \varepsilon_L$ for $i\in\{1,2\}$. The Riemannian component would then be given by $\tilde{M} = R\Lambda R^\top$. This strategy requires $6$ numbers to compute the metric parameters $\gamma$.

However, we obtained marginally better results when separating the scale of the eigenvalues, as it introduces regularisation on them. Let $s$ be an additional unconstrained number used to compute the scale of the eigenvalues. Denoting $\lambda_i' = 1 + 2(\sigma(\lambda_i)-\tfrac{1}{2}) = 2\sigma(\lambda_i)\in[0,2]$ the ``unscaled'' eigenvalues centred around 1, we compute their scale as $\tilde{s} = \tfrac{s_\text{min} + s_\text{max}}{2} + 2\left(\sigma(s) - \tfrac{1}{2}\right)(s_\text{max} - s_\text{min})\in[s_\text{min},s_\text{max}]$, where $s_\text{min}$ and $s_\text{max}$ are user-defined minimum and maximum eigenvalue scales\footnote{Recall that due to the inverse in \cref{eq: finsler unit circle}, a smaller eigenvalue scale of $M$ leads to longer unit balls along that direction. For instance, a scale of 0.1 corresponds to stretching the ball to 10 pixels.}. In our experiments, we took $s_\text{min} = 0.1$ and $s_\text{max} = 1.5$. Finally we use the eigenvalues
$\tilde{\lambda}_i = \lambda_i'\tilde{s}_i$ to build the matrix $\Lambda = \begin{psmallmatrix}
    \tilde{\lambda}_1 & 0 \\
    0 & \tilde{\lambda}_2
\end{psmallmatrix}$. The Riemannian component is then provided by $\tilde{M} = R \Lambda R^\top$. This strategy, requiring $7$ numbers to compute the metric, is our preferred strategy and we only report results for this implementation when referring to the spectral implementation. 

For both strategies, we compute the linear drift component from two unconstrained numbers $\omega$ as in \cref{sec: computing metric from 5 numbers}. This means that we use $\tilde{\omega} = \tfrac{\tilde{\sigma}(\lVert \omega\rVert_{M^{-1}})}{\lVert \omega\rVert_{M^{-1}}}\omega$ satisfying the norm constraints for the positivity of the metric. As we use the spectral approach for training stability, we found that we can also improve stability and marginally results by avoiding propagating the gradients through the invert operation $M^{-1}$. To this end, we detach the gradient of the factor $\tfrac{\tilde{\sigma}(\lVert \omega\rVert_{M^{-1}})}{\lVert \omega\rVert_{M^{-1}}}$ in the calculation of $\tilde{\omega}$. We used this strategy for all results reported in this work referring to the spectral implementation.

We summarise the spectral approaches using either $6$ or $7$ numbers in \cref{alg: computing the metric from 6 numbers,alg: computing the metric from 7 numbers}. Our preferred version uses $7$ numbers as it strongly encourages stability and leads to comparable or marginally better results. All results provided in this work using the spectral approach use $7$ numbers. Thus, when learning the metric, we use the metric defined by $(\tilde{M}, \tilde{\omega})$, and performed gradient descent on the $7$ parameters encoding the metric parameters $\gamma$ (per pixel), which is possible since $\tilde{M}$ and $\tilde{\omega}$ are given by differentiable operations from these $7$ numbers.

In our experiments using a simplistic architecture -- a single convolution layer for denoising (\cref{sec: denoising whole images with a single local convolution}), we did not encounter stability issues and provide results using the Cholesky implementation for computing metric parameters $\gamma$ from $5$ numbers. In our experiments using complex architectures -- well-established CNNs for classification (\cref{sec: from single to stacked convolutions: cnn classification}), we strongly benefited from improved stability and provide results using only the spectral implementation for computing the metric parameters $\gamma$ from $7$ numbers.

\begin{algorithm}[htb]
   \caption{Metric computation from $6$ numbers}
   \label{alg: computing the metric from 6 numbers}
    \begin{algorithmic}
       \STATE {\bfseries Input:} Six numbers $r_1$, $r_2$, $\lambda_1$, $\lambda_2$, $\omega_1$, $\omega_2$
       \STATE {\bfseries Hyperparameters:} $\varepsilon_L>0$, $\varepsilon_\omega\in(0,1]$, $\varepsilon>0$
       \STATE Let $r = (r_1,r_2)^\top$
       \STATE Compute $\tilde{r} = r+\varepsilon$ and $\tilde{r}_\perp = (-\tilde{r}_2, \tilde{r}_1)^\top$
       \STATE Construct $R = \tfrac{1}{\lVert \tilde{r}\rVert_2 + \varepsilon}
       \begin{pmatrix}
           \tilde{r}_1 & \tilde{r}_{\perp,1}\\
           \tilde{r}_2 & \tilde{r}_{\perp,2}\\
       \end{pmatrix}$
       \STATE Construct $\Lambda = \begin{pmatrix}
            |\lambda_1| & 0\\
            0 & |\lambda_2|
       \end{pmatrix} + \varepsilon_L I$
       \STATE Compute $\tilde{M} = R \Lambda R^\top$
       \STATE Compute $\tilde{\omega} = \frac{2(1-\varepsilon_\omega)\left(\sigma({\sqrt{\omega^\top M^{-1}\omega + \varepsilon}})-\frac{1}{2}\right)}{\sqrt{\omega^\top M^{-1}\omega + \varepsilon}}\omega$
       \STATE {\bfseries Return} $(\tilde{M},\tilde{\omega})$
    \end{algorithmic}
\end{algorithm}

\begin{algorithm}[htb]
   \caption{Metric computation from $7$ numbers}
   \label{alg: computing the metric from 7 numbers}
    \begin{algorithmic}
       \STATE {\bfseries Input:} Seven numbers $r_1$, $r_2$, $\lambda_1$, $\lambda_2$, $s$, $\omega_1$, $\omega_2$
       \STATE {\bfseries Hyperparameters:} $0<s_\text{min}\le s_\text{max}$, $\varepsilon_\omega\in(0,1]$, $\varepsilon>0$
       \STATE Let $r = (r_1,r_2)^\top$
       \STATE Compute $\tilde{r} = r+\varepsilon$ and $\tilde{r}_\perp = (-\tilde{r}_2, \tilde{r}_1)^\top$
       \STATE Construct $R = \tfrac{1}{\lVert \tilde{r}\rVert_2 + \varepsilon}
       \begin{pmatrix}
           \tilde{r}_1 & \tilde{r}_{\perp,1}\\
           \tilde{r}_2 & \tilde{r}_{\perp,2}\\
       \end{pmatrix}$
       \STATE Compute $\lambda_1' = 2\sigma(\lambda_1)$ and $\lambda_2' = 2\sigma(\lambda_2)$\\[0.5em]
       \STATE Compute  $\tilde{s} = \tfrac{s_\text{min} + s_\text{max}}{2} + 2\left(\sigma(s) - \tfrac{1}{2}\right)(s_\text{max} - s_\text{min})$\\[0.5em]
       \STATE Compute $\tilde{\lambda}_1 = \lambda_1' \tilde{s}$ and $\tilde{\lambda}_2 = \lambda_2' \tilde{s}$
       \STATE Construct $\Lambda = \Lambda = \begin{pmatrix}
            \tilde{\lambda}_1 & 0\\
            0 & \tilde{\lambda}_2
       \end{pmatrix}$
       \STATE Compute $\tilde{M} = R \Lambda R^\top$
       \STATE Compute $\tilde{\omega} = \frac{2(1-\varepsilon_\omega)\left(\sigma({\sqrt{\omega^\top M^{-1}\omega + \varepsilon}})-\frac{1}{2}\right)}{\sqrt{\omega^\top M^{-1}\omega + \varepsilon}}\omega$
       \STATE {\bfseries Return} $(\tilde{M},\tilde{\omega})$
    \end{algorithmic}
\end{algorithm}

\subsection{Polar Kernel Sampling strategies}
\label{sec: polar sampling strategies}

In the continuum, the support $\Delta_x$ is given by the locations $sy_x(\theta,\gamma)$ where $s\in [0,1]$ and $\theta\in [0,2\pi]$. We design two complementary approaches to sample $k\times k$ grid points, as illustrated in \cref{fig: finsler utb analytical offsets}.

\paragraph{Grid sampling.} To provide a $k\times k$ sampled kernel, a natural approach is to uniformly grid sample $s$ and $\theta$ with a $k\times k$ grid. 

\paragraph{Onion Peeling.} For very small $k$ however, e.g.\ $k=3$, polar grid sampling undersamples angles, unlike standard image convolutions using a regular $k\times k$ grid. For instance, $8$ angles are considered for $k=3$ in a regular grid. To better compare with standard convolutions for small $k$, we propose an onion peeling sampling strategy. A standard convolution sampling $k\times k$ grid can be understood as a succession of onion layers of pixels at $L^\infty$ distance $k'\in\{0,\hdots,\lfloor \tfrac{k-1}{2}\rfloor\}$ from the central pixel. In the $k'$-th layer, there are either $1$ pixel if $k'=0$ or $8k'$ pixels for $k'\ge 1$. We use this idea to design our onion peeling metric sampling: we sample $\lfloor \tfrac{k-1}{2}\rfloor+1$ values $s\in[0,1]$ uniformly, and for each layer index $k'\in\{0,\hdots,\lfloor \tfrac{k-1}{2}\rfloor\}$, we sample either the original point $x$, which is given by any $\theta$, if $k'=0$ or $8k'$ angles $\theta\in[0,2\pi)$ uniformly for $k'\ge 1$.

\begin{figure}[t]
    \centering
    \includegraphics[width=\columnwidth]{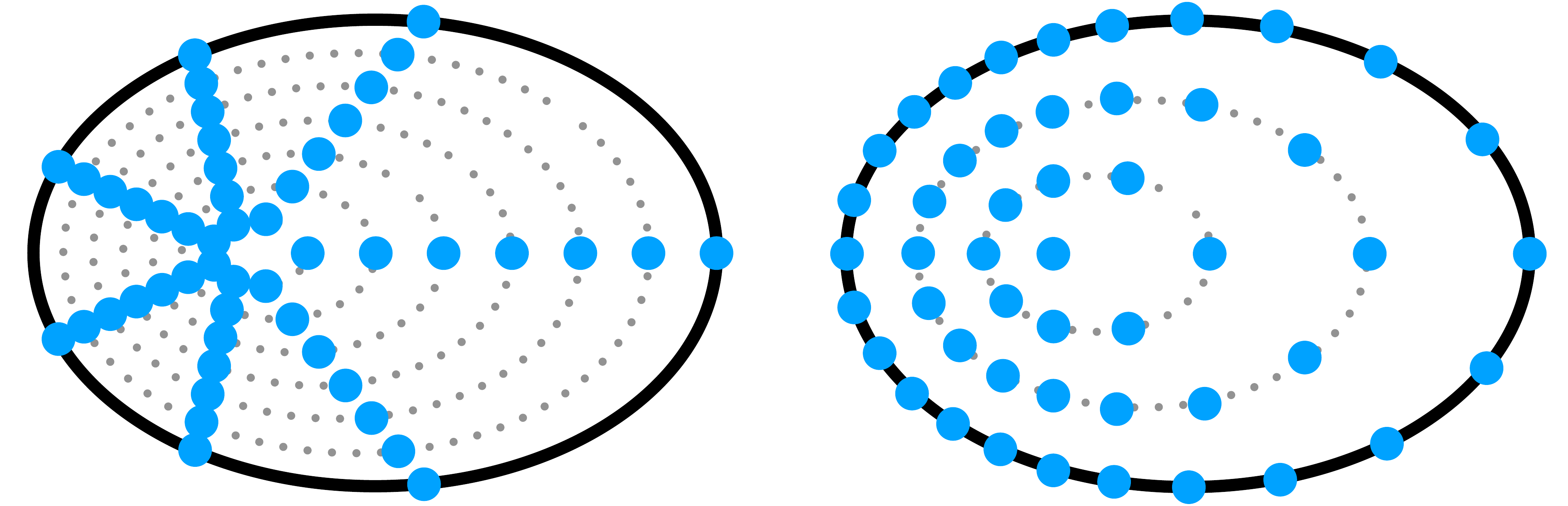}
    \caption{Polar kernel sampling strategies for $k\times k$ samples when $k=7$. Left: grid sampling polar coordinates. Right: onion peeling strategy, analogous to what is is done in the standard square $k\times k$ grid. In both cases, the position of each sample is analytically given by a simple closed-form formula depending solely on the Randers metric parameters $\gamma = (M, \omega)$.}
    \label{fig: finsler utb analytical offsets}
\end{figure}

In our denoising experiments (\cref{sec: denoising whole images with a single local convolution}), we work with larger $k$, thus we used a grid sampling strategy for them. In our classification experiments using well-established CNNs (\cref{sec: from single to stacked convolutions: cnn classification}), these neural architectures systematically rely on $3\times 3$ convolutions, thus we utilized an onion peeling strategy for them.

\subsection{Differentiating our Metric Convolution}
\label{sec: differentiating metric utb conv}

We focus on the continuous case from \cref{eq: metric utb f*g}. Differentiable changes of metric induce the variation of the unit ball in
\begin{equation}
    \frac{\partial y_x(\theta,\gamma)}{\delta \gamma} = -\frac{1}{(F_x^\gamma)^2(u_\theta)} \frac{\partial F_x^\gamma}{\partial\gamma}(u_\theta) u_\theta.
\end{equation}

Assuming that the convolution weights are fixed, i.e.\ $g(sy_x(\theta,\gamma)) = g_{\theta,s}$, differentiating the convolution with respect to the metric parameters gives
\begin{equation}
    \frac{\partial (f*g)}{\partial \gamma}(x) = \int_{s,\theta} \Big(s \frac{\partial y_x}{\partial \gamma} \Big)^\top\nabla f(x + s y_x(\theta,\gamma)) g_{\theta,s} ds d\theta.
\end{equation}
Likewise, if the metric is fixed, dependence on the weights is given by
\begin{equation}
    \frac{\partial (f*g)}{\partial g_{\theta,s}}(x) = f(x+sy_x(\theta,\gamma))
\end{equation}

If needed, we can learn the parameters of the metric or the values of the kernel by gradient descent on some loss function $L$ according to the dynamics
\begin{equation}
    \begin{cases}
        \frac{\partial g_{\theta,s}}{\partial t} = - \frac{\partial L}{\partial g_{\theta, s}} \\
        \frac{\partial \gamma}{\partial t} = - \frac{\partial L}{\partial \gamma}.
    \end{cases}
\end{equation}
Naturally, we can generalise this to any descent-based optimisation algorithm and discrete optimisation steps.

We can easily extend this discussion to our discretised version of \cref{eq: metric utb f*g}.

\subsection{Shift-Equivariance of Metric Convolutions}
\label{sec: shift-equivariance metric convolutions}

Denote $\mathcal{T}_t$ the shift-operator of vector $t$, defined as $\mathcal{T}_t(f):x\mapsto f(x-t)$. We say that the convolution of $f$ with kernel $g$ is shift-equivariant if for any vector $t$, we have\footnote{Note that shift-equivariance should not be confused with shift-invariance, which is defined as $\mathcal{T}_t(f*g) = (f*g)$. Old terminology, prior to the rise of deep learning, used to call shift-invariance what is now properly called shift-equivariance.} $\mathcal{T}_t(f*g) = (\mathcal{T}_t(f) * g)$. This means that shifting the input image and applying the convolution operations yields the same output image, albeit shifted by the same amount.
Note that when mentioning shift-equivariance in image convolution, only periodic padding guarantees perfect shift-equivariance, otherwise the operation is approximately shift-equivariant with small differences arising at the borders of the image. Generalising the argument for standard convolutions, we here implicitly assume periodic padding, and the same issue will arise for other types of common padding, such as constant $0$ padding.
Let us formally prove \cref{th: shift equivariant}. 

\begin{proof}
    First, the value of $g$ at the cell with polar coordinates $(s_x,\theta)$, where $s_x = sy_x(\theta,\gamma)$, is independent of $x$. This implies that the convolution kernels $g$ in metric convolutions have shared weights. Schematically, the discrete $k\times k$ convolution kernel $g$ is an array of $k\times k$ values independent of $x$, that will be pointwise multiplied at each location $x$ with the $k\times k$ array of sampled values $f(s_x, \theta)$ for the $k\times k$ sampled locations $(s_x,\theta)$, and then summed together to yield the output convolution at $x$.
    
    Second, our metric parameters $\gamma$ are computed based solely on shift-equivariant objects. For heuristic designs, these objects can be image gradient or its norm, which are naturally shift-equivariant. For the advanced learnable design, we use an intermediate standard convolution to extract $\gamma$ directly. Since standard convolutions are shift-equivariant, the computed array of metric parameters $\gamma$ at all locations $x$ will be shifted by the same amount as the input image. In turn, this implies that the unit tangent balls of our metric convolutions are shift-equivariant.
    
    Third, thanks to the universal Cartesian coordinate system of images, our sampling strategies are based on angles $\theta$ with the horizontal axis of the image domain $\Omega=[0,1]^2$, which is invariant to image shifts. This implies that our sampling scheme will always sample the same location of a given unit tangent ball, regardless of its position in the image.
    
    Combining these three properties, the output of the metric convolution of a shifted image $f$ with a kernel $g$ will be the same output as the original convolution with the same kernel $g$, albeit shifted by the same shift amount. This means that metric convolutions are shift-equivariant.
\end{proof}

\subsection{On Difficulties for Fast Differentiable Unit Geodesic Ball Computations}
\label{sec: difficulties fast diff UGB}

In contrast to the UTB case, computing the unit geodesic balls (UGB) is expensive as it is not given by a simple closed-form expression. It requires instead finding geodesic curves and then integrating along them. Many approaches exist for geodesic distance computations, and most of them require solving differential equations. They are usually either the Eikonal equation, which describes the propagation of a wave front on the manifold, or the heat equation, as initially heat diffuses along geodesics.

In the traditional Riemannian case, a wide variety of solvers exist, even differentiable ones, such as the recent differentiable fast marching algorithm \cite{benmansour2010derivatives,bertrand2023fast}, Varadhan's formula \cite{varadhan1967behavior} or the idea of \cite{crane2013geodesics} to flow heat initially in one small time step, normalise the obtained gradient field and interpret the normalised field as the tangential components of the geodesics. Unfortunately, existing solvers are too expensive to be used in reasonable applications, such as a neural network module, and we would need to apply these solvers at least as many times as there are pixels in the image since distances are computed from the single starting points $x$.

An extra layer of complexity arises when using the less common Finsler metrics, for which even discretisation of differentiable operators becomes tricky. We here discuss some of these difficulties when using the idea from \cite{crane2013geodesics}. In Randers geometry, deriving linear solvers to length-related differentiable equations becomes highly non trivial. In the Riemannian case, this is not an issue. For instance, the Riemannian heat equation $\frac{\partial f}{\partial t} = \mathrm{div}(\nabla f) = -\Delta_M f$ is governed by the linear Laplace-Beltrami operator $\Delta_M$. Many works handle its possible discretisation strategies, such as the popular cotangent weight scheme \cite{desbrun1999implicit,meyer2003discrete}. The linearity allows \cite{crane2013geodesics} to diffuse heat from a Dirac image $\delta_x$, that are one at pixel $x$ and zero everywhere else, by solving a set of linear equations to perform a single time-backwards iteration $(I - t\Delta_M) \delta_{x,t} = \delta_x$. The time-backwards operation essentially imagines what the heat should look like after a small time step should it have originated from the Dirac image. A forward time difference scheme would struggle to do so as the gradient of the image is zero almost everywhere, and in turn the Laplace-Beltrami operator, and so almost no heat would be flown that way in a single step. This elegant solution becomes highly non trivial in the Randers case. This is why in our metric UGB convolution, we modify the approach from \cite{crane2013geodesics} and revert to many smaller time forward iterations to flow heat. Also, as it is unclear how to discretise operators in Randers geometry, we use a local solution rather than the global one, which provides an approximating solution satisfying some properties of the differential equation.

\subsection{Details on our Naive Implementation of our Metric Unit Geodesic Balls Convolutions}
\label{sec: details implementation metric ugb conv}

Finding global solutions to the Finsler heat equation (\cref{eq: finsler heat equation}) is difficult. However, we can easily provide local solutions \cite{ohta2009heat}. Local solutions merely satisfy some local properties of the differential equation. Our local solution, the Finsler-Gauss kernel $h_x$ \cite{ohta2009heat,yang2018geodesic}, is explicit and is given by
\begin{equation}
    h_x(y) = \frac{1}{\mathcal{Z}(x)}\frac{1}{t}e^{-\frac{(F_x^*(y))^2}{4t}}
\end{equation}
where $F_x^*$ is the dual Finsler metric and is equal to the invert metric in the Riemannian case, and $\mathcal{Z}(x) = \int\tfrac{1}{t} e^{-\tfrac{(F_x^*(y))^2}{4t}}dy$ is a normalisation factor. The dual metric is beyond the scope of this paper so we refer to the \cref{sec: dual randers metric} for more details on it. We will just point out that the dual of a Randers metric is also a Randers metric with explicit parameters $(M^*,\omega^*)$. We then perform a standard convolution using the Finsler-Gauss kernel to diffuse the heat from a Dirac image $\delta_x$, that are one at pixel $x$ and zero everywhere else, to produce the diffused Dirac image $\delta_{x,t}$ according to the update rule 
\begin{equation}
    \delta_{x,t+dt}(x') = \int \delta_{x,t}(x'+y) h_{x'}(y)dy.
\end{equation}
We can then compute the normalised gradient field $-\tfrac{\delta_{x,t}}{\lVert \delta_{x,t}\rVert_2}$ to get the unit speed geodesic flow field, from which we need to compute a unit ball. To get a differentiable sampling $\Delta_x$, we can flow a stencil of points $\Delta^{\textit{ref}}$ close to $x$ given by $P_x^\gamma(s,\theta,0) = s_0 s y_x(\theta,\gamma)$ where $s_0$ is an optional scaling factor and then flow for a fixed amount of time according to $\tfrac{\partial P}{\partial t} = -\tfrac{\delta_{x,t}}{\lVert \delta_{x,t}\rVert_2}$. This deforms the stencil and the obtained unit ball is no longer necessarily convex. If the initial stencil is sampled using $k\times k$ uniform polar grid points by analogy with the UTB case, the obtained stencil can be interpreted as covering the non convex unit geodesic ball (or its approximation).

This algorithm is significantly cheaper than traditional more accurate geodesic solvers, it is fully differentiable as in particular the unit ball is not obtained via a thresholding operation. However, it is still too costly to be used in real scenarii such as neural network modules. For instance, if the image has $256\times 256$ pixels, we need to diffuse $65536$ Dirac images of the same resolution and then flow $65536$ sets of stencil. This either quickly occupies all available memory in RAM for single modest commercial GPUs or implies a slow sequential bottleneck for simply computing a single convolution operation.

\section{Experiments}

\subsection{Implementation Considerations of Heuristic Geometric Designs of Metric Convolutions and Other Methods and Results}
\label{sec: implementation heuristic metric convs}

We here show how to design sample locations from geometry. We take uniform kernel weights $\tfrac{1}{k^2}$ and no learning is involved. We denoise the $256\times 256$ greyscale cameraman image using standard, dilated, and our metric UTB and UGB convolutions, along with deformable convolution using random offsets due to their lack of interpretability.

As mentioned in the main paper, a natural desire for the shape of unit balls when considering denoising is to be wide along the orthogonal gradient direction $\nabla f(x)^\perp$ and thin along the image gradient $\nabla f(x)$. This shape avoids blurring out edges. 

\paragraph{Unit Tangent Ball.} In the UTB case, unit balls are easily given in closed form. We can thus sample them directly without having to pass through the dual metric. Our anisotropic Riemannian metric of parameter $M$ favours the direction $\nabla f(x)^\perp$ by taking it as an eigenvector with smaller eigenvalue
\begin{equation}
    \label{eq: M unit tangent ball convolution experiment}
    M(x) \!=\! 
    R_{\theta_x} \!\!
    \begin{pmatrix}
        \iota\!\left(1+\alpha \tfrac{\lVert\nabla f(x)\rVert_2}{\max \lVert \nabla f \rVert_2}\right) & 0 \\
        0 & \frac{\iota}{1+\alpha \tfrac{\lVert\nabla f(x)\rVert_2}{\max \lVert \nabla f \rVert_2}}
    \end{pmatrix}\!\!
    R_{\theta_x}^\top,
\end{equation}
where $\iota = 0.1$ controls the average metric scale\footnote{If $\nabla f(x) = 0$ then $\iota = 0.1$ creates a ball of $10$ pixel edge radius.}, $\alpha = 100$ is an anisotropy gain factor, and $R_{\theta_x} $ is the rotation matrix with angle $\theta_x$ such that $(\cos\theta_x,\sin\theta_x)^\top = \tfrac{\nabla f(x)}{\lVert \nabla f(x)\rVert_2 + \varepsilon}$. The small $\varepsilon = 10^{-6}$ is added here for stability and to avoid dividing by $0$ in uniform areas of the image. The image gradient is computed using a Sobel filter of size $3\times 3$.

It is legitimate to consider symmetric neighbourhoods for denoising, i.e.\ $\omega\equiv 0$. Nevertheless, we also tried asymmetric metrics by controlling the scale of $\omega$. We first compute
\begin{equation}
    \tilde\omega(x) = \frac{\nabla f (x)^\top}{\lVert \nabla f(x)\rVert_2 + \varepsilon},
\end{equation}
and then for various scales $(1-\varepsilon_\omega)\in\{0,0.5,0.9\}$, we choose
\begin{equation}
    \label{eq: omega unit tangent ball convolution experiment}
    \omega(x) = (1-\varepsilon_\omega) \frac{\tilde \omega(x)}{\lVert \tilde\omega\rVert_{M^{-1}(x)} + \varepsilon}.
\end{equation}

\paragraph{Unit Geodesic Ball.} In the UGB case, our proof of concept algorithm requires the use of the dual metric to guide the heat flow of the Dirac images. After normalising this initial flow, we reflow a stencil of points to obtain the sample locations. We use the same metric $(M,\omega)$ as in the UTB case, except that now $\alpha=10$ and $\iota=1$. From this metric, we can explicitly compute $(M^*,\omega^*)$ given \cref{prop: dual randers formula}. For each pixel $x$, we diffuse the Dirac image $\delta_x$, equal to $1$ at pixel $x$ and $0$ everywhere, until $t=0.1$ with time steps $dt=0.01$. This means that starting from $\delta_x$ we iteratively convolved with the Finsler-Gauss kernel $10$ times.

We then define a stencil of $k\times k$ points $P(s,\theta,0) = s_0s y_x(\theta,(M^*,\omega^*)))$ with a uniform uniform grid of radial values $(s,\theta)\in[0,1]\times [0,2\pi)$ sampled $k\times k$ times. The stencil is to flow along the diffused Dirac image $\delta_{x,t}$. We diffuse for the same amount of time as the heat diffusion with the same time steps. Tuning $s_0$ is of interest but was not searched, we simply took $s_0=2$. For small $s_0$, all the points in the stencil will lie in the pixel $x$, and when using bilinear interpolation they can decentre early from $x$ before this drift is magnified. This behaviour is compatible with what is observed in deformable convolution. Too large values of $s_0$ will place stencil points in areas with unreliable normalised flow as they are at most barely reached by the heat diffusion. Flowing is done using a simple time forward scheme. 

Our suggested implementation is fully differentiable but prohibitively expensive from a space and time perspective for more common application such as neural network compatibility. It merely provides a proof of concept of UGB metric convolutions.

\paragraph{Unit Ball Plots.} To improve visualisation in \cref{fig: kernels cameraman}, we slightly modified the hyperparameters in the UTB and UGB case. In the UTB plots, we take $\alpha=10$. In the UGB plots, diffusion of the Dirac image is done with time steps $dt=0.1$ until time $t=0.5$, whereas the stencil is flown with time steps $dt=0.1$ until time $1$.

\paragraph{Details of Other Methods.} All convolutions used in \cref{fig:cameraman blurs no learn} use $k\times k$ samples with $k=11$. Dilated convolution has a dilation factor of $3$. 
Deformable convolution uses a dilation factor of $1$ and each offset of each kernel cell for each pixel location is randomly, independently, and uniformly chosen in $[-\tfrac{k}{2},\tfrac{k}{2}]^2$. 
Standard convolution covers a $k\times k$ pixel area and lacks interpolation, which further filters out noise, yet our method employs it. For fairness, we also test a non-standard interpolated convolution with $k\times k$ uniform samples covering a fixed smaller area. In practice, the interpolated standard convolution uses $k\times k$ uniform samples in a $5\times 5$ pixel area. We chose this area as it is a common size of non interpolated standard convolutions. As such, the interpolated standard convolution can be seen as a standard $5\times 5$ convolution equipped with the extra filtering from interpolation. Note that interpolated standard convolutions can also be understood as non-standard dilations with dilation factor less than $1$.

\paragraph{Results.} Results in \cref{fig:cameraman blurs no learn} show our metric convolutions outperforming standard, dilated, and deformable convolution.  Metric convolutions offer interpretability to flexible convolutions and strong geometric adaptable priors beneficial for basic tasks like noise filtering. The asymmetric drift component $\omega$ degrades performance for Gaussian noise filtering but could be useful in intrinsically asymmetric tasks like motion deblurring. 
Providing interpretable components to neural networks is fundamental as most models fundamentally lack interpretability. This is even the case for the simplest CNNs trained on the most simple tasks \cite{dages2023compass}.

\begin{figure*}[t]%[th]
    \centering
    % \begin{center}
        \centerline{\includegraphics[width=0.7\textwidth]{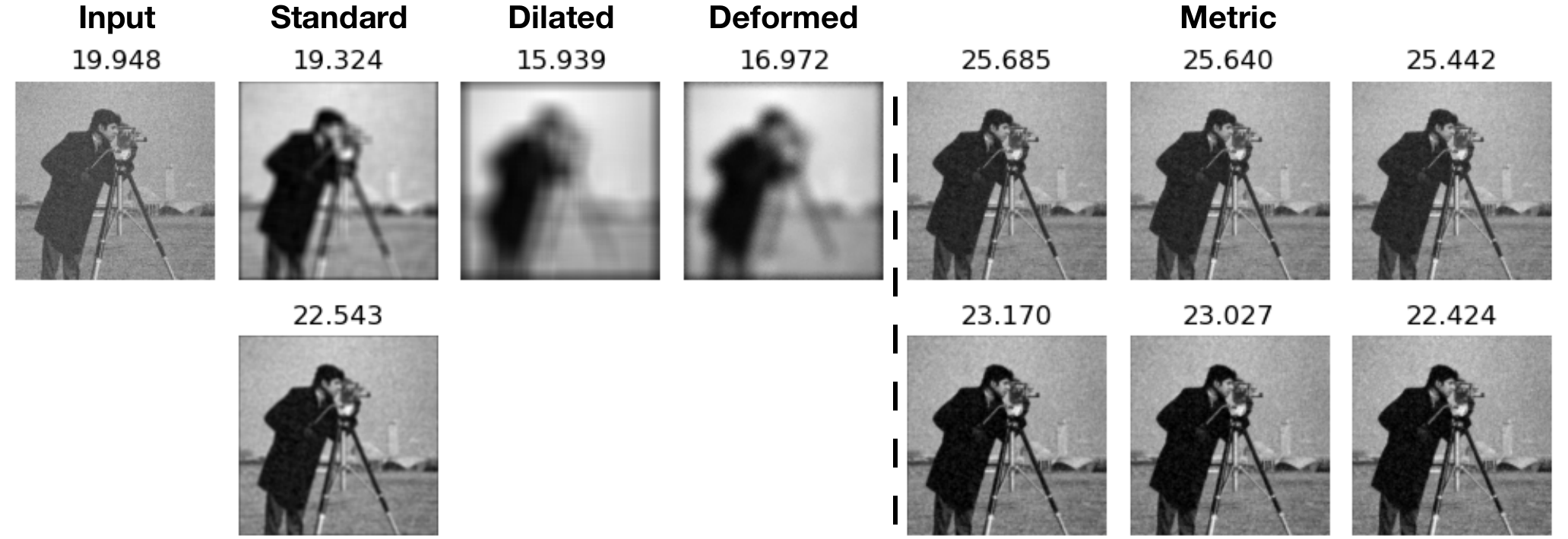}
        }
        \caption{
        Denoising results using $11\times 11$ samples from left to right: input, standard, dilated, randomly deformed, and metric convolutions with $\varepsilon_\omega = 1, 0.5, 0.1$. The bottom standard convolution uses interpolation on a smaller area. For our metric convolutions, the top row uses unit tangent balls, and the bottom row uses geodesic balls. Displayed values are the PSNR (higher is better).
        }
        \label{fig:cameraman blurs no learn}
    % \end{center}
    % \vskip -0.2in
\end{figure*}

\subsection{Learning Filtering on a Single Image}
\label{sec: learning filtering on a single image}

\begin{table*}[t]
    \centering
    % \begin{center}
        \resizebox{0.9\textwidth}{!}{%
        \begin{small}
            \begin{sc}
                \begin{tabular}{cc ccccc c ccccc c ccccc}
                    \toprule
                     & & \multicolumn{5}{c}{\bf{Deformable}} & & \multicolumn{11}{c}{\bf{Unit tangent ball (ours)}}\\
                     & &\multicolumn{5}{c}{} & & \multicolumn{5}{c}{$\varepsilon_\omega = 0.9$} & & \multicolumn{5}{c}{$\varepsilon_\omega = 0.1$} \\
                      \cmidrule{3-7} \cmidrule{9-13} \cmidrule{15-19} 
                    & \diagbox[innerleftsep=0em,innerrightsep=0em]{$\sigma_n$}{$k$} & 5 & 11 & 31 & 51 & 121 & & 5 & 11 & 31 & 51 & 121 & & 5 & 11 & 31 & 51 & 121\\
                    \midrule
                     % \multirow{3}{*}{$f_{\textit{train}}$} & $0.1$  & 3.17e-4 & 1.14e-3 & 5.50e-3 & 9.64e-3 & 2.88e-2 & & 1.06e-3 & 1.05e-3 & 9.31e-4 & 9.17e-4 & 8.93e-4 & & 9.31e-4 & 9.03e-4 & 8.27e-4 & 7.37e-4 & 6.60e-4  \\
                     % & $0.3$  & 2.73e-5 & 1.20e-4 & 6.78e-4 & 1.45e-3 & 8.08e-3 & & 3.90e-3 & 3.84e-3 & 3.70e-3 & 4.68e-3 & 3.73e-3 & & 3.53e-3 & 3.46e-3 & 3.23e-3 & 3.15e-3 & 3.08e-3 \\
                     % & $0.5$  & 3.20e-6 & 1.23e-5 & 1.06e-4 & 2.70e-4 & 1.83e-3 & & 1.04e-2 & 7.34e-3 & 6.15e-3 & 5.62e-3 & 5.31e-3 & & 1.46e-2 & 7.29e-3 & 6.75e-3 & 5.62e-3 & 4.40e-3 \\
                     % \midrule
                     %\multirow{3}{*}{$f_{\textit{test}}$}
                     \multirow{3}{*}{$\textit{MSE}$}
                     & $0.1$  & 2.00e-3 & 4.49e-3 & 1.46e-2 & 2.24e-2 & 5.49e-2 & & \underline{1.68e-3} & \underline{1.57e-3} & \underline{1.54e-3} & \underline{1.54e-3} & \underline{1.54e-3} & & \pmb{1.60e-3} & \pmb{1.46e-3} & \pmb{1.44e-3} & \pmb{1.42e-3} & \pmb{1.41e-3}  \\
                     & $0.3$  & \pmb{7.26e-3} & 8.97e-3 & 1.99e-2 & 2.83e-2 & 6.18e-2 & & 8.13e-3 & \underline{7.43e-3} & \underline{7.58e-3} & \underline{8.64e-3} & \underline{8.28e-3} & & \underline{8.10e-3} & \pmb{7.32e-3} & \pmb{7.45e-3} & \pmb{7.52e-3} & \pmb{7.82e-3} \\
                     & $0.5$  & \pmb{7.86e-3} & \pmb{1.20e-2} & 2.23e-2 & 3.07e-2 & 6.50e-2 & & 1.85e-2 & \underline{1.68e-2} & \pmb{1.70e-2} & \pmb{1.70e-2} & \underline{1.72e-2} & & \underline{1.84e-2} & 1.70e-2 & \underline{1.73e-2} & \underline{1.72e-2} & \pmb{1.71e-2}  \\
                     \midrule
                     \multirow{3}{*}{$\delta_{\textit{MSE}}$} & $0.1$  & 5.3 & 3.0 & 1.6 & 1.3 & 0.9 & & 0.6 & 0.5 & 0.7 & 0.7 & 0.7 & & 0.7 & 0.6 & 0.7 & 0.9 & 1.1  \\
                     & $0.3$  & 265 & 74 & 28 & 18 & 6.6 & & 1.1 & 0.9 & 1.1 & 0.8 & 1.2 & & 1.3 & 1.1 & 1.3 & 1.4 & 1.5 \\
                     & $0.5$  & 4554 & 971 & 210 & 113 & 35 & & 0.8 & 1.3 & 1.8 & 2.0 & 2.2 & & 1.3 & 1.3 & 1.6 & 2.1 & 2.9  \\
                     \bottomrule
                \end{tabular}
            \end{sc}
        \end{small}
        }
    % \end{center}
    % \vskip -0.1in
    \caption{
    Denoising test MSE results (top) on a single noisy greyscale cameraman image when training on a different single noisy version, with noise level $\sigma_n$. Filters use $k\times k$ samples at each pixel location. Positional parameters of the convolutions are learnt, but weights are fixed. We also give the normalised generalisation gap $\delta_{\textit{MSE}}$. The parameter $\varepsilon_\omega\in(0,1]$ controls the tolerated amount of metric asymmetry, with $\varepsilon_\omega = 1$ being symmetric. For all numbers, lower is better. Best test MSE are in bold, and second best are underlined.
    }
    \label{tab: mse res single im dataset deform unit tangent no nn}
\end{table*}

In this experiment, we learn convolution kernel shapes for deformable and UTB convolutions with fixed uniform kernel weights using the Cholesky-based implementation. Learning is performed on a single noisy cameraman image with varying noise levels $\sigma_n\in\{0.1, 0.3, 0.5\}$ and tested on another noisy version of the same image with the same noise level. 
Gradient descent on the Mean Squared Error (MSE) loss is employed to optimise raw offsets and metric parameters, rather than those provided by a standard convolution as we only operate on cameraman images. This experiment evaluates if convolutions learn the image surface structure or just overfit to random noise. Convolutions use $k\times k$ samples with $k\in\{5,11,31,51,121\}$, and training runs for 100 iterations. Surprisingly, both methods require high learning rates (of magnitude $10^4$ to $10^8$), differing from typical rates or those in the original deformable convolution works \cite{dai2017deformable,zhu2019moredeformable}. A fixed learning rate did not yield meaningful results in all setups, necessitating a search for a good learning rate each time (see \cref{sec: lr denoising} for details).

Full quantitative performance is given in \cref{tab: mse res single im dataset deform unit tangent no nn}, where we give the MSE on the train and test image, along with the normalised generalisation gap $\delta_{\textit{MSE}} = \frac{\textit{MSE}_{\textit{test}} - \textit{MSE}_{\textit{train}}}{\textit{MSE}_{\textit{train}}}$. We also provide full qualitative results in \cref{sec: vis res learning single im denoising}.

\subsection{Training Details for Learning Filtering on a Dataset}
\label{sec: training details denoinsing dataset}

We trained all models using stochastic gradient descent for 100 epochs on the MSE loss with a learning rate chosen through logarithmic grid search (see \cref{sec: lr denoising}) that is the same for the sample locations and the kernel weights following the default methodology of \cite{dai2017deformable}. Training involved various kernel sizes $k\in\{5,11,31\}$ on a single small commercial GPU. For $k=31$, the batch size was reduced to $4$ to fit GPU memory, and was $32$ otherwise.

\subsection{Learning Rate Search for Denoising Convolution Filters}
\label{sec: lr denoising}

The learning rate for denoising convolution filters, whether they be of deformable convolution or our metric UTB convolution, or having fixed or learnable kernel weights, were found with a learning rate finder using a logarithmic grid search on the train data for a single epoch \cite{smith2017cyclical}.
%Mention fastai tweak of log grid search? But they decide on test data
We report in \cref{tab: lr single im dataset deform unit tangent no nn,tab: lr small dataset deform unit tangent no nn} the chosen learning rates.

\begin{table*}[ht]
    \centering
    % \begin{center}
        \resizebox{0.8\textwidth}{!}{%
        \begin{small}
            \begin{sc}
                \begin{tabular}{c ccccc c ccccc c ccccc}
                    \toprule
                     & \multicolumn{5}{c}{\bf{Deformable}} & & \multicolumn{11}{c}{\bf{Unit tangent ball (ours)}}\\
                     &\multicolumn{5}{c}{} & & \multicolumn{5}{c}{$\varepsilon_\omega = 0.9$} & & \multicolumn{5}{c}{$\varepsilon_\omega = 0.1$} \\
                      \cmidrule{2-6} \cmidrule{8-12} \cmidrule{14-18} 
                    \diagbox[innerleftsep=0em,innerrightsep=0em]{$\sigma_n$}{$k$} & 5 & 11 & 31 & 51 & 121 & & 5 & 11 & 31 & 51 & 121 & & 5 & 11 & 31 & 51 & 121\\
                    \midrule
                     $0.1$  & 4.6e6 & 2.4e7 & 1.9e8 & 6.1e8 & 1.1e9 & & 3.7e5 & 4.0e5 & 4.5e5 & 5.7e5 & 5.7e5 & & 2.5e5 & 3.0e5 & 4.0e5 & 4.0e5 & 6.0e5  \\
                     $0.3$  & 1.5e6 & 7.4e6 & 6.0e7 & 1.9e8 & 9.8e8 & & 1.0e4 & 1.0e4 & 1.0e4 & 1.2e5 & 1.4e5 & & 1.0e4 & 1.0e4 & 1.0e4 & 1.0e4 & 1.1e5 \\
                     $0.5$  & 5.7e5 & 2.9e6 & 2.4e7 & 7.6e7 & 4.9e8 & & 1.0e3 & 5.0e3 & 1.0e4 & 1.0e4 & 1.0e4 & & 2.0e3 & 5.0e3 & 1.0e4 & 1.0e4 & 1.0e4 \\
                     \bottomrule
                \end{tabular}
            \end{sc}
        \end{small}
        }
    % \end{center}
    % \vskip -0.1in
    \caption{Chosen learning rates for training the positional parameters on a single noisy image in the experiments of \cref{tab: mse res single im dataset deform unit tangent no nn}.
    }
    \label{tab: lr single im dataset deform unit tangent no nn}
\end{table*}

\begin{table*}[ht]
    \centering
    % \begin{center}
    \resizebox{0.95\textwidth}{!}{%
    \begin{small}
        \begin{sc}  
            \begin{tabular}{cc ccc c ccc c ccc c ccc c ccc c ccc}
                \toprule
                     & & \multicolumn{7}{c}{\bf{Deformable}} & & \multicolumn{15}{c}{\bf{Unit tangent ball (ours)}}\\
                     & &\multicolumn{3}{c}{} & & \multicolumn{3}{c}{} & & \multicolumn{7}{c}{$\varepsilon_\omega=0.1$} & & \multicolumn{7}{c}{$\varepsilon_\omega=0.9$} \\
                     \cmidrule{11-17} \cmidrule{19-25}
                     & &\multicolumn{3}{c}{FKW} & & \multicolumn{3}{c}{LKW} & & \multicolumn{3}{c}{FKW} & & \multicolumn{3}{c}{LKW} & & \multicolumn{3}{c}{FKW} & & \multicolumn{3}{c}{LKW} \\
                      \cmidrule{3-5} \cmidrule{7-9} \cmidrule{11-13} \cmidrule{15-17} \cmidrule{19-21} \cmidrule{23-25} 
                     & \diagbox[innerleftsep=0em,innerrightsep=0em]{$\sigma_n$}{$k$} & 5 & 11 & 31 & & 5 & 11 & 31 & & 5 & 11 & 31 & & 5 & 11 & 31 & & 5 & 11 & 31 & & 5 & 11 & 31 \\
                     \midrule
                     \multirow{3}{*}{BSDS300} & $0.1$  & 5.7e2 & 3.4e3 & 1.0e4 & & 1.0e-2 & 1.0e-2 & 6.6e-4 & & 6.0e0 & 3.1e0 & 7.0e-1 & & 1.0e-2 & 1.0e-2 & 1.0e-3 & & 6.1e0 & 1.1e0 & 1.0e0 & & 5.0e-2 & 1.0e-2 & 1.0e-3  \\
                     & $0.3$  & 3.2e2 & 1.5e3 & 1.0e4 & & 1.0e-2 & 1.0e-2 & 6.6e-4 & & 1.0e0 & 1.0e-1 & 1.0e-1 & & 1.0e-1 & 1.0e-2 & 1.0e-3 & & 1.0e0 & 1.1e0 & 1.0e-1 & & 1.0e-2 & 1.0e-2 & 1.0e-3   \\
                     & $0.5$  & 2.5e2 & 1.5e3 & 1.0e4 & & 1.0e-2 & 1.0e-2 & 6.6e-4 & & 6.1e-1 & 3.2e-1 & 4.5e-2 & & 1.0e-2 & 1.0e-2 & 1.0e-3 & & 1.0e0 & 3.8e-1 & 3.1e-2 & & 1.0e-2 & 1.0e-2 & 1.0e-3 \\
                     \midrule
                     \multirow{3}{*}{PascalVOC} & $0.1$ & 4.1e2 & 4.0e3 & 4.0e3 & & 1.0e-2 & 1.0e-2 & 1.0e-3 & & 1.2e1 & 4.3e0 & 5.0e-2 & & 1.0e-1 & 1.0e-2 & 1.0e-3 & & 1.4e1 & 1.5e0 & 1.0e0 & & 1.0e-2 & 1.0e-2 & 1.0e-3  \\
                     & $0.3$  & 3.0e2 & 4.0e3 & 4.0e3 & & 1.0e-2 & 1.0e-2 & 1.0e-3 & & 1.5e0 & 1.0e-1 & 5.0e-2 & & 1.0e-2 & 1.0e-2 & 1.0e-3 & & 4.0e0 & 5.0e-1 & 1.0e-1 & & 1.0e-1 & 1.0e-2 & 1.0e-3   \\
                     & $0.5$  & 4.1e2 & 1.1e3 & 4.3e3 & & 1.0e-2 & 1.0e-2 & 1.0e-3 & & 4.3e-1 & 1.0e-2 & 1.0e-2 & & 1.0e-2 & 1.0e-2 & 1.0e-3 & & 8.1e0 & 3.8e-1 & 1.0e-2 & & 1.0e-2 & 5.0e-3 & 1.0e-3 \\ 
                    \bottomrule
            \end{tabular}
        \end{sc}
    \end{small}
    }
    % \end{center}
    % \vskip -0.1in
    \caption{
    Chosen learning rates for training the positional parameters on noisy image datasets in the experiments of \cref{tab: mse and delta mse res small dataset deform unit tangent no nn}.
    }
    \label{tab: lr small dataset deform unit tangent no nn}
    \vspace{-1em}
\end{table*}

\subsection{Further Visual Results on Learning Convolutions on a Single Image}
\label{sec: vis res learning single im denoising}
We provide in \cref{fig: cameraman deform vs tangent sigma 01 k 5 11 31,fig: cameraman deform vs tangent sigma 01 k 51 121,fig: cameraman deform vs tangent sigma 03 k 5 11 31,fig: cameraman deform vs tangent sigma 03 k 51 121,fig: cameraman deform vs tangent sigma 05 k 5 11 31,fig: cameraman deform vs tangent sigma 05 k 51 121} visual comparisons of learnt deformable and metric UTB convolutions, when learning only the shape of the convolution and keeping the filtering weights fixed. These results correspond to the quantitative ones in \cref{tab: mse res single im dataset deform unit tangent no nn}.

In each of these figures, two consecutive rows of plots correspond to results with a fixed noise standard deviation $\sigma_n$ and number of samples $k\times k$ for the convolution. Each of these sets of two rows of plots are organised as follows. Top left is the groundtruth image and bottom left is the train and test MSE during training. Starting from the second column, the top row corresponds to train whereas the bottom one refers to test. Starting from the second column, from left to right: input noisy image, deformable convolution result, our metric UTB convolution results with different $\varepsilon_\omega\in\{0.9,0.1\}$ controlling the scale of $\omega$, with $\omega \equiv 0$ for $\varepsilon_\omega = 1$. Numbers provided correspond to the PSNR with respect to the groundtruth image, with higher scores being better.

Note that using an extremely high number of samples, e.g.\ $121 \times 121$, does not increase the size of the sampling domain for our metric UTB convolution as the unit ball does not depend on the sample size. Larger kernels imply more samples in the same unit ball. On the other hand, deformable convolution suffers from high number of samples as it relies on the reference template of $121\times 121$ pixels, which in our experiments is half the image size in width. As such, in many pixel locations, the reference support overlaps with the outside of the image, where it is padded to $0$, which makes it impossible for gradient-descent based strategies to learn meaningful offsets in such cases.

\subsection{Implementation Considerations of CNN Classification}
\label{sec: cnn mnist implementation details}

We here use the common ResNet terminology. All traditional ResNet architectures are a succession of \textit{layers}. The initial layer, sometimes called \textit{conv1}, has a single convolution module, along with other operations. Following the initial layer, comes a succession of four layers, named \textit{layer1}, \textit{layer2}, \textit{layer3}, \textit{layer4}. Each of these layers consist in a sequence of convolution \textit{blocks}. These blocks can be \textit{basic} for smaller networks like ResNet18, or \textit{bottleneck} ones for larger versions like ResNet50 and ResNet152. Each block of a network has the same structure, up to a final pooling. Basic blocks have two $3\times 3$ convolution modules, whereas bottleneck blocks have only one, when no downsampling is involved. None of these convolutions have an additional bias term. The first $3\times 3$ convolution of the first block of every layer has a stride of $2$, whereas all the other have a stride of $1$. All convolutions use a dilation of $1$ (no dilation). 

In the experiments of \cref{tab: classif resnet18 mnist fmnist,tab: classif resnet18 cifar10 cifar100}, we directly replace only the $3\times 3$ convolution modules with their $3\times 3$ adaptive counter-parts, i.e.\ deformable, shifted, and our UTB convolution. We use the same number of input and output channels and no bias. Only the convolutions in layer2, layer3, and layer4 are changed. Those in layer1 or conv1 are unchanged and remain standard. We also change the stride of the first convolution of the first block of layer4 from $2$ to $1$ and to avoid decreasing the receptive field we increase its dilation from $1$ to $2$. A dilation different from $1$ impacts the position of the reference kernel $\Delta^{\textit{ref}}$ of deformable and shifted convolutions, and does not impact our metric convolution. The methodology described here is a direct imitation of that of \cite{dai2017deformable,zhu2019moredeformable,yu2022entire}. However, unlike \cite{zhu2019moredeformable,yu2022entire}, we do not use modulation for simplicity as explained in the main paper: we wish to preserve the weight sharing assumption and sample uniformly the unit balls.

We propose to initialise our metric UTB convolution modules in the following way. Denoting $c_{\textit{in}}$ the number of input channels, kernel weights are initialised (and fixed in FKW) to $z_{k,c_{\textit{in}}} = \tfrac{1}{c_{\textit{in}} k^2}$. As for the weights of intermediate standard convolution with $5$ output channels computing the metric parameters $L_{1,1}$, $L_{1,2}$, $L_{2,2}$, $\omega_1$, and $\omega_2$ in this order, we initialise them as follows per output channel: the first and third ones have uniform weights set to $z_{k,c_{\textit{in}}}$, and the other ones are set uniformly to $\varepsilon = 10^{-6}$. In particular, this means that $\omega\approx 0$ initially, and the network must learn how much asymmetry is best. For simplicity however, we took $\omega=0$ always, i.e.\ restricting the metric to Riemannian ones, by taking $\varepsilon_\omega=1$. We also took $\varepsilon_L = 0.01$.

Like in the previous experiments, we test both fixing the kernel weights (FKW) of the non-standard convolutions to uniform values and learning only the sample locations, or learn simultaneously sample locations and the weights (LKW). Note that FKW has never been tested in the community of non-standard convolutions for neural networks. Prior works \cite{li2020anisotropic} start only with pretrained weights, up to module conversion, obtained on ImageNet \cite{deng2009imagenet} classification with vanilla modules. We argue that such a methodology does not properly reflect the strengths of convolutions with changeable supports. Indeed, we only switch a convolution with another one, thus the obtained network is still a CNN, albeit non-standard and theoretically more general. It should thus still provide good results when weights are learned from scratch. We thus train either from scratch (SC) or do transfer learning (TL) by starting from pretrained weights obtained on ImageNet.

All networks are trained for $240$ epochs with the Adam optimiser \cite{kingma2014adam} on the cross-entropy loss. We take a batch size of $128$, a base learning rate of $\eta = 0.0001$, and we use cosine annealing \cite{loshchilov2016sgdr} for scheduling the learning rate with maximal temperature $T_\mathrm{max} = 240$ as is commonly done. Following the common practice, images fed to the networks are centred and normalised following the dataset mean and standard deviation. 
When training on CIFAR with a single GTX 2080 Ti GPU, our metric CNN takes 7h and uses 1021MB GPU memory, compared to 5.5 hours and 940MB for a CNN using deformable convolutions (see \cref{tab: memory time cifar}). Although more expensive than deformable convolution, it is still faster than FlexConv  while providing superior performance \cref{tab: metric utb cnn vs flexconv}.
Note that our unoptimised code leaves room for improvements. Optimising our code (e.g.\ fused CUDA kernels or C level code) could greatly improve speed. To reduce memory usage, analytical offsets (and the convolution result) could be computed in a small loop rather than being loaded simultaneously in memory, unlike in deformable convolution.
% It takes about 7 hours to train on CIFAR a metric CNN on a single GTX 2080 Ti GPU.

Note though that although we only used fairly small datasets due to our technical limitations, they are nevertheless large enough to provide valuable insights \cite{dages2023model,dages2025model}.

\begin{table}[]
    \centering
    \resizebox{0.8\columnwidth}{!}{%
    \begin{sc}
    \begin{tabular}{ccc}
        \toprule
        & \bf{Memory (MB)} & \bf{Time (s)} \\
        \midrule
        \bf{FlexConv} & - & 127 \\
        \midrule
        \bf{Deformable} & 940 & 83\\
        \bf{Metric UTB (Ours)} & 1021 & 105 \\
        \bottomrule
    \end{tabular}
    \end{sc}
    }%
    \caption{Peak GPU memory used and single epoch training time on CIFAR. FlexConv training time is taken from Tab. 3 in the original paper \cite{romero2021flexconv} for its best neural network FlexNet-16, but its maximum GPU utilisation is not reported there. Our method uses comparable memory to the more general deformable convolution but is slightly slower. Nevertheless, it is faster than the even more general FlexConv, while providing superior performance (see \cref{tab: metric utb cnn vs flexconv}), even though FlexConv uses large convolution kernels with weights computed from expensive MLPs and then modulated by Gaussian masks.}
    \label{tab: memory time cifar}
\end{table}

\begin{table}[]
    \centering
    \resizebox{0.8\columnwidth}{!}{%
    \begin{sc}
    \begin{tabular}{ccc}
        \toprule
        & \bf{MNIST} & \bf{CIFAR10} \\
        \midrule
        \bf{FlexConv} & 99.7 $\pm$ 0.0  & 92.2 ($\pm 0.1\%$) \\
        \bf{Metric UTB (Ours)} & $99.7^\dagger$ & 93.1 ($\pm 0.1\%$)\\
        \bottomrule
    \end{tabular}
    \end{sc}
    }%
    \caption{Reported performance of FlexConv from the original paper \cite{romero2021flexconv} (Tabs. 3 and 9) with their best model FlexNet-16, compared to our metric UTB convolution CNN. Although our network only uses $3\times 3$ convolution samples and we did not use any modulation, our method outperforms FlexConv even though it is significantly more complex and expensive \cref{tab: memory time cifar}, requiring large convolution kernels, and thus high number of $k$, while also using Gaussian modulation. FlexConv results are averaged over three random seeds, whereas ours uses one on MNIST ($\dagger$) and eight on CIFAR. Our performance corresponds to those in \cref{tab: classif resnet18 mnist fmnist,tab: classif resnet18 cifar10 cifar100} with precision rounded to the first decimal, like the results reported for FlexConv \cite{romero2021flexconv}.}
    \label{tab: metric utb cnn vs flexconv}
    \vspace{-1em}
\end{table}

For all datasets, including both train and test splits, input images are first normalised according to the training dataset's mean and standard deviation.
Since MNIST and Fashion-MNIST are curated datasets with objects centred and roughly aligned, we do not need data augmentation to train the models. However, the natural images in CIFAR-10 and CIFAR-100 are not, and therefore we apply data augmentation on training images by randomly cropping the input image to a patch, resizing the patch to the full image size, and then randomly horizontally flipping the image. 

All CNNs with our metric UTB convolutions use the onion peeling sampling polar kernel sampling strategy (\cref{sec: polar sampling strategies}) and the metric computation from 7 numbers (\cref{alg: computing the metric from 7 numbers}), except for those with fixed kernel weights (FKW) which use the version with 6 numbers (\cref{alg: computing the metric from 6 numbers}).
On CIFAR-10 and CIFAR-100, we got marginally better results when training the networks with learnable kernel weights (LKW) using an L1 regularisation loss on the weights of the intermediate convolutions with a Lagragian coefficient of $5000$. On CIFAR-10 with learnable weights and transfer-learned (LKW-TL), we got even slightly better results when using 50 warmup epochs, where during the warmup the output of the intermediate convolution is multiplied by 0. This warmup imitates the baseline sampling strategy of a fixed kernel while still using our metric framework.

\subsection{Further Ablation Experiments of CNN Classification}
\label{sec: Further Ablation Experiments of CNN Classification}

\begin{table}[ht]
    \Large
    \centering
    \resizebox{\columnwidth}{!}{%
        \begin{tabular}{lllccc}
            \toprule
             & & & \textbf{ResNet18} & \textbf{ResNet50} & \textbf{ResNet152} \\
            \midrule
            \multirow{5}{*}{\textbf{TOP1}}
             & Standard             & 
             & 73,61 ($\pm0,31\%$)
             & 79,06 ($\pm0,29\%$)
             & 79,86 ($\pm0,30\%$) \\
             \cmidrule{2-6}
             & Deformable           & 
             & 73,55 ($\pm0,26\%$)
             & 78,15 ($\pm0,25\%$)
             & 79,73 ($\pm0,41\%$) \\
             & Shifted              & 
             & 73,15 ($\pm0,17\%$)
             & 78,33 ($\pm0,32\%$)
             & 79,70 ($\pm0,35\%$) \\
             \cmidrule{2-6}
             & \multirow{2}{*}{Metric UTB (Ours)} 
             & Randers
             & \textbf{74,20 ($\pm0,58\%$)}
             & \textbf{79,17 ($\pm0,60\%$)}
             & \textbf{80,56 ($\pm0,80\%$)} \\
             &                               
             & Riemann
             & \underline{74,19 ($\pm0,68\%$)}
             & \underline{79,11 ($\pm0,44\%$)}
             & \underline{80,27 ($\pm0,43\%$)} \\
            \midrule
            \multirow{5}{*}{\textbf{TOP5}}
             & Standard              
             & 
             & 91,83 ($\pm0,20\%$)
             & 94,76 ($\pm0,17\%$)
             & 94,53 ($\pm0,22\%$) \\
             \cmidrule{2-6}
             & Deformable           
             & 
             & 91,50 ($\pm0,23\%$)
             & 94,08 ($\pm0,17\%$)
             & 94,51 ($\pm0,23\%$) \\
             & Shifted              
             & 
             & 91,37 ($\pm0,15\%$)
             & 94,16 ($\pm0,15\%$)
             & 94,30 ($\pm0,15\%$) \\
             \cmidrule{2-6}
             & \multirow{2}{*}{Metric UTB (Ours)} 
             & Randers             
             & \textbf{92,31 ($\pm0,36\%$)}
             & \textbf{94,85 ($\pm0,34\%$)}
             & \textbf{94,94 ($\pm0,32\%$)} \\
             &  
             & Riemann             
             & \underline{92,26 ($\pm0,28\%$)}
             & \underline{94,81 ($\pm0,19\%$)}
             & \underline{94,75 ($\pm0,23\%$)} \\
            \bottomrule
        \end{tabular}
    }%
    \caption{Mean test accuracies of different ResNet architectures with replacement of convolutions only in the last layers (\textit{layer4}), using either standard or non‑standard convolutions, with 10 independent runs per configuration. Higher is better. In parenthesis is the standard deviation (lower is better).}
    \label{tab: CIFAR100 last layer replacement}
\end{table}

We here present additional ablation experiments on {$\textrm{CIFAR100}$} evaluating a different replacement strategy for non-standard convolution layers and the use of Riemannian metric convolutions instead of Finsler ones.

\paragraph{Replacement layers.} 
There is no standard practice for selecting which convolution layers of the CNN to replace with non-standard convolutions. However, it is common in the field to focus on deeper layers. Our design presented in the main paper and in \cref{sec: cnn mnist implementation details} follows the strategy of deformable convolution v2 \cite{zhu2019moredeformable}, which expands on the original version \cite{dai2017deformable}, where only the very last layers were replaced. While the original work \cite{dai2017deformable} found worsening or diminishing returns when replacing more layers, \cite{zhu2019moredeformable} argued this was due to the simplicity of the task and showed benefits when scaling up. To avoid misleading conclusions, we adopted the broader replacement strategy of \cite{zhu2019moredeformable} in the main paper.

That said, our tasks are simple, and replacing only the last layers, similar to \cite{dai2017deformable} and also \cite{li2020anisotropic}, would likely improve both speed and memory footprint while slightly boosting performance on these simple small-scale tasks. However, this might not reflect large-scale behaviour. Nevertheless, in 
\cref{tab: CIFAR100 last layer replacement}, we report CIFAR100 (LW-TL) results over 10 runs with this modified setup, where only the convolutions in \textit{layer4} are replaced, and we also test larger models.
Similar to \cite{dai2017deformable}, we obtain better performance when replacing only the last convolution layers. However, we caution against generalising this result to larger scale problems, similar to what happened with deformable convolution \cite{zhu2019moredeformable}.

As for full replacement, it is uncommon in the field. Even the recent advanced v3 version of deformable convolution -- InternImage \cite{wang2023internimage} tailored for large-scale foundation models -- has first standard convolutions for shrinking the resolution, and also uses them for downsampling between deformable blocks. In summary, full replacement is not done due sub-optimal performance, high memory cost from early high-resolution features, as offsets are computed per pixel, and redundancy in adapting low-level filters, e.g.\ edge filters. Later layers capture more complex, semantic features and benefit more from sampling location adaptation. On small tasks, early layers pretrained on ImageNet are more robust, so replacing only final layers, which are more task-dependent, is preferable. Replacing earlier layers is more interesting as the task scales.

\paragraph{Riemann or Finsler metric convolution.} 
Unlike standard and dilated convolutions, shifted and deformable convolutions sample asymmetrically around each pixel, requiring asymmetric metrics, i.e.\ Finsler, in our theory to offset unit balls
(\cref{fig: duck motivation adaptive kernels,fig: Finsler unit tangent balls,fig: overview unit ball conv,fig: adaptive neighbourhoods sampled}). We use Randers as Finsler metrics for their simple parametric form that generalises Riemannian ones.
However, we can also use symmetric metrics for our metric convolutions, such as Riemannian metrics, i.e.\ $\omega \equiv 0$ or in our implementation $\varepsilon_\omega = 1$. We present the mean test classification performance obtained with Riemannian metric convolutions in \cref{tab: CIFAR100 last layer replacement}, when replacing only the  convolutions in the last layers (\textit{layer4}). As expected, performance degrades slightly from using the more general Randers metric. However, our symmetric Riemannian metric convolutions outperform the theoretically more general asymmetric deformable and shifted convolutions. This further proves that adopting an explicit metric perspective to convolution is beneficial and induces a powerful geometric regularisation.

\begin{figure*}[t]%[th]
    \centering
    % \begin{center}
        \centerline{\includegraphics[width=\textwidth]{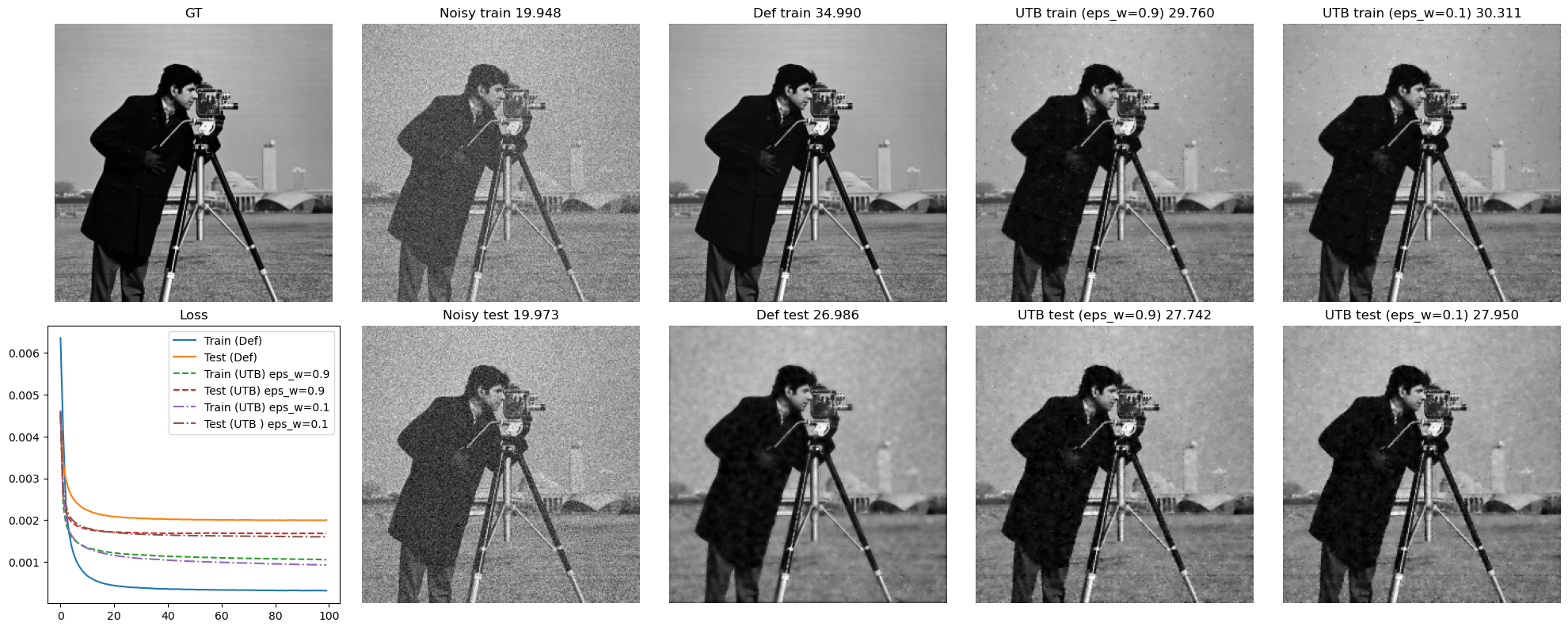}}
        \centerline{}
        \centerline{\includegraphics[width=\textwidth]{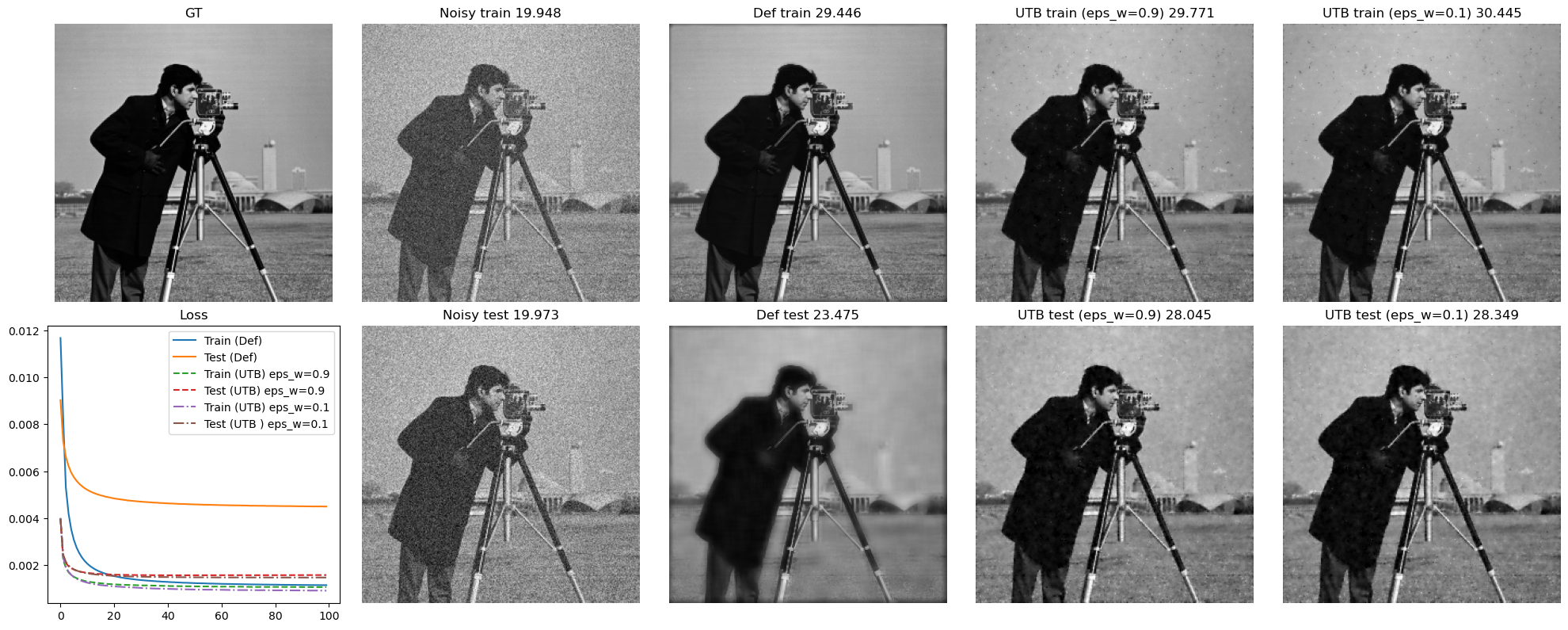}}
        \centerline{}
        \centerline{\includegraphics[width=\textwidth]{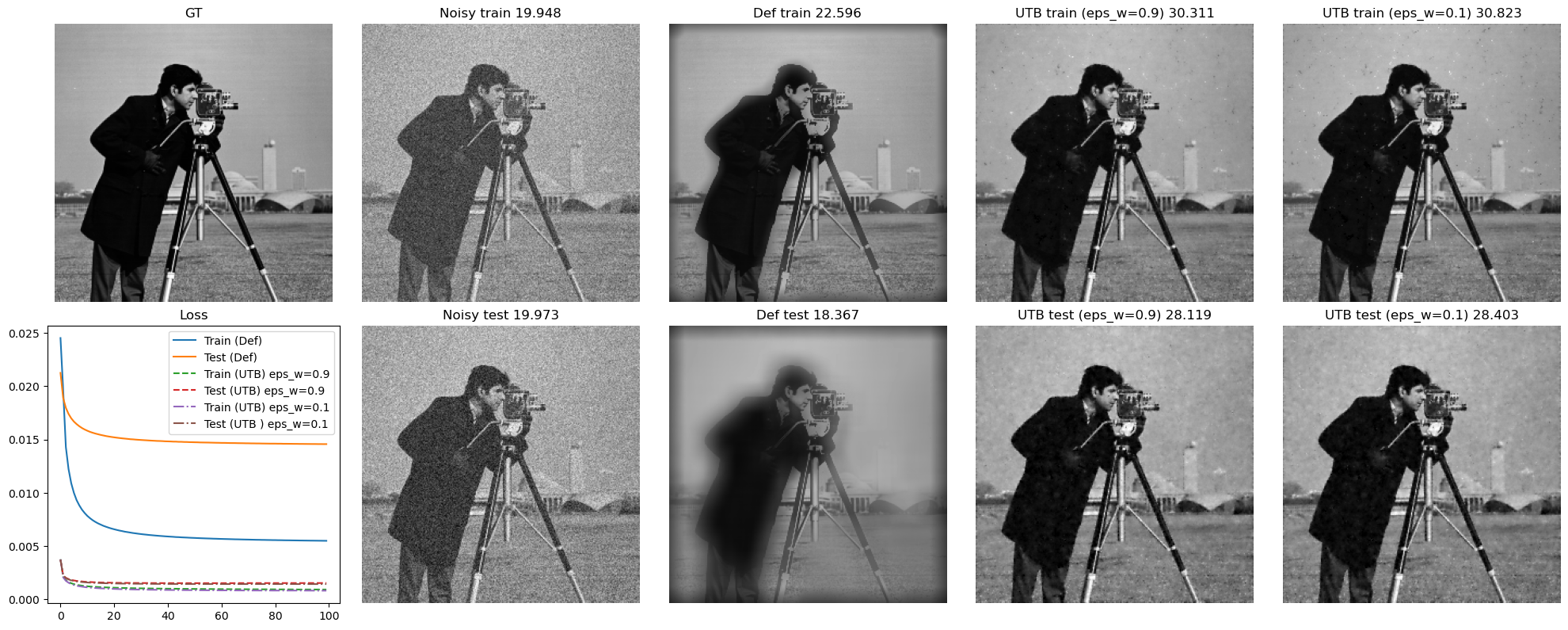}}
        \caption{
        Results of learnt deformable and our metric UTB convolutions with
        $\sigma_n = 0.1$ and $k = 5,11,31,$ from top to bottom.
        }
        \label{fig: cameraman deform vs tangent sigma 01 k 5 11 31}
    % \end{center}
    % \vskip -0.2in
\end{figure*}

\begin{figure*}[t]%[th]
    \centering
    % \begin{center}
        \centerline{\includegraphics[width=\textwidth]{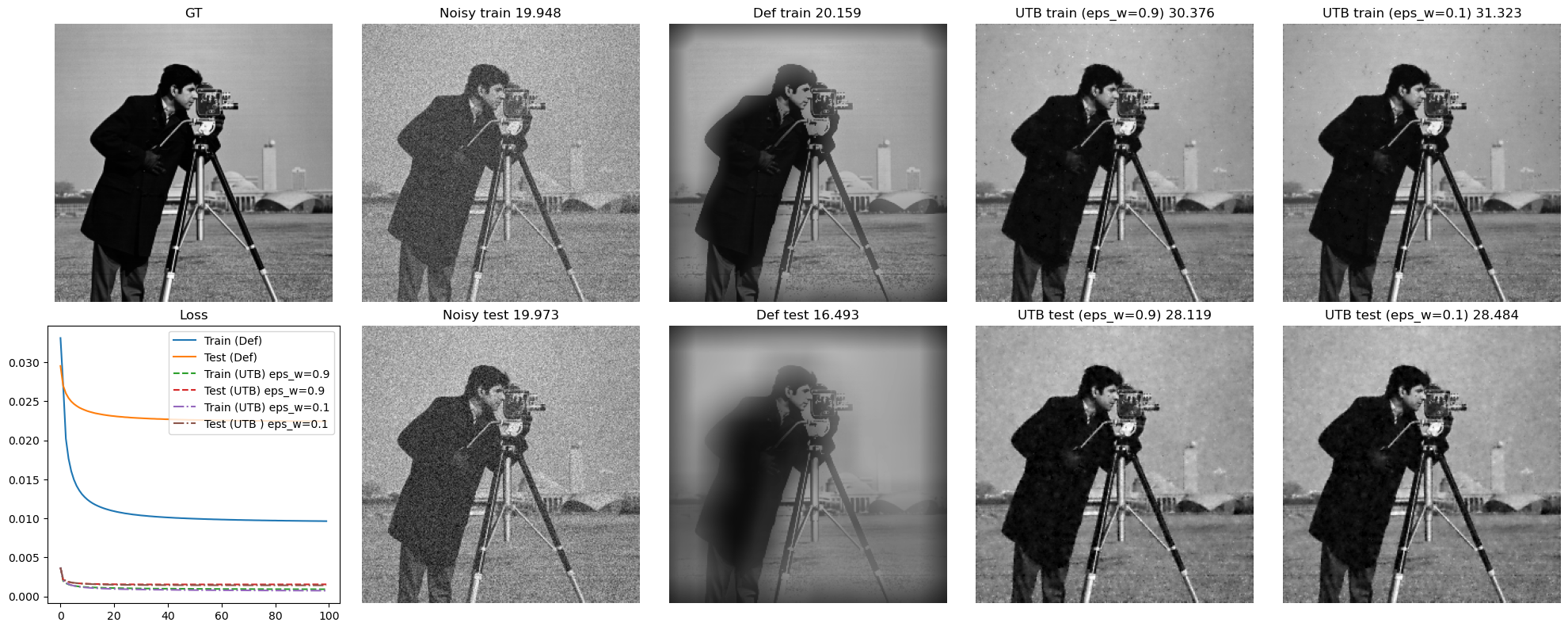}}
        \centerline{}
        \centerline{\includegraphics[width=\textwidth]{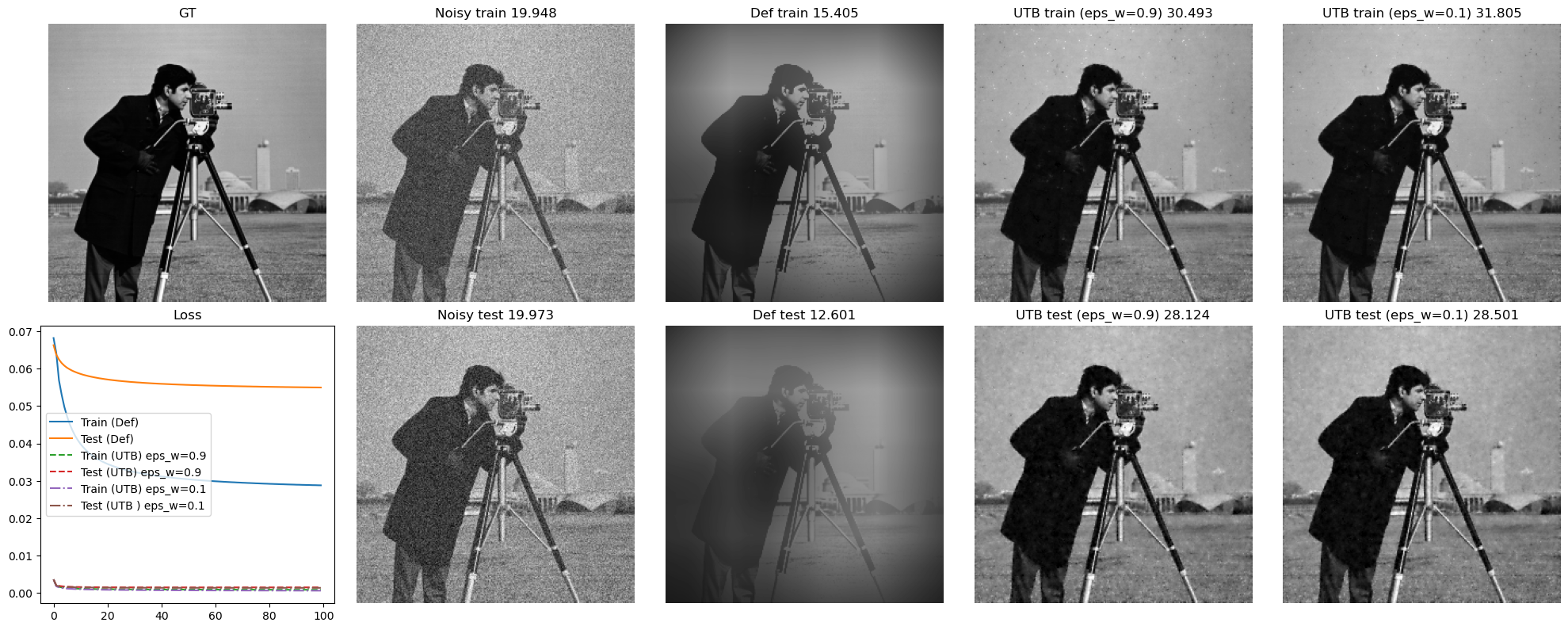}}
        \caption{Results of learnt deformable and our metric UTB convolutions with $\sigma_n = 0.1$ and $k = 51,121$ from top to bottom.
        }
        \label{fig: cameraman deform vs tangent sigma 01 k 51 121}
    % \end{center}
    % \vskip -0.2in
\end{figure*}

\begin{figure*}[t]%[th]
    \centering
    % \begin{center}
        \centerline{\includegraphics[width=\textwidth]{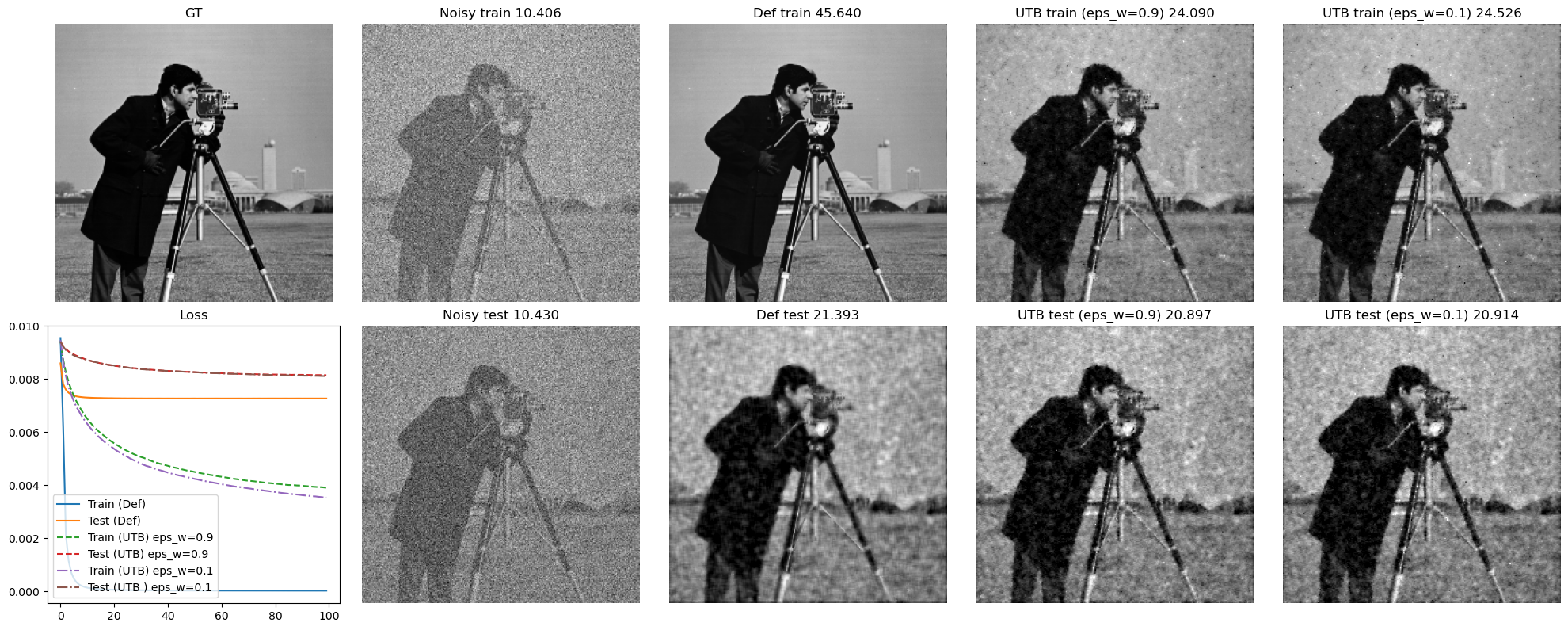}}
        \centerline{}
        \centerline{\includegraphics[width=\textwidth]{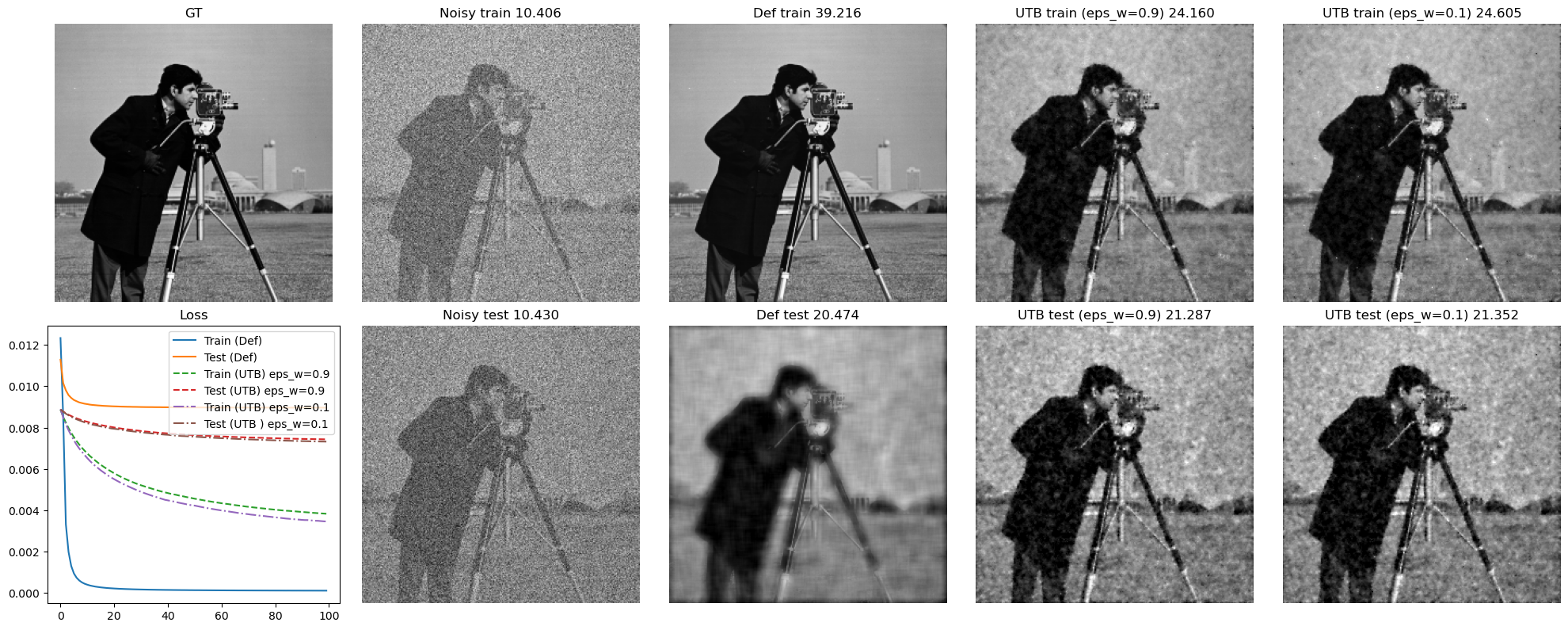}}
        \centerline{}
        \centerline{\includegraphics[width=\textwidth]{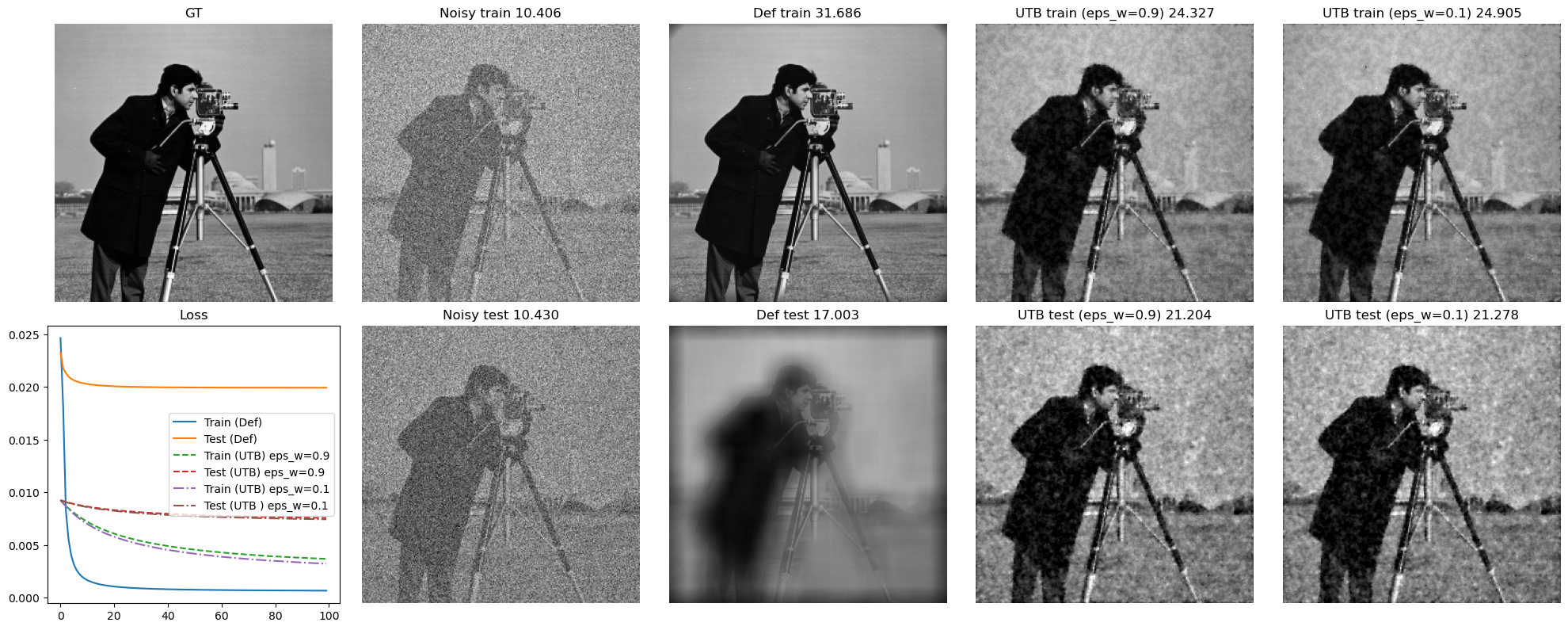}}
        \caption{Results of learnt deformable and our metric UTB convolutions with $\sigma_n = 0.3$ and $k = 5,11,31$ from top to bottom.
        }
        \label{fig: cameraman deform vs tangent sigma 03 k 5 11 31}
    % \end{center}
    % \vskip -0.2in
\end{figure*}

\begin{figure*}[t]%[th]
    \centering
    % \begin{center}
        \centerline{\includegraphics[width=\textwidth]{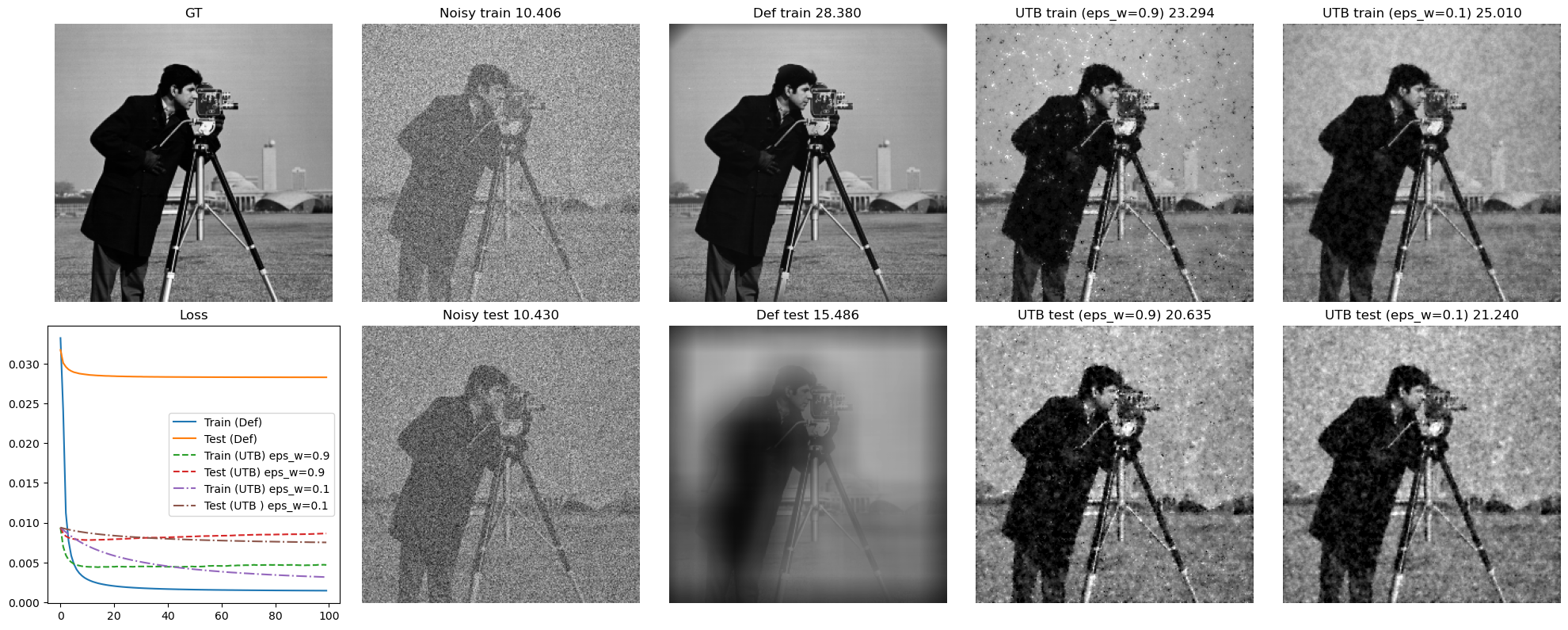}}
        \centerline{}
        \centerline{\includegraphics[width=\textwidth]{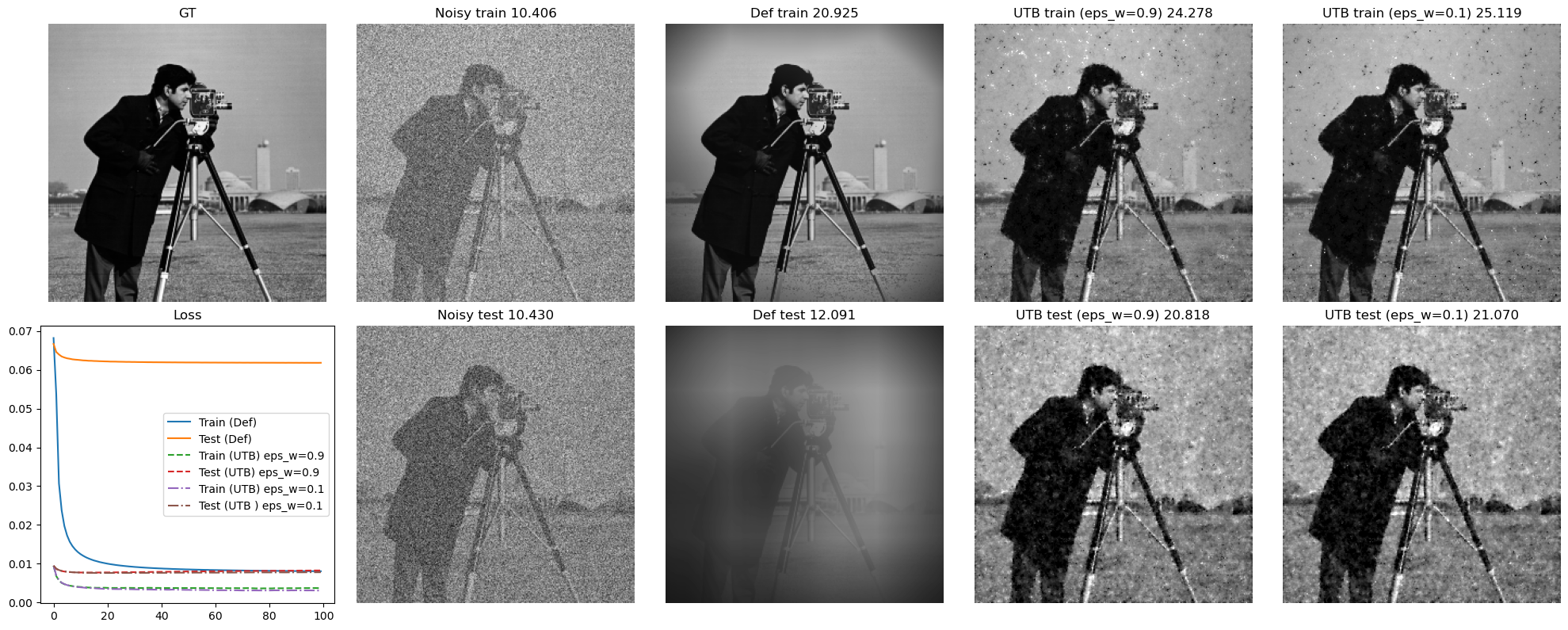}}
        \caption{Results of learnt deformable and our metric UTB convolutions with $\sigma_n = 0.3$ and $k = 51,121$ from top to bottom.
        }
        \label{fig: cameraman deform vs tangent sigma 03 k 51 121}
    % \end{center}
    % \vskip -0.2in
\end{figure*}

\begin{figure*}[t]%[th]
    \centering
    % \begin{center}
        \centerline{\includegraphics[width=\textwidth]{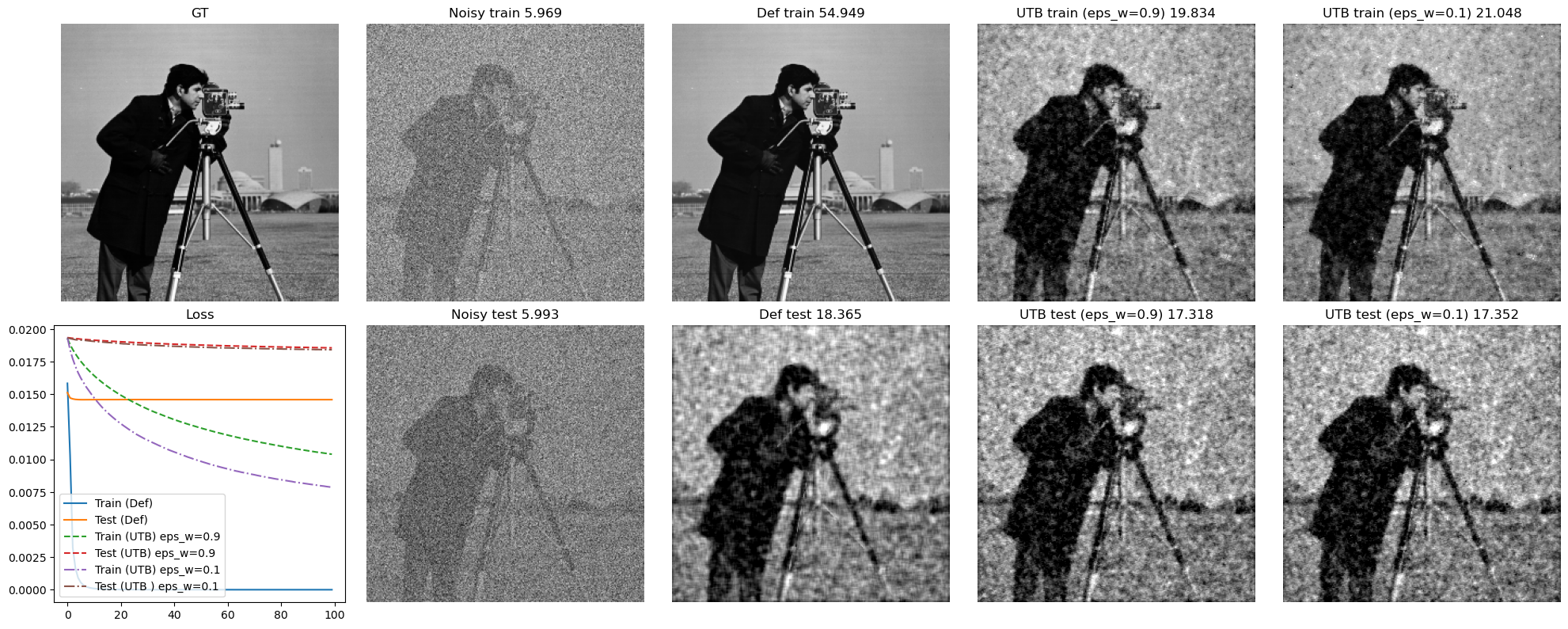}}
        \centerline{}
        \centerline{\includegraphics[width=\textwidth]{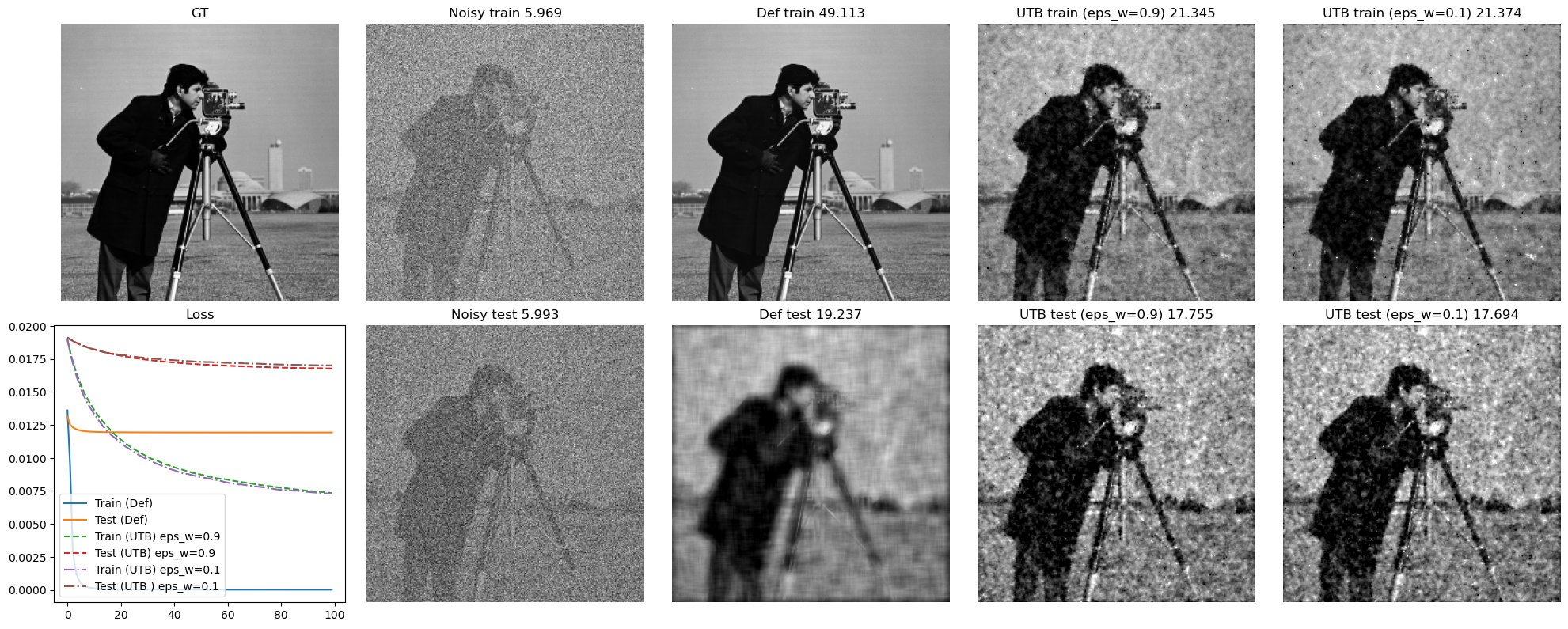}}
        \centerline{}
        \centerline{\includegraphics[width=\textwidth]{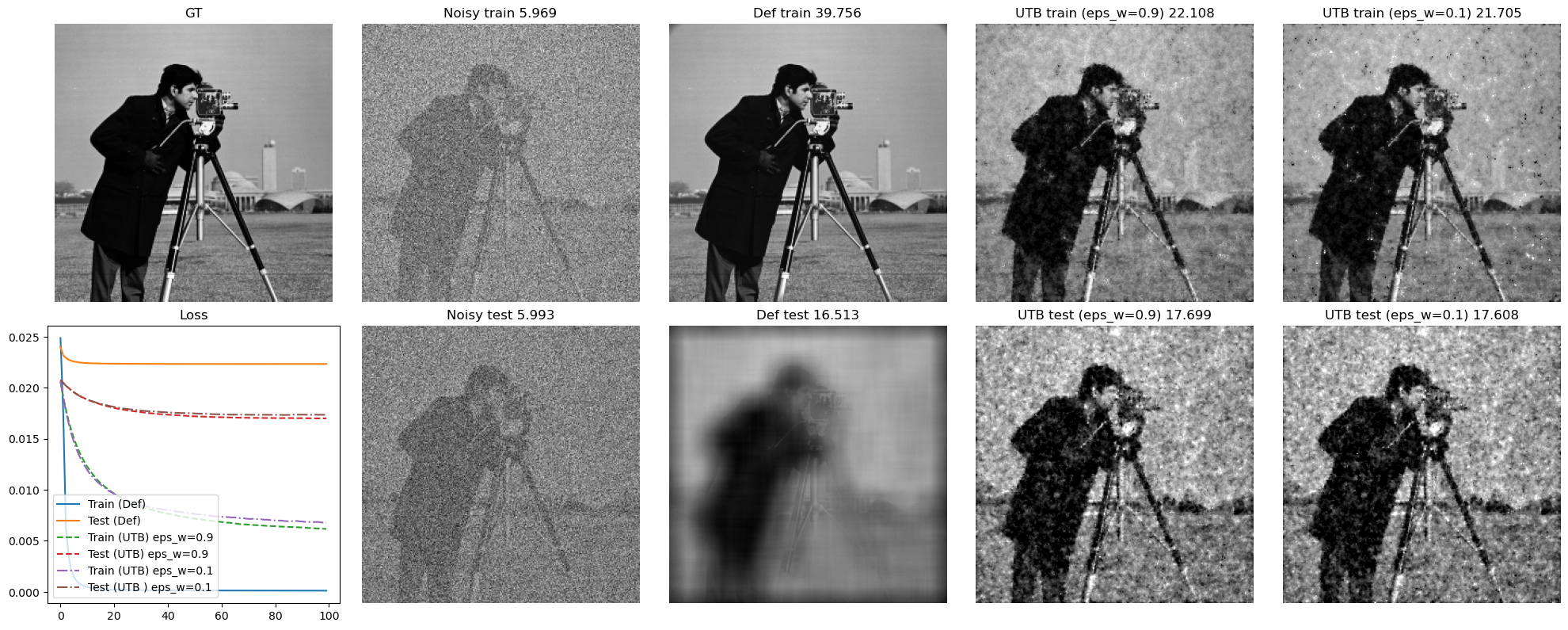}}
        \caption{Results of learnt deformable and our metric UTB convolutions with $\sigma_n = 0.5$ and $k = 5,11,31$ from top to bottom.
        }
        \label{fig: cameraman deform vs tangent sigma 05 k 5 11 31}
    % \end{center}
    % \vskip -0.2in
\end{figure*}

\begin{figure*}[t]%[th]
    \centering
    % \begin{center}
        \centerline{\includegraphics[width=\textwidth]{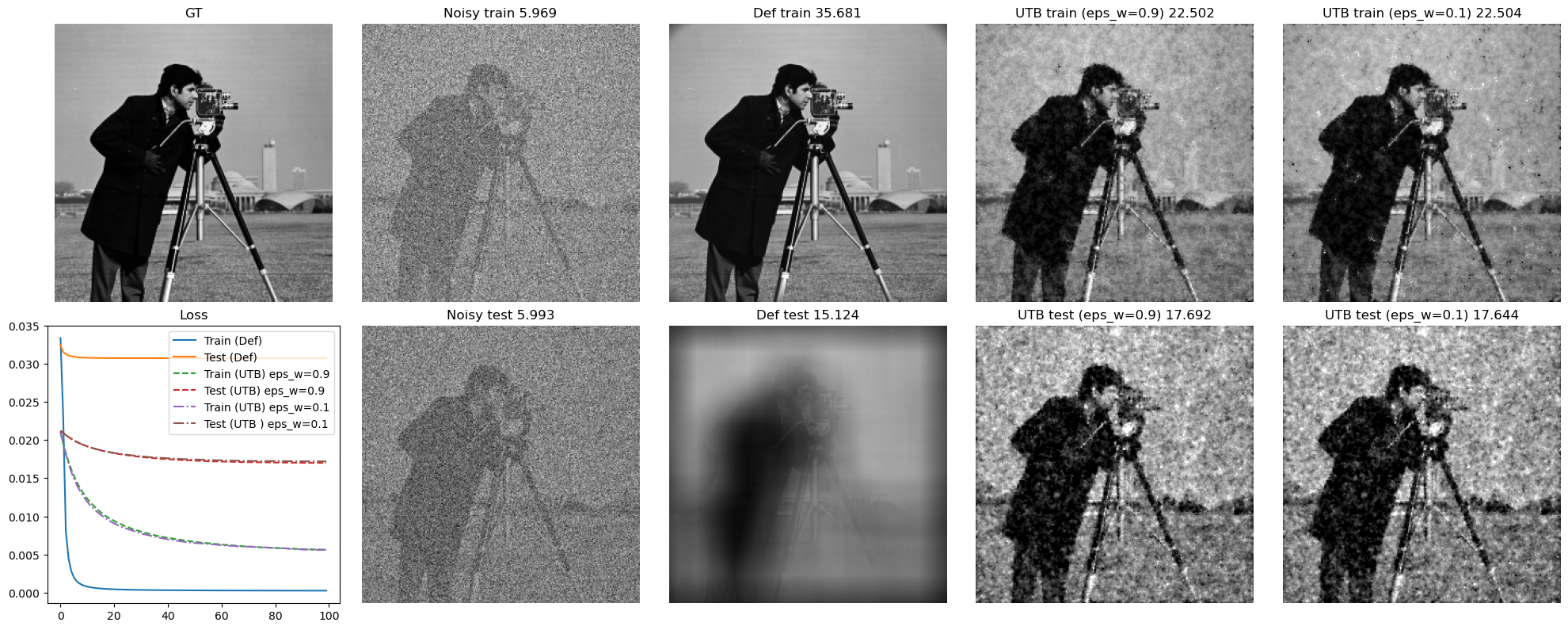}}
        \centerline{}
        \centerline{\includegraphics[width=\textwidth]{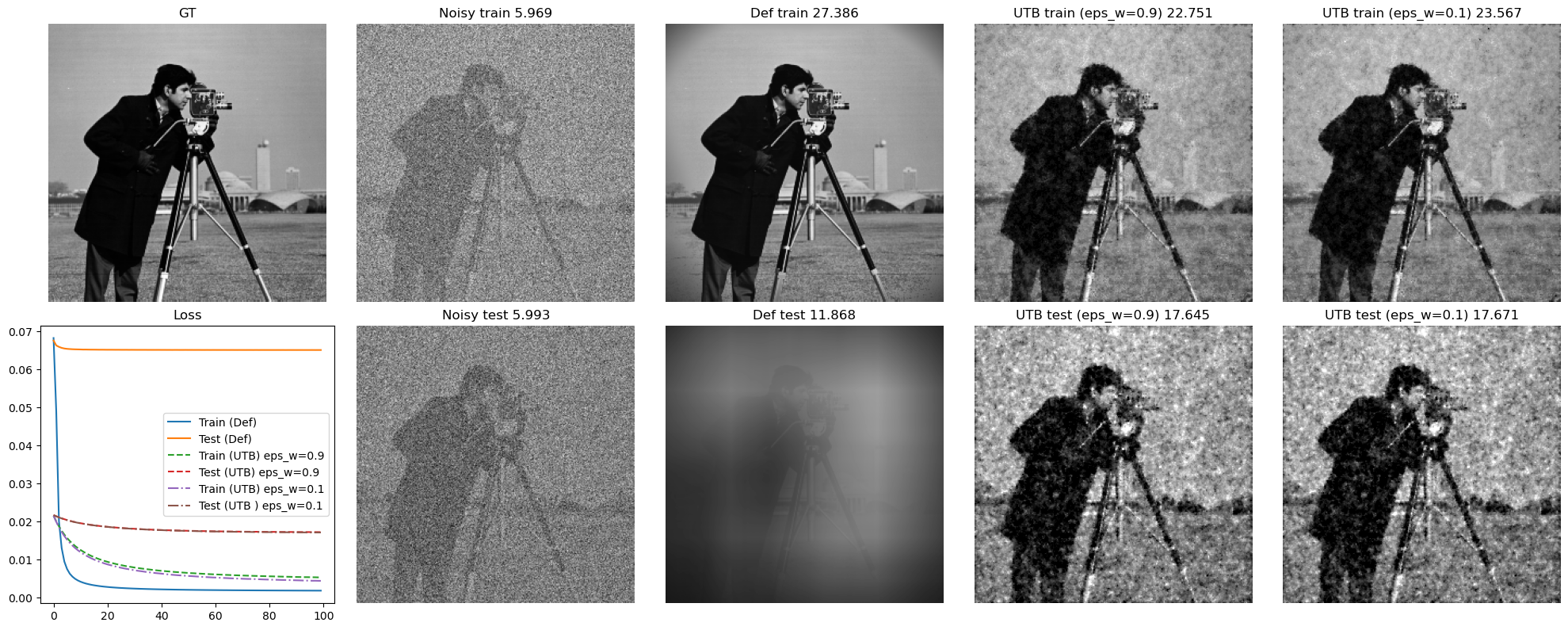}}
        \caption{Results of learnt deformable and our metric UTB convolutions with $\sigma_n = 0.5$ and $k = 5,11,31$ from top to bottom.
        }
        \label{fig: cameraman deform vs tangent sigma 05 k 51 121}
    % \end{center}
    % \vskip -0.2in
\end{figure*}

\end{document}